\begin{document}

%
\title{Unsupervised Heterogeneous Coupling Learning for Categorical Representation}
%
%
%
%

\author{Chengzhang Zhu,
    Longbing Cao,~\IEEEmembership{Senior Member,~IEEE,}~
    and~Jianping Yin

\thanks{Chengzhang Zhu and Longbing Cao (corresponding author) are with the Advanced Analytics Institute, University of Technology Sydney, Australia. 
E-mail: kevin.zhu.china@gmail.com, Longbing.Cao@gmail.com.
Jianping Yin is with Dongguan University of Technology, China.
}
}

%
%

\markboth{Journal of \LaTeX\ Class Files,~Vol.~14, No.~8, August~2015}%
{Shell \MakeLowercase{\textit{et al.}}: Bare Demo of IEEEtran.cls for Computer Society Journals}
%



\IEEEtitleabstractindextext{%

\begin{abstract}
    Complex categorical data is often hierarchically coupled with heterogeneous relationships between attributes and attribute values 
    and the couplings between objects. Such value-to-object couplings are heterogeneous with complementary and inconsistent interactions and distributions. 
    Limited research exists on unlabeled categorical data representations, ignores the heterogeneous and hierarchical couplings, underestimates data characteristics and complexities, and overuses redundant information, etc. The deep representation learning of unlabeled categorical data is challenging, overseeing such value-to-object couplings, complementarity and inconsistency, and requiring large data, disentanglement, and high computational power. This work introduces a shallow but powerful UNsupervised heTerogeneous couplIng lEarning (UNTIE) approach for representing coupled categorical data by untying the interactions between couplings and revealing heterogeneous distributions embedded in each type of couplings. UNTIE is efficiently optimized w.r.t. a kernel \textit{k}-means objective function for unsupervised representation learning of heterogeneous and hierarchical value-to-object couplings. Theoretical analysis shows that UNTIE can represent categorical data with maximal separability while effectively represent heterogeneous couplings and disclose their roles in categorical data. The UNTIE-learned representations make significant performance improvement against the state-of-the-art categorical representations and deep representation models on 25 categorical data sets with diversified characteristics.
\end{abstract}

\begin{IEEEkeywords}
Coupling Learning, Heterogeneity Learning, Non-IID Learning, Representation Learning, Similarity Learning, Categorical Data, Categorical Data Representation, Unsupervised Categorical Representation, Unsupervised Learning
\end{IEEEkeywords}}


\maketitle
\newtheorem{theorem}{Theorem}[section]
\theoremstyle{definition}
\newtheorem{defn}[theorem]{Definition} 
\newtheorem{lemma}{Lemma}
\renewcommand{\algorithmicrequire}{\textbf{Input:}}
\renewcommand{\algorithmicensure}{\textbf{Output:}}

\IEEEdisplaynontitleabstractindextext

%
\IEEEpeerreviewmaketitle

\IEEEraisesectionheading{\section{Introduction}\label{sec:introduction}}

\IEEEPARstart{T}he recent years have seen increasing research on learning the representation of complex categorical data ('categorical representation' for short) \cite{bengio2013representation,boriah2008similarity,bai2013impact,wang2015coupled,seth2016archetypal}, which shows critical for downstream tasks, e.g., regression \cite{tutz2016regularized}, clustering \cite{qian2016space}, classification \cite{zhu2018heterogeneous}, and outlier detection \cite{pang2016outlier}. Different from numerical data, the attribute values, attributes and objects of categorical data are often coupled with each other w.r.t. various aspects, e.g., value frequency, co-occurrence and distribution; attribute relations including correlations and dependency, interactions and hierarchy; and other data characteristics \cite{cao2015coupling,Bremaud2017,zhu2018heterogeneous,dst_Cao15}. We broadly refer them to \textit{couplings} \cite{cao2015coupling}, which are heterogeneous - diverse interactions and distributions, and hierarchical - from values to objects, and drive the complexities and dynamics of categorical data. Learning such heterogeneous and hierarchical couplings shows fundamental for appropriate categorical data representation, however, rarely explored in unsupervised settings.

Besides the critical progress made in learning the similarity and metrics of categorical data w.r.t. value co-occurrences and attribute relations \cite{ahmad2007method,ienco2012context,jia2015distance}, coupling learning \cite{cao2015coupling} explores even more comprehensive and stronger categorical representations by revealing and embedding heterogeneous value-to-object couplings on explicit attributes and latent factors \cite{wang2015coupled,zhang2015categorical,jianembedding,nips_DoC18}. Typically, categorical data is converted to either a vector \cite{jianembedding} or a similarity \cite{ng2007impact} space to leverage the missing numerical intervals between categorical values, then numerical analytical tools can be used. Such methods demonstrate a significant potential of deeply understanding intrinsic couplings in categorical data. However, rare work is available and it is very challenging to handle diverse data characteristics and complexities, including heterogeneities, interactions, structures, relations, distributions and nonstationarity, in categorical data representation \cite{cao2014non,cao2015coupling,dst_Cao15}.


A critical question raised in heterogeneous coupling learning is \textit{whether learning more couplings enhances categorical data representation}.
This issue has been initially studied by Ienco et al. \cite{ienco2012context}. They found the redundant information in various couplings may hamper the quality of categorical data representation and proposed the \textit{symmetric uncertainty} (SU) as a criterion to filter redundant couplings w.r.t. the correlations between two attributes to largely reduce the redundant information and enable better representation performance. Other recent work \cite{jianembedding,ijcai_PangCCL17} further shows that the redundant couplings can be reduced by methods like principal component analysis and shows better categorical representation performance than vector, similarity and embedding-based methods. However, rare work identifies redundant couplings and decouple them from those important ones.

Another open issue is to capture diverse interactions and relations that are complementary and inconsistent with each other while as many types of couplings are learned as possible. As illustrated by Table \ref{tab:toy}, if an intra-attribute coupling (i.e., value couplings) is measured in terms of value frequency, the difference between slightly curled and curled watermelons per the attribute \textit{root shape} is $0$ due to their same frequency. However, we can easily differentiate them (i.e. difference is not $0$) because the curled root is more related to the yellow and green watermelons while the slightly curled root is more associated with the green and black ones when the inter-attribute couplings between \textit{color} and \textit{root shape} are considered.

\begin{table}[!htpb]%
\centering
\caption{\textit{Toy Example.} The watermelon information table. Each watermelon with different sweetness is described w.r.t. three attributes: \emph{Texture}, \emph{Color}, and \emph{Root Shape}.\label{tab:toy}}%
\begin{tabular}{l|lll|l}
\toprule
\textbf{ID} & \textbf{Texture} & \textbf{Color} & \textbf{Root Shape} &  \textbf{Sweetness} \\
\midrule
A1 & clear & white & straight & low\\

A2 & blurry & yellow & straight & low\\

A3 & blurry & yellow & curled & low\\

A4 & clear & green & slightly curled & low\\

A5 & blurry & green & curled & high\\

A6 & clear & black & slightly curled & high\\
\bottomrule
\end{tabular}
\end{table}%

The heterogeneity and inconsistency between couplings \cite{ralaivola2010chromatic,cao2014non} may be caused by (1) different types of couplings corresponding to distinct interactions in data and following different data distributions; and (2) multiple distributions existing in a data set. While our earlier work in \cite{zhu2018heterogeneous} analyzes and captures the heterogeneous couplings for supervised learning, no existing methods on similarity, metric and representation learning effectively handle the above challenges in unsupervised categorical representation, which is critical yet challenging for understanding the intrinsic data complexities in unlabeled categorical data. Embedding lookup table and deep representation learning methods \cite{bengio2013representation} such as one-hot embedding, word embedding, autoencoder \cite{VincentLLBM10}, adversarial learning \cite{donahue2016adversarial}, and deep models such as wide and deep model \cite{cheng2016wide} and auto-instructor MAI \cite{JianHCL18} significantly outperform shallow methods in capturing latent features and relations. However, their common approaches and advantages are built on simplifying and equally treating input (e.g., by a one-hot encoder), involving a special modeling mechanism or structure, ignoring or disentangling complicated couplings, and a deep abstraction of large data with high computational power. They struggle in representing small yet complex unlabeled categorical data and also ignore the semantics and other diverse explicit characteristics of categorical values, attributes and objects, critical for categorical data representation and learning \cite{cao2015coupling,wang2015coupled,zhu2018heterogeneous}.     


In this paper, we build a shallow but powerful UNsupervised heTerogeneous couplIng lEarning (UNTIE, for short) approach to learning heterogeneous and hierarchical couplings that may be complementary yet inconsistent in small unlabeled categorical data with complex data characteristics. As the first attempt, UNTIE simultaneously represents (1) diverse value-to-object couplings,
(2) complex relations between heterogeneous and hierarchical couplings, and (3) heterogeneous distributions of respective couplings by unsupervised multikernel learning. Specifically, complex relations are entangled by the nonlinear mapping of various kernelized coupling functions, and the heterogeneous distributions are sensitively modeled by different kernels respectively. Instead of directly combining heterogeneous couplings, UNTIE first remodels the diverse couplings by multiple kernels to transform the various couplings-based spaces to respective kernelized representation spaces of higher dimensionality. Then, UNTIE learns both the weight of each attribute value in an individual kernel space and the weights of the learned kernel spaces to reflect both the heterogeneous data distributions of respective couplings and the interactions between couplings. Further, to efficiently learn heterogeneous couplings, UNTIE seamlessly wraps the kernel space weights by a positive semi-definite kernel and optimizes this kernel by optimizing an unsupervised kernel k-means objective. Lastly, the optimized kernel is used as the similarity representation of categorical data to generate a vector representation by further decomposing this kernel. We provide theoretical analysis (see Theorem \ref{thm:cut}) to show UNTIE can represent categorical data with maximizing the separability for further learning tasks. 

Accordingly, this work delivers the following significant contributions to categorical representation, unsupervised representation, and coupling learning.
\begin{itemize}
    \item UNTIE is the first unsupervised categorical representation method to learn various value-to-object couplings and their complementarity and inconsistency. UNTIE collectively captures heterogeneous data distributions of diverse couplings and adaptively integrates the couplings by involving their interactions.
    \item UNTIE maps the heterogeneous intra- and inter-attribute couplings into multiple kernels to capture the coupling heterogeneity (\S \ref{subsec:heterogeneity}). In the kernel spaces, learning heterogeneous couplings is formalized as an efficient unsupervised optimization problem by optimizing an UNTIE-enabled kernel \textit{k}-means objective (\S \ref{subsec:unsupervised}).
    \item UNTIE works in a completely unsupervised fashion to capture the intrinsic data characteristics in categorical data with both theoretical and experimental verification. Theoretical analysis is provided that shows the UNTIE-represented data has the minimum normalized cut and increases data separability (\S \ref{sec:theory}).
\end{itemize}

We substantially verify the UNTIE effectiveness, representation quality, efficiency, flexibility and stability on 25 real-life categorical data sets with diversified data characteristics (including multi-dimensional, multi-class and multi-valued objects) and four synthetic data sets generated per a variety of data factors. UNTIE is compared with vector, similarity, embedding and deep representation methods. (1) UNTIE can effectively address both complementarity and inconsistency in learning heterogeneous couplings. (2) UNTIE enjoys accuracy gain (up to 51.72\% in terms of F-score on these data sets) from the learned heterogeneity and produces substantially better representation performance than the state-of-the-art shallow and deep categorical representation methods. (3) The efficiency of UNTIE is insensitive to the volume of data, which indicates UNTIE is scalable for large data. (4) The learned UNTIE representations can enhance different downstream learning tasks. This work also shows that shallow learning does not lose the ground under handling complex (small or large) data to deep models, particularly for unsupervised settings. 



\section{Related Work} \label{Related Work}


The quality of categorical data representations affects the performance of representations-based learning tasks. Categorical representations are determined by how well a representer captures the various value-to-object coupling relationships and their heterogeneities within and between categorical values, attributes and objects \cite{cao2015coupling,ng2007impact,boriah2008similarity}. For example, 
embedding methods like one-hot embedding and word embedding only encode the existence of a value or the IDF-based textual vector to 
a vector space. 
Matching-based methods 
treat categorical values equally and overlook their rich differences. 

A recent effort made for categorical representation is the coupling learning of complex interactions and relations \cite{cao2015coupling} and demonstrates great potential in (1) intra-attribute couplings-based representations \cite{ng2007impact,cao2012dissimilarity} and (2) inter-attribute couplings-based representations \cite{le2005association,ahmad2007method,boriah2008similarity,jia2015distance}. The former reveals the way and degree that values are coupled within an attribute. For example, the method in \cite{ng2007impact} adopts the conditional probability of the attribute values of an object w.r.t. the attribute cluster centers to represent categorical data, and the method in \cite{cao2012dissimilarity} introduces set theory for measuring intra-attribute value similarity to represent categorical data. The latter captures the way and degree that attributes are coupled. They typically measure the inter-attribute couplings w.r.t. the conditional probabilities \cite{le2005association,ahmad2007method} or co-occurrence frequencies \cite{jia2015distance} between values of different attributes. These two groups of representations outperform other classic methods such as matching-based as they capture richer interactions in categorical data. However, most of such work only considers a single type of couplings and overlooks many other characteristics in categorical data. 

The work in \cite{zhang2015categorical} shows that representing more couplings in categorical data may significantly upgrade learning performance. However, other recent research also shows that capturing more but duplicated couplings does not guarantee better categorical representation as in \cite{ienco2012context}. 
A symmetric uncertainty (SU) criterion enables better representation performance in several categorical data representation methods with multiple couplings \cite{ienco2012context,wang2013coupled,wang2015coupled}. Alternatively, the method in \cite{jianembedding} uses the principal component analysis to reduce the redundancy between couplings. These methods make performance gain by reducing redundant couplings but ignore the inconsistency between heterogeneous couplings with diverse interactions and distributions, i.e., heterogeneity \cite{cao2014non} of different couplings. This issue was studied in \cite{zhu2018heterogeneous} by capturing hierarchical couplings to enhance categorical data representation with label information. None of existing unsupervised categorical representation methods explicitly and effectively model heterogeneous couplings, which brings about significant challenge to representation learning, as explored in this work. 

Deep representation learning presents increasingly promising power in representing images, text, networks, etc. \cite{bengio2013representation}. However, unsupervised categorical representation learning has not been well explored and presents a challenge to deep learning. Existing methods typically convert categorical input into a vector space through encoding such as one-hot encoding and word embedding and then rely on a deep neural architecture such as autoencoder \cite{VincentLLBM10}, adversarial learning (e.g., BiGAN and GAN variants, wide and deep network, and variational autoencoder) \cite{donahue2016adversarial,cheng2016wide,kingma2013auto} or auto-instructor \cite{jianembedding,JianHCL18} to learn the hidden relations and features. Such methods overlook or simplify most of data characteristics of categorical data (e.g., value semantics, frequencies, attribute interactions, distributions, etc.), decouple and disentangle couplings \cite{bengio2013representation}, and rely on high computational power and large (partially labeled) data. They are troubled by small, unlabeled, and complicatedly coupled categorical data. 

\section{Preliminaries}\label{sec:preliminary}

Assume a categorical data set drawn from distributions $\Phi$ can be represented as a three-element tuple $C = <O, A, V>$, where $O = \{\mathsf{o}_i | i \in N_o\}$ is an object set with $n_o$ objects; $A = \{\mathsf{a}_i | i \in N_a \}$ is an attribute set with $n_a$ attributes; and $V = \bigcup_{j=1}^{n_a}V^{(j)}$ is the collection of attribute values with $n_v$ values, in which $V^{(j)} = \{\mathsf{v}_i^{(j)} | i \in N_v^{(j)}\}$ is the set of attribute values $\mathsf{v}_i^{(j)}$ with $n_v^{(j)}$ values of attribute $\mathsf{a}_j$. $N_o$, $N_a$ and $N_v^{(j)}$ are the sets of indices for objects, attributes, and values of the $j$-th attribute, respectively. For the $i$-th object $\mathsf{o}_i$, the categorical value in the $j$-th attribute $\mathsf{a}_j$ can be represented as $v_i^{(j)}$.  
The main notations in this paper are defined in Table \ref{tab:notationList} \footnote{The specifications for symbol styles in this paper are as follows. Element: lowercase with Sans Serif font; value: lowercase; vector: lowercase with bold font; matrix: uppercase with bold font; set: uppercase; function: lowercase with parentheses; space: uppercase with Calligraphic font; value index: subscript; attribute index: superscript with parenthesis.}. 
For example, in Table \ref{tab:toy}, $O$ = \{A1, A2, A3, A4, A5, A6\}, $A$ = \{\text{Texture, Color, Root Shape}\}, $V$ = \{clear, blurry, white, yellow, green, black, straight, curled, slightly curled\}, $V^{(1)}$ = \{clear, blurry\}, and $v_2^{(3)}$ = straight.

	\begin{table}[!htpb]
		\caption{List of Notations \label{tab:notationList}}
		\begin{tabular}{l|l}
			\toprule
			Symbol & Meaning \\
			\midrule
			$\Phi$ & Categorical data distributions \\
			$C$ & Categorical data tuple\\
			$O$ & Object set \\
			$A$ & Attribute set \\
			$V$ & Categorical value set of all attributes\\
			$V^{(j)}$ & Categorical value set of the $j$-th attribute\\
			$\mathsf{o}_i$ & The $i$-th object in $O$\\
			$\mathsf{a}_i$ & The $i$-th attribute in $A$\\
			$\mathsf{v}_i$ & The $i$-th categorical value in $V$\\
			$v_i^{(j)}$ & The categorical value of $\mathsf{o}_i$ in $\mathsf{a}_j$\\
			$\mathsf{v}_i^{(j)}$ & The $i$-th categorical value in $V^{(j)}$\\
			$\mathsf{m}_i$ & The vector corresponding to $i$-th value in a coupling space\\
			$\mathsf{c}_k$ & The $k$-th cluster\\ 
			$\bm{\omega}$ & The heterogeneity parameter \\
			$n_o$ & The number of objects in $O$\\
			$n_a$ & The number of attributes in $A$\\
			$n_v$ & The number of categorical values in $V$ \\
			$n_v^{(j)}$ & The number of categorical values in $V^{(j)}$\\
			$n_k$ & The number of kernel matrices transformed from coupling spaces\\
			$n_{c}$ & The number of clusters\\
			$n_{oc}$ & The size of $c$-th cluster\\
			$n_{mv}$ & The maximal number of values in attributes\\
			$n_{av}$ & The average number of attribute values\\
			$n_{\bm{\omega}}$ & The number of elements in $\bm{\omega}$\\
			$n_i$ & The number of iterations\\
			$n_b$ & The training batch size\\
			$n_m^{(z)}$ & The number of couplings for the $z$-th attribute \\
			$N_o$ & The set of indices for objects in $O$ \\
			$N_a$ & The set of indices for attributes in $A$ \\
			$N_v^{(j)}$ & The set of indices for categorical values in $V^{(j)}$ \\
			$\mathcal{M}_{Ia}$ & Intra-attribute coupling spaces\\
			$\mathcal{M}_{Ie}$ & Inter-attribute coupling spaces\\
			$\mathcal{K}_p$ & The $p$-th kernel space transformed from a coupling space\\
			$\mathcal{K}_p^{\prime}$ & The heterogeneous kernel space transformed from $\mathcal{K}_p$\\
			$\mathbf{K}_p$ & The kernel matrix that spans $\mathcal{K}_p$\\
			$\mathbf{K}_p^{\prime}$ & The kernel matrix that spans $\mathcal{K}_p^{\prime}$\\
			$\mathbf{T}_p$ & The transformation matrix from $\mathcal{K}_p$ to $\mathcal{K}_p^{\prime}$\\
			$\mathbf{C}^{(z,k)}$ & The $k$-th coupling matrix of the $z$-th attribute  \\
			$\alpha_{pi}$ & The weight of the $i$-th value in the $p$-th kernel space\\
			$\beta_p$ & The weight of the $p$-th kernel space \\
			$\mathbf{S}$ & UNTIE similarity representation\\
			$\mathbf{X}$ & UNTIE vector representation\\
			
			\bottomrule
		\end{tabular}
	\end{table}

\section{The UNTIE Design}\label{sec:method}
UNTIE learns unsupervised categorical representation based on the rationale below: (1) a categorical value may belong to multiple distributions; (2) a coupling may make different contributions to different value distributions; and (3) the overall distribution of a categorical value can be described by multiple distributions. We call the above \textit{heterogeneity hypotheses}, which are theoretically supported by Theorem \ref{thm:factorize} in Section \ref{sec:hypothesis-proof}.

\subsection{The UNTIE Framework}
While this paper focuses on a specific instance of UNTIE, as shown in Fig. \ref{fig:untie}, UNTIE actually presents a framework of unsupervised categorical representation. It represents categorical data in both vector (as a vector representation) and similarity (as a kernel matrix) spaces. To reveal heterogeneous couplings,  UNTIE first converts categorical data to several coupling spaces by multiple coupling learning functions. Then, it feeds and transforms each coupling space to multiple kernel spaces. Further, it reduces the redundancy and inconsistency between heterogeneous couplings by learning the heterogeneity between couplings in the kernel spaces. Specifically, UNTIE differentiates the contributions of individual kernel spaces and reveals the kernel-sensitive distribution within each kernel space. To efficiently learn the heterogeneities in an unsupervised way, UNTIE wraps the weight of each kernel space and the weights of the values embedded in a kernel space by a wrapper kernel. It then optimizes this kernel by solving a kernel k-means objective, i.e., regularizing the objects within one cluster to be more similar to each other than to those in others. UNTIE uses the optimized wrapper kernel as the similarity representation of categorical data, and further generates the vector representation by decomposing the optimized wrapper kernel.

\begin{figure*}[t]
 \centering
\includegraphics[width=0.8\textwidth]{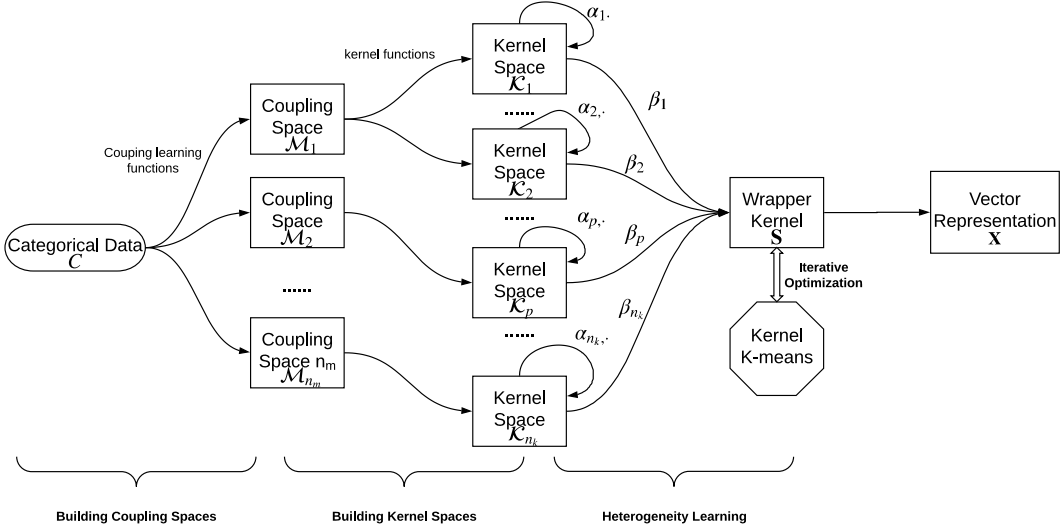}
 \caption{The UNTIE Framework. It first transforms the coupling spaces to multiple kernel spaces and then learns the heterogeneities within and between couplings in these kernel spaces by solving a kernel k-means objective.}
 \label{fig:untie}
\end{figure*}

To effectively address the coupling inconsistency problem, UNTIE learns heterogeneous couplings by multiple kernels, which transforms a coupling from its original space to several kernel spaces. Since a kernel is sensitive to a distribution \cite{bucak2014multiple}, these kernel spaces reflect different value distributions. In a kernel space, a coupling is preserved if it matches the kernel-sensitive distribution while other couplings are filtered. For this, UNTIE learns the weights of values in each kernel space to effectively reveal the multiple distributions corresponding to a coupling. The multiple distributions of a categorical value jointly contribute to the overall distribution of the value. Therefore, the learned weights of these kernel spaces filter the redundant information but integrate the complementary information.

To learn the heterogeneous couplings in an unsupervised manner, without loss of generality, UNTIE takes the assumption that the objects within one cluster are more similar to each other than to those in other clusters. This assumption is commonly taken in most of clustering and classification methods and shows its validity with real data distributions. Accordingly, UNTIE learns heterogeneous couplings for categorical data representation in an iterative way. In each iteration, UNTIE first analyzes the clusters based on its generated representation, and then tunes the representation based on the obtained clusters. To efficiently cluster data, UNTIE wraps the weight of each kernel space and the weight of the values embedded in kernel spaces by a wrapper kernel and optimizes this kernel by solving a kernel k-means objective. In addition to efficiency, the kernel k-means objective also brings other benefits for categorical data representation, such as good separability, which will be discussed in  Section \ref{sec:normalized-cut}.


\subsection{Coupling Learning}
As discussed in Introduction, we broadly refer \textit{couplings} to any interactions and relations within and between values, attributes, and objects \cite{cao2015coupling,wang2015coupled,zhu2018heterogeneous}. In categorical data, possible types of couplings include \textit{intra-attribute couplings} (i.e., value couplings, such as per value frequency, co-occurrence and matching), inter-attribute couplings (e.g., attribute correlation and dependency and unknown linkage), and object couplings built on the value and attribute couplings. UNTIE learns the value-to-attribute-to-object hierarchical couplings after learning and fusing various intra- and inter-attribute couplings in unlabeled categorical data. 


\textbf{Learning Intra-attribute Couplings.}
\textit{Intra-attribute couplings} represent the interactions between the values of an attribute and the value distributions in an attribute \cite{boriah2008similarity,wang2015coupledMixed}.
We measure intra-attribute couplings in terms of intra-attribute distributions by a value frequency function and calculate the Euclidean distance in a numerical space. Although the value frequency function has only one input value, it measures the value distribution against all values. For a categorical value $\mathsf{v}_i^{(j)}$ in the $j$-th attribute, the value frequency function $m_{Ia}^{(j)}(\mathsf{v}_i^{(j)})$ maps an intra-attribute coupling between this value and the other categorical values in this attribute to a one-dimensional intra-attribute coupling vector $ m_{Ia}^{(j)}(\mathsf{v}_i^{(j)})$.
\begin{equation}
\label{eq:intra}
    m_{Ia}^{(j)}(\mathsf{v}_i^{(j)}) = [\frac{|g^{(j)}(\mathsf{v}_i^{(j)})|}{n_o}],
\end{equation}
where $g^{(j)}(\cdot): V^{(j)}\rightarrow O$ maps the value $\mathsf{v}_i^{(j)}$  to a set of objects that have value $\mathsf{v}_i^{(j)}$ in the $j$-th attribute, $n_o$ is the number of objects, and $|\cdot|$ refers to the count of a set. For example, in Table \ref{tab:toy}, a relationship between value \textit{yellow} and object attribute \textit{color} is $g^{(j)}(yellow) = \{A2, A3\}$. The intra-attribute coupling vector of value \textit{yellow} is $m_{Ia}^{(2)}(yellow) = [\frac{|\{A2, A3\}|}{6}] = [\frac{1}{3}]$. 

An intra-attribute coupling space $\mathcal{M}_{Ia}^{(j)}$ is spanned by the intra-attribute coupling vectors obtained in an attribute by Eq. (\ref{eq:intra}) and is defined below:
\begin{equation}\label{eq:intra-couplings}
    \mathcal{M}_{Ia}^{(j)} = \{m_{Ia}^{(j)}(\mathsf{v}_i^{(j)})| \mathsf{v}_i^{(j)} \in V_j\}.
\end{equation}
For categorical data with $n_a$ attributes, the intra-attribute coupling spaces are $\mathcal{M}_{Ia} = \{\mathcal{M}_{Ia}^{(1)}, \cdots, \mathcal{M}_{Ia}^{(n_a)}\}$.
The intra-attribute coupling spaces only present a one-dimensional embedding of the categorical data space w.r.t. each attribute. The following inter-attribute couplings consider the interactions between attributes. 

\textbf{Learning Inter-attribute Couplings.}
\textit{Inter-attribute couplings} refer to the interactions between attributes and the contextual (and/or semantic) information of attribute values w.r.t. other attributes \cite{ienco2012context,wang2011coupled,cao2015coupling}. This attribute-based interactive and contextual information complements the value distributions and interactions captured by intra-attribute couplings. 
For example, in Table \ref{tab:toy}, white and black watermelons have the same frequency but can be distinguished by involving their \textit{root shapes} which are significantly different.  

Here, the inter-attribute couplings are represented by the information conditional probability, which reveals the distributions of an attribute value in the spaces spanned by the values of the other attributes. Given a value $\mathsf{v}^{(j)}$ of attribute $\mathsf{a}_j$ and a value $\mathsf{v}^{(k)}$ of attribute $\mathsf{a}_k$, the information conditional probability function is defined as follows:
\begin{equation}\label{eq:ICPF-Attri}
p(\mathsf{v}^{(j)}|\mathsf{v}^{(k)}) = \frac{|g^{(j)}(\mathsf{v}^{(j)})\cap g^{(k)}(\mathsf{v}^{(k)})|}{|g^{(k)}(\mathsf{v}^{(k)})|},
\end{equation}
where $\cap$ returns the intersection of two sets. Based on the information conditional probability function, the inter-attribute coupling learning function $m_{Ie}^{(j)}(\mathsf{v_i}^{(j)})$ embeds interactions between value $\mathsf{v_i}^{(j)}$ and other attributes as a $|V_*|$-dimensional inter-attribute coupling vector $m_{Ie}^{(j)}(\mathsf{v}_i^{(j)})$,
\begin{equation}
m_{Ie}^{(j)}(\mathsf{v}_i^{(j)}) = \left[
\begin{array}{cccc}
p(\mathsf{v}_i^{(j)}|\mathsf{v}_{*1}), & \cdots , & p(\mathsf{v}_i^{(j)}|\mathsf{v}_{*|V_*|})
\end{array}
\right]^{\top},
\label{eq:inter}
\end{equation}
where $V_* = \{V^{(k)}| k \in N_a, k \neq j\}$ is the set of values in all attributes except $\mathsf{a}_j$, and $\mathsf{v}_{*i} \in V_*$ is a categorical value in set $V_*$.
For example, the information condition probability between the yellow watermelons and those with curled root shape is $p(yellow|curled) = \frac{|\{A2, A3\} \cap \{A3, A5\}|}{|\{A3, A5\}|} = \frac{|\{A3\}|}{|\{A3, A5\}|} = \frac{1}{2}$. The inter-attribute coupling vector of value \textit{yellow} is calculated as 
\begin{equation*}
\begin{aligned}
        &m_{Ie}^{(2)}(yellow) \\
        & = [p(yellow|clear), \cdots, p(yellow|slightly~curled)] \\
        &= [0, \frac{2}{3}, \frac{1}{2}, \frac{1}{2}, 0]
\end{aligned}
\end{equation*}

An inter-attribute coupling space $\mathcal{M}_{Ie}^{(j)}$ is spanned by the inter-attribute coupling vectors obtained in an attribute by Eq. \eqref{eq:inter}:
\begin{equation}\label{eq:inter-coupling}
\mathcal{M}_{Ie}^{(j)} = \{m_{Ie}^{(j)}(\mathsf{v}_i^{(j)})|\mathsf{v}_i^{(j)}\in V^{(j)}\}.
\end{equation}
For categorical data with $n_a$ attributes, the inter-attribute coupling spaces $\mathcal{M}_{Ie} = \{ \mathcal{M}_{Ie}^{(1)}, \cdots, \mathcal{M}_{Ie}^{(n_a)} \}$.

An inter-attribute coupling learning function projects categorical values into a higher dimensional space if $|V| > 2 |V^{(j)}| - 1$, because the dimensionality of inter-attribute coupling space equals $|V| - |V^{(j)}|$, while the degree of freedom (equivalent to the dimensionality of transforming a categorical value to a dummy variable) of the $j$-th attribute is $|V^{(j)}|-1$. In this way, the value couplings incurred by other attributes are captured, which complements with the intra-attribute couplings to form a complete representation of categorical attribute space.

\subsection{Heterogeneity Learning in Kernel Spaces}\label{subsec:heterogeneity}
With the coupling spaces built above from the intra- and inter-attribute perspectives, UNTIE further constructs an entire coupling space $\mathcal{M}$ which is a collection of heterogeneous intra-attribute coupling spaces $\mathcal{M}_{Ia}$ and inter-attribute coupling spaces $\mathcal{M}_{Ie}$,
\begin{equation}
    \mathcal{M} = \mathcal{M}_{Ia} \cup \mathcal{M}_{Ie}.
\end{equation}

To effectively integrate the heterogeneous couplings in the learned coupling space set, UNTIE transforms the learned heterogeneous coupling spaces to uniform spaces, in which heterogeneous couplings are comparable. Specifically, UNTIE uses multiple kernels to transform each coupling space into its corresponding kernel spaces, where each kernel space corresponds to the transformed coupling space w.r.t. a particular kernel mapping function. It generates a set of $n_k$ ($n_k= |\mathcal{M}| \times |F|$, where $F$ is the set of kernel functions for the transformation) kernel spaces $\{\mathcal{K}_1, \mathcal{K}_2, \cdots, \mathcal{K}_{n_k} \}$, and the $p$-th ($p \leq n_k$) space is spanned by a kernel matrix $\mathbf{K}_p$, which is constructed from a coupling space $\mathcal{M}_j$ by a kernel function $k_p(\cdot,\cdot)$ for an attribute. Denoting $\mathsf{m}_i$ as a vector in $\mathcal{M}_j$ corresponding to the $i$-th categorical value, $\mathbf{K}_p$ is represented as follows,
\begin{equation}\label{eq:kernelspace}
\mathbf{K}_p = \left[
\begin{matrix}
k_p(\mathsf{m}_1, \mathsf{m}_1)& k_p(\mathsf{m}_1, \mathsf{m}_2)&\cdots& k_p(\mathsf{m}_1, \mathsf{m}_{n_v^*}) \\
k_p(\mathsf{m}_2, \mathsf{m}_1)& k_p(\mathsf{m}_2, \mathsf{m}_2)&\cdots& k_p(\mathsf{m}_2, \mathsf{m}_{n_v^*}) \\
\vdots & \vdots & \ddots & \vdots \\
k_p(\mathsf{m}_{n_v^*}, \mathsf{m}_1)& k_p(\mathsf{m}_{n_v^*}, \mathsf{m}_2)&\cdots& k_p(\mathsf{m}_{n_v^*}, \mathsf{m}_{n_v^*}) \\
\end{matrix}
\right],
\end{equation}
where $n_v^*$ is the number of categorical values represented by $\mathcal{M}_j$. For example, in Table \ref{tab:toy}, if $k_p(\cdot,\cdot)$ is a linear kernel and $\mathcal{M}_j$ is $\mathcal{M}_{Ie}^{(2)}$, let $\mathsf{m}_2$ correspond to value $yellow$, $k_p(\mathsf{m}_2,\mathsf{m}_2) = [0, \frac{2}{3}, \frac{1}{2}, \frac{1}{2}, 0]^{\top} \cdot [0, \frac{2}{3}, \frac{1}{2}, \frac{1}{2}, 0] = \frac{17}{18}$.

To reveal the heterogeneity within a coupling, UNTIE learns the weights of values in each kernel space. Specifically, it learns a set of transformation matrices $\{\mathbf{T}_1,\mathbf{T}_2,\cdots, \mathbf{T}_{n_k}\}$ to reconstruct the kernel spaces $\{\mathcal{K}_1^{\prime}, \cdots, \mathcal{K}_{n_k}^{\prime} \}$, in which the $p$-th kernel matrix $\mathbf{K}_p^{\prime}$ only contains the $p$-th kernel sensitive distribution that suits for the corresponding coupling. We call the reconstructed kernel spaces \textit{heterogeneous kernel spaces}.  $\mathbf{K}_p^{\prime}$ is defined as:
\begin{equation}\label{eq:trans}
    \mathbf{K}_p^{\prime} = \mathbf{T}_p\cdot \mathbf{K}_p.
\end{equation}
UNTIE regulates $\mathbf{T}_p$ as a diagonal matrix:
\begin{equation}
    \mathbf{T}_p = \left[
\begin{matrix}
    \alpha_{p1} & 0 & \cdots & 0\\
    0 & \alpha_{p2} & \cdots & 0\\
    \vdots & \vdots & \ddots & \vdots \\
    0 & 0 & \cdots & \alpha_{pn_v}\\
\end{matrix}
    \right].
\end{equation}
As a result, $\alpha_{pi}$ is the weight of the $i$-th value in the $p$-th kernel space, i.e., $[k_p(\mathsf{m_1},\mathsf{m_1}), k_p(\mathsf{m_1}, \mathsf{m_2}), \cdots, k_p(\mathsf{m_1}, \mathsf{m_{n_v^*})}]$.  The larger $\alpha_{pi}$ implies stronger coupling of the $i$-th value revealed by the coupling space corresponding to the $p$-th kernel space. 

To further capture the heterogeneity between couplings, UNTIE learns the contribution of each heterogeneous kernel space to a final representation. It first defines a similarity measure between objects in the heterogeneous kernel space and then learns the weight of each kernel space based on this similarity measure to reflect their contribution. Given a categorical data set, considering the $p$-th kernel matrix, let $\mathfrak{i}$ and $\mathfrak{j}$ represent the indices of values in the $p$-th kernel space corresponding to the $i$-th and $j$-th objects respectively, and using $\mathbf{K}_{p,\mathfrak{i}\cdot}^{\prime}$ (the $\mathfrak{i}$-th row in $\mathbf{K}_p^{\prime}$) to denote $\mathsf{o}_i$ in the $p$-th heterogeneous kernel space, the similarity $S_{p, ij}$ measured by the linear kernel of the $i$-th and $j$-th objects in this space is
\begin{equation}\label{eq:Sij}
S_{p, ij} = \mathbf{K}_{p,\mathfrak{i}\cdot}^{\prime\top}\mathbf{K}_{p,\mathfrak{j}\cdot}^{\prime}.
\end{equation}
By considering Eq. \eqref{eq:trans},  Eq. \eqref{eq:Sij} equals 
\begin{equation}
S_{p, ij} = \mathbf{K}_{p,\mathfrak{i}\cdot}^{\top}\mathbf{T}_p^{\top}\mathbf{T}_{p} \mathbf{K}_{p,\mathfrak{j}\cdot}.
\end{equation}
UNTIE defines the final similarity representation $S_{ij}$ between the $i$-th and $j$-th objects as a linear combination of base similarity measures from heterogeneous spaces to filter redundant information and integrate complementary information between couplings:
\begin{equation}\label{eq:sijc}
S_{ij} = \sum\limits_{p=1}^{n_k}\beta_p S_{p,ij},
\end{equation}
where $\beta_p \geq 0$ is the weight for the $p$-th base similarity.  Denoting a diagonal matrix $\bm{\omega}_p = \beta_p \mathbf{T}^{\top}\mathbf{T}$, Eq. \eqref{eq:sijc} is rewritten as:
\begin{equation}
S_{ij} = \sum\limits_{p=1}^{n_k}\mathbf{K}_{p,\mathfrak{i}\cdot}^{\top}\bm{\omega}_p\mathbf{K}_{p,\mathfrak{j}\cdot}.
\end{equation}
Accordingly, UNTIE simultaneously learns $\alpha$ and $\beta$ by learning $\bm{\omega}$, a heterogeneity parameter. The optimized $\bm{\omega}$ guides to integrate the heterogeneous couplings into the similarity representation $S_{ij}$.

\subsection{Kernel K-means-based Representation Learning}\label{subsec:unsupervised}
In an unsupervised way, UNTIE learns the heterogeneous couplings by wrapping $\alpha$ which reveals the heterogeneity within couplings and $\beta$ which reveals the heterogeneity between couplings into a wrapper kernel. It then further optimizes this kernel by solving a kernel k-means objective \cite{dhillon2004kernel}. 

K-means is a popular clustering algorithm that minimizes the distance between an object and its assigned cluster center, which was also used for information integration. Given a set of objects $O = \{\mathsf{o}_i \in \mathcal{R}^{n_a}|i=1,\cdots,n_o\}$, the k-means objective is formalized as:
\begin{equation}\label{eq:kmeans}
\begin{aligned}
& \underset{\mathbf{Z}\in\{0,1\}^{n_o \times n_c}}{\text{minimize}} 
& & \sum\limits_{i=1, c=1}^{n_o,n_c}z_{ic}\lVert \mathsf{o}_i - \bm{\mu}_c \rVert_{2}^{2}\\
& \text{subject to}
& & \sum\limits_{c=1}^{n_c} z_{ic} = 1,
\end{aligned}
\end{equation}
where $z_{ic}$ indicates whether $\mathsf{o}_i$ belongs to the $c$-th cluster, $\bm{\mu}_c = \frac{1}{n_{oc}}\sum_{i=1}^{n_o} z_{ic}\mathbf{x}_{i}$ is the centroid of the $c$-th cluster, and $n_{oc} = \sum_{i=1}^{n_o}z_{ic}$ refers to the size of the $c$-th cluster.

To address the issue that k-means cannot cluster data with a nonlinear boundary, the kernel k-means first uses a mapping function to map data to a higher dimensional space and then adopts k-means to cluster the mapped data. With a kernel function $k(\cdot)$, the kernel k-means is formalized as:
\begin{equation}\label{eq:kkmeans}
\begin{aligned}
& \underset{\mathbf{Z}\in\{0,1\}^{n_o \times n_c}}{\text{minimize}} 
& & \sum\limits_{i=1, c=1}^{n_o,n_{c}}z_{ic}\lVert k(\mathsf{o}_i) - \bm{\mu}_c \rVert_{2}^{2}\\
& \text{subject to}
& & \sum\limits_{c=1}^{n_c} z_{ic} = 1,
\end{aligned}
\end{equation}
where $\bm{\mu}_c = \frac{1}{n_{oc}}\sum_{i=1}^{n_o} z_{ic}k(\mathsf{o}_{i})$. 

Eq. \eqref{eq:kkmeans} is rewritten in the following form:
\begin{equation}\label{eq:kkmeans-matrix}
\begin{aligned}
& \underset{\mathbf{Z}\in\{0,1\}^{n_o \times n_c}}{\text{minimize}} 
& & \text{Tr}(\mathbf{K}) - \text{Tr}(\mathbf{L}^{\frac{1}{2}}\mathbf{Z}^{\top}\mathbf{K}\mathbf{Z}\mathbf{L}^{\frac{1}{2}})\\
& \text{subject to}
& & \mathbf{Z}\mathbf{1}_{n_c} = \mathbf{1}_{n_o},
\end{aligned}
\end{equation}
where $\text{Tr}(\cdot)$ calculates the trace of a matrix, $\mathbf{K}$ is a matrix with $k_{ij} = k(\mathsf{o}_i^{\top})k(\mathsf{o}_j)$, $\mathbf{L} = \text{diag}([n_{o1}^{-1}, n_{o2}^{-1}, \cdots, n_{on_c}^{-1}])$ and $\mathbf{1}_{\ell}\in \{1\}^{\ell}$ is a column vector with all elements being 1.

Directly solving Eq. \eqref{eq:kkmeans-matrix} is difficult since the values of $\mathbf{Z}$ are limited to either 0 or 1. Typically, Eq. \eqref{eq:kkmeans-matrix} is relaxed by letting $\mathbf{Z}$ take real values. Denoting $\mathbf{H} = \mathbf{ZL}^{\frac{1}{2}}$, the above problem is restated as \begin{equation}\label{eq:rkkmeans-matrix}
\begin{aligned}
& \underset{\mathbf{H}}{\text{minimize}} 
& & \text{Tr}(\mathbf{K}(\mathbf{I}_{n_o} - \mathbf{H}\mathbf{H}^{\top}))\\
& \text{subject to}
& & \mathbf{H}\in \mathcal{R}^{n_o \times n_c},\\
&
& & \mathbf{H}^{\top}\mathbf{H} = \mathbf{I}_{n_c},
\end{aligned}
\end{equation}
where $\mathbf{I}_{n_c}$ is an identity matrix with size $n_c \times n_c$. The optimal $\mathbf{H}$ for Eq. \eqref{eq:rkkmeans-matrix} can be obtained by taking the $n_c$ eigenvectors having large eigenvalues of $\mathbf{K}$ \cite{jegelka2009generalized}.

UNTIE integrates heterogeneous coupling learning into the kernel k-means seamlessly by wrapping $\alpha$ and $\beta$ to a wrapper kernel $s(\cdot,\cdot): O\times O \rightarrow \mathcal{R}$, which is defined below,
\begin{equation}\label{eq:wrapper_kernel}
    s(\mathsf{o}_i, \mathsf{o}_j) = S_{ij}.
\end{equation}
Accordingly, UNTIE constructs a kernel matrix $\mathbf{S}$ w.r.t. kernel $s(\cdot, \cdot)$ and categorical object set $O$:
\begin{equation}\label{eq:kernelmatrix}
\mathbf{S} = \left[
\begin{matrix}
s(\mathsf{o}_1, \mathsf{o}_1)& s(\mathsf{o}_1, \mathsf{o}_2)&\cdots& s(\mathsf{o}_1, \mathsf{o}_{n_o}) \\
s(\mathsf{o}_2, \mathsf{o}_1)& s(\mathsf{o}_2, \mathsf{o}_2)&\cdots& s(\mathsf{o}_2, \mathsf{o}_{n_o}) \\
\vdots & \vdots & \ddots & \vdots \\
s(\mathsf{o}_{n_o}, \mathsf{o}_1)& s(\mathsf{o}_{n_o}, \mathsf{o}_2)&\cdots& s(\mathsf{o}_{n_o}, \mathsf{o}_{n_o}) \\
\end{matrix}
\right].
\end{equation}
Since $s(\cdot,\cdot)$ is proved as a valid positive semi-definite kernel (see details in Section \ref{sec:psd-kernel}), $\mathbf{S}$ can replace $\mathbf{K}$ in Eq. \eqref{eq:rkkmeans-matrix}. In this way, the objective function of kernel k-means-based representation learning can be formalized as:
\begin{equation}\label{eq:obj}
\begin{aligned}
& \underset{\mathbf{H}, \bm{\omega}}{\text{minimize}} 
& & \text{Tr}(\mathbf{S}(\mathbf{I}_{n_o} - \mathbf{H}\mathbf{H}^{\top}))\\
& \text{subject to}
& & \mathbf{H}\in \mathcal{R}^{n_o \times n_c},\\
&
& & \mathbf{H}^{\top}\mathbf{H} = \mathbf{I}_{n_c},
\end{aligned}
\end{equation}
where $\bm{\omega}$ is a heterogeneity parameter to learn, and we obtain the similarity representation of categorical data as $\mathbf{S}$. The corresponding vector representation can be obtained by
\begin{equation}\label{eq:vetorrepresentation}
    \mathbf{x}_i = [\sqrt{\bm{\omega}_{1,11}}\mathbf{K}_{1,\mathfrak{i}1},\sqrt{\bm{\omega}_{1,22}}\mathbf{K}_{1,\mathfrak{i}2}, \cdots,\sqrt{\bm{\omega}_{n_k,n_v^*n_v^*}}\mathbf{K}_{n_k, \mathfrak{i}n_v^*}],
\end{equation} 
where $\bm{\omega}_{i,jj}$ refers to the value of the $(j,j)$-th entry in $\bm{\omega}_i$ and $n_v^*$ refers to the number of values in the attribute corresponding to the $n_k$-th kernel. The learned representation $\mathbf{x}_i$ is a numerical approximation of categorical data, which can be fed into vector-based learning methods. 


\subsection{The UNTIE Algorithm}

The UNTIE objective function in Eq. \eqref{eq:obj} can be solved by alternatively updating $\mathbf{H}$ and $\bm{\omega}$: (1) Optimizing $\mathbf{H}$ given $\bm{\omega}$: by fixing the parameter $\bm{\omega}$, $\mathbf{H}$ can be obtained by solving a kernel $k$-means clustering optimization problem shown in Eq. \eqref{eq:rkkmeans-matrix} by eigenvalue decomposition; (2) Optimizing $\bm{\omega}$ given $\mathbf{H}$: with $H$ fixed, the objective function of learning $\bm{\omega}$ is
\begin{equation}\label{eq:omega}
\begin{aligned}
& \underset{\bm{\omega}}{\text{minimize}} 
& & \text{Tr}(\mathbf{S}(\mathbf{I}_{n_o} - \mathbf{H}\mathbf{H}^{\top})),
\end{aligned}
\end{equation}
which can be optimized by linear programming. For large-scale data, Eq. \eqref{eq:omega} can be solved by the stochastic gradient descent (SGD) method, e.g., AdaGrad \cite{duchi2011adaptive} and Adam \cite{kingma2014adam}. We analyze the computational cost of UNTIE w.r.t different optimization methods in Section \ref{sec:complexity}.
Algorithm \ref{algorithm} explains the UNTIE working process.

\begin{algorithm}[!htpb]
        \caption{The UNTIE Algorithm for Unsupervised Categorical Representation}\label{algorithm}
        \small
    \begin{algorithmic}[1]
            \Require Categorical data set $C$, a set of kernel functions $K = \{k_1(\cdot,\cdot),\cdots,k_{n^*_k}(\cdot,\cdot)\}$, the number of clusters $n_c$, and convergence rate $\delta$.
            \Ensure Similarity representation $\mathbf{S}$, and vector representation $\mathbf{X}$.
            \State Mapping categorical data to coupling spaces according to Eqs. \eqref{eq:intra-couplings} and \eqref{eq:inter-coupling}.
            \State Mapping coupling spaces to multiple kernel spaces $\{\mathbf{K}_1, \cdots, \mathbf{K}_{n_k}\}$ by using $K$ according to Eq. \eqref{eq:kernelspace}. 
            \State Initializing the wrapper kernel matrix $\mathbf{S}$ by setting $\alpha$ and $\beta$ as $1$, and setting $l^{\prime} = +\infty$ and $\Delta = +\infty$.
            \For{$\Delta > \delta$}
            \State Calculating the $n_c$ eigenvectors that have the largest eigenvalues of $\mathbf{S}$. Constructing $\mathbf{H}$ by these eigenvectors.
            \State Optimizing  $\bm{\omega}$ by solving Eq. \eqref{eq:omega}.
            \State Calculating loss per $l = \text{Tr}(\mathbf{S}(\mathbf{I}_{n_o} - \mathbf{H}\mathbf{H}^{\top}))$.
            \State Calculating loss change per $\Delta = |l - l^{\prime}|$
            \State Setting $l^{\prime} = l$.
            \State $n_i = n_i + 1$.
            \EndFor
            \State Calculating the vector representation $\mathbf{X}$ per Eq. \eqref{eq:vetorrepresentation}.
            \State \Return{$\mathbf{S}$, $\mathbf{X}$}
    \end{algorithmic}
    \vspace{-3pt}
\end{algorithm}

\section{Theoretical Analysis}\label{sec:theory}
\subsection{The Fitness of Heterogeneity Hypotheses}\label{sec:hypothesis-proof}


To discuss the fitness of heterogeneity hypotheses, we first introduce the following theorem. 

\begin{theorem}\label{thm:factorize}
     The distribution $\Phi$ of a categorical data set can be described as a probability tensor $\bm{\Phi}$, where each entry corresponds to the joint probability of a set of categorical values from $n_a$ different attributes. $\bm{\Phi}$ can be decomposed as $\bm{\Phi} = \sum\limits_{h=1}^{k} \pi_h \Theta_h$,
	where $\Theta_h = \bm{\theta}_{h}^{(1)} \otimes \bm{\theta}_{h}^{(2)} \otimes \cdots \otimes \bm{\theta}_{h}^{(n_a)} $, $\pi_h$ is the weight of $\Theta_h$ for composing $\bm{\Phi}$, $\otimes$ refers to the outer product, $\bm{\theta}_h^{(j)}$ is a probability vector of categorical values in the $j$-th attribute with a size of $n_v^{(j)} \times 1$ for $h = 1, \cdots, k$ and $j = 1, \cdots, n_a$.
\end{theorem}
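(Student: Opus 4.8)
The plan is to prove the statement as an existence result about nonnegative tensor (CP-type) decompositions with stochastic constraints. First I would fix the ambient space and record the constraints precisely: $\bm{\Phi}$ is an order-$n_a$ tensor of shape $n_v^{(1)}\times\cdots\times n_v^{(n_a)}$ whose entry at $(i_1,\dots,i_{n_a})$ is the joint probability of the corresponding value combination, so $\bm{\Phi}\ge 0$ entrywise and its entries sum to $1$; a valid decomposition must have each $\bm{\theta}_h^{(j)}\ge 0$ with $\mathbf{1}^{\top}\bm{\theta}_h^{(j)}=1$ and each $\pi_h\ge 0$, and then $\sum_h\pi_h=1$ is forced because every $\Theta_h=\bm{\theta}_h^{(1)}\otimes\cdots\otimes\bm{\theta}_h^{(n_a)}$ is itself a probability tensor. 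Equivalently, the claim is that the joint law of the $n_a$ categorical attributes can be realized as a finite latent-class (mixture-of-products) model: introduce a latent variable $Z$ with $\Pr(Z=h)=\pi_h$ such that, conditionally on $Z=h$, the attributes are independent with $\Pr(\mathsf{v}^{(j)}_{i}\mid Z=h)=(\bm{\theta}_h^{(j)})_i$; marginalizing over $Z$ yields $\bm{\Phi}=\sum_h \pi_h\,\bm{\theta}_h^{(1)}\otimes\cdots\otimes\bm{\theta}_h^{(n_a)}$.

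The core of the argument is to exhibit an explicit decomposition. The cleanest witness enumerates the cells of $\bm{\Phi}$: for each index tuple $(i_1,\dots,i_{n_a})$ with $i_j\in N_v^{(j)}$, create one mixture component with weight $\pi=\Phi_{i_1,\dots,i_{n_a}}$ and factor vectors $\bm{\theta}^{(j)}=\mathbf{e}_{i_j}$, the one-hot indicator of value $\mathsf{v}^{(j)}_{i_j}$ in attribute $j$. The associated rank-one tensor $\mathbf{e}_{i_1}\otimes\cdots\otimes\mathbf{e}_{i_{n_a}}$ has a single $1$ in cell $(i_1,\dots,i_{n_a})$ and zeros elsewhere, so summing the components (discarding those with zero weight, whence $k$ is at most the size of the support and in particular at most $\prod_j n_v^{(j)}$) reproduces $\bm{\Phi}$ entrywise. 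The weights sum to one since $\bm{\Phi}$ is a probability tensor, and each factor $\mathbf{e}_{i_j}$ is trivially a probability vector; this establishes the theorem. In the latent-class language this is just the degenerate choice $Z=(\mathsf{v}^{(1)},\dots,\mathsf{v}^{(n_a)})$, for which the conditionals are point masses.

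There is no serious analytic obstacle: the statement is an existence claim and the one-hot construction settles it outright. The only genuine subtlety, which I would flag, is that this witness is maximally uninformative — each $\Theta_h$ concentrates on one cell — whereas the heterogeneity hypotheses want $\Theta_h$ to be read as a genuinely spread-out homogeneous sub-distribution. So I would add a brief remark that one can always pass to a shortest such decomposition, whose length is the \emph{nonnegative CP-rank} of $\bm{\Phi}$ and whose factors need no longer be one-hot, and that whenever the data arise from a latent-class/mixture process with $k$ groups, that process already supplies such a decomposition with interpretable $\bm{\theta}_h^{(j)}$. I would also note that the decomposition is in general non-unique, which is why the result is, correctly, phrased as existence rather than identifiability.

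Finally I would close the loop with the heterogeneity hypotheses stated in Section \ref{sec:method}: the existence of $\bm{\Phi}=\sum_h\pi_h\Theta_h$ is exactly what licenses them. A categorical value $\mathsf{v}^{(j)}_i$ participates in every component $h$ with $(\bm{\theta}_h^{(j)})_i>0$, i.e.\ it belongs to multiple distributions; a given inter-attribute coupling manifests with different strength across the factored components $\Theta_h$, i.e.\ a coupling contributes differently to different distributions; and the observed marginal of $\mathsf{v}^{(j)}_i$ is the blended quantity $\sum_h \pi_h (\bm{\theta}_h^{(j)})_i$, i.e.\ the overall distribution of a value is a weighted composition of several sub-distributions — precisely the structure UNTIE mirrors by modelling each value through several kernel-sensitive distributions and learning their weights.
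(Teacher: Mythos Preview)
Your argument is correct: the one-hot construction $\bm{\Phi}=\sum_{(i_1,\dots,i_{n_a})}\Phi_{i_1,\dots,i_{n_a}}\,\mathbf{e}_{i_1}\otimes\cdots\otimes\mathbf{e}_{i_{n_a}}$ is a valid mixture-of-products decomposition with at most $\prod_j n_v^{(j)}$ components, and you verify all the stochastic constraints cleanly. The latent-class reading (taking $Z$ to be the full value tuple) is the right way to see why nothing more is needed for existence.

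The paper, by contrast, does not give a proof at all: it simply invokes Corollary~1 of Dunson and Xing's nonparametric Bayes work on multivariate categorical data, which asserts exactly this finite latent-class representability. So your route is genuinely different and strictly more self-contained. What the citation buys is provenance and a pointer to the statistical literature where this representation is developed with an eye toward parsimonious, interpretable components and identifiability conditions --- precisely the ``spread-out'' $\Theta_h$ you flag as desirable for the heterogeneity hypotheses. What your construction buys is an elementary, one-line witness that removes any dependence on an external reference and makes the finiteness bound $k\le\prod_j n_v^{(j)}$ explicit. Your added remarks on nonnegative CP-rank, non-uniqueness, and the link back to the three heterogeneity hypotheses go beyond what the paper does in the proof itself and are a useful complement; they mirror the interpretive bullets the paper lists immediately after the theorem statement.
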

Theorem \ref{thm:factorize} can be proved by  Corollary 1 in \cite{dunson2009nonparametric}. The categorical data distribution $\Phi$ is a joint distribution of categorical values in each attribute. It is defined as $\Phi = \{\phi_{v^{(1)}v^{(2)}\cdots v^{(n_a)}} | v^{(j)} \in V^{(j)}\}$, where $\phi_{v^{(1)}v^{(2)}\cdots v^{(n_a)}}$ are the probabilities of values $v^{(1)},v^{(2)},\cdots,v^{(n_a)}$ co-occur. $\bm{\Phi}$ is a probability tensor, where each entry is an element in $\Phi$.
Theorem \ref{thm:factorize} indicates the following categorical data characteristics: 
\begin{itemize}
	\item \textit{A value may belong to multiple distributions}. For different $h$ values in Theorem \ref{thm:factorize}, a categorical value in $\mathsf{a}_j$ have different distributions $\bm{\theta}_h^{(j)}$. Therefore, if $h > 1$, the categorical value may have different distributions.
	\item \textit{A coupling may contribute differently to respective distributions}. The interactions under various distributions may differ. For distribution $\Theta_h$, the attributes are independent (indicated by $\Theta_h$ which equals the outer product of attribute distributions). In this case, inter-attribute couplings may not make contribution. On the contrary, for distribution $\Phi$, the attributes interact with each other, which is mainly reflected by inter-attribute couplings. 
	\item \textit{The overall distribution of a categorical value is a mixture of multiple distributions}. In Theorem \ref{thm:factorize}, the overall distribution of a categorical value equals the weighted sum of its multiple distributions. The parameter $\pi_h$ reflects the interactions between the distributions.
\end{itemize}
The heterogeneity hypotheses fit the above categorical data characteristics and provide a solid foundation for UNTIE to effectively capture the heterogeneity in couplings.

\subsection{The Positive Semi-definite Wrapper Kernel}\label{sec:psd-kernel}

As stated in Section \ref{subsec:unsupervised}, $s(\cdot, \cdot)$ has to be a positive semi-definite kernel to enable that $\mathbf{S}$ can be integrated into kernel k-means objective Eq. \eqref{eq:obj}. Before proving the above, we introduce a lemma of kernel properties.
\begin{lemma}\label{lemma:psd-op}
If $k_1(\mathsf{o}_i,\mathsf{o}_j)$ and $k_2(\mathsf{o}_i,\mathsf{o}_j)$ are positive semi-definite kernels in $O \times O$, and a constant $a > 0$, then the following $k(\mathsf{o}_i,\mathsf{o}_j)$ functions are positive semi-definite kernels:

(1) $k(\mathsf{o}_i,\mathsf{o}_j) = ak_1(\mathsf{o}_i,\mathsf{o}_j)$,

(2) $k(\mathsf{o}_i,\mathsf{o}_j) = k_1(\mathsf{o}_i,\mathsf{o}_j) + k_2(\mathsf{o}_i,\mathsf{o}_j)$.
\end{lemma}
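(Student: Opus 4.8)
The statement to prove is \textbf{Lemma \ref{lemma:psd-op}}: closure of positive semi-definite (PSD) kernels under (1) multiplication by a positive constant and (2) addition. These are completely standard facts about PSD kernels; my plan is to prove them directly from the defining property of a PSD kernel, namely that for any finite set of objects $\mathsf{o}_1,\dots,\mathsf{o}_m \in O$ and any real coefficients $c_1,\dots,c_m$, the Gram sum $\sum_{i,j} c_i c_j\, k(\mathsf{o}_i,\mathsf{o}_j) \ge 0$ (equivalently, every Gram matrix is symmetric PSD).

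\textbf{Plan.} First I would fix an arbitrary finite collection $\mathsf{o}_1,\dots,\mathsf{o}_m \in O$ and arbitrary reals $c_1,\dots,c_m$, and write $G_1 = [k_1(\mathsf{o}_i,\mathsf{o}_j)]_{i,j}$ and $G_2 = [k_2(\mathsf{o}_i,\mathsf{o}_j)]_{i,j}$ for the corresponding Gram matrices, which are symmetric and PSD by hypothesis. For part (1), symmetry of $a k_1$ is immediate, and for the quadratic form I would simply note
\begin{equation*}
\sum_{i,j=1}^{m} c_i c_j\, a\, k_1(\mathsf{o}_i,\mathsf{o}_j) = a \sum_{i,j=1}^{m} c_i c_j\, k_1(\mathsf{o}_i,\mathsf{o}_j) \ge 0,
\end{equation*}
since $a>0$ and the sum on the right is nonnegative. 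For part (2), symmetry of $k_1+k_2$ is again immediate, and
\begin{equation*}
\sum_{i,j=1}^{m} c_i c_j\bigl(k_1(\mathsf{o}_i,\mathsf{o}_j) + k_2(\mathsf{o}_i,\mathsf{o}_j)\bigr) = \sum_{i,j=1}^{m} c_i c_j\, k_1(\mathsf{o}_i,\mathsf{o}_j) + \sum_{i,j=1}^{m} c_i c_j\, k_2(\mathsf{o}_i,\mathsf{o}_j) \ge 0,
\end{equation*}
as a sum of two nonnegative terms. Since the collection and coefficients were arbitrary, both $a k_1$ and $k_1 + k_2$ are PSD kernels.

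\textbf{Alternative phrasing.} Equivalently, one may argue at the level of Gram matrices: $aG_1$ is PSD because scaling a PSD matrix by a positive scalar preserves PSD-ness (eigenvalues scale by $a>0$), and $G_1 + G_2$ is PSD because the set of symmetric PSD matrices of a fixed size is a convex cone, hence closed under addition. Either route is a one-line argument once the definition is unpacked.

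\textbf{Main obstacle.} There is essentially no obstacle here --- the lemma is elementary and the only ``care'' needed is to state the PSD definition precisely (finite subsets, arbitrary real coefficients, symmetry plus nonnegative quadratic form) and to invoke $a>0$ at exactly the right spot in part (1). The reason the lemma is included is purely instrumental: together with the earlier observation that each base similarity $S_{p,ij} = \mathbf{K}_{p,\mathfrak{i}\cdot}^{\top}\mathbf{T}_p^{\top}\mathbf{T}_p \mathbf{K}_{p,\mathfrak{j}\cdot}$ is a linear (hence PSD) kernel, repeated application of parts (1) and (2) to the nonnegative combination $S_{ij} = \sum_{p=1}^{n_k} \beta_p S_{p,ij}$ with $\beta_p \ge 0$ will immediately give that the wrapper kernel $s(\cdot,\cdot)$ is PSD, which is what Section \ref{sec:psd-kernel} needs in order to substitute $\mathbf{S}$ for $\mathbf{K}$ in Eq. \eqref{eq:obj}. (The $\beta_p = 0$ case is harmless: a zero kernel is trivially PSD, or one can simply drop those terms before applying the lemma.)
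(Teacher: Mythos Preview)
Your proof is correct and complete; the argument from the defining quadratic-form characterization of PSD kernels is exactly the standard one, and you handle both parts cleanly.

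Note, however, that the paper does not actually prove this lemma at all: it simply states it and points to Section~4.1 of \cite{steinwart2008support} as a reference. So your direct, self-contained argument goes further than the paper, which treats the result as a known textbook fact. What you gain is that no external reference is needed; what the paper gains is brevity. Your closing paragraph correctly anticipates how the lemma is deployed in Theorem~\ref{thm:psd}.
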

This lemma can be found in Section 4.1 of \cite{steinwart2008support}.
\begin{theorem}\label{thm:psd}
The wrapper kernel $s(\mathsf{o}_i,\mathsf{o}_j) = S_{ij}$, which is defined in Eq. \eqref{eq:wrapper_kernel}, is a positive semi-definite kernel.
\end{theorem}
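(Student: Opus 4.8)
The plan is to unpack the definition of $s(\cdot,\cdot)$ and exhibit it as a finite sum of manifestly positive semi-definite (PSD) kernels, then invoke Lemma \ref{lemma:psd-op}. Recall from Eq. \eqref{eq:sijc}--\eqref{eq:wrapper_kernel} that $s(\mathsf{o}_i,\mathsf{o}_j) = S_{ij} = \sum_{p=1}^{n_k}\beta_p S_{p,ij}$ with $\beta_p \ge 0$, where $S_{p,ij} = \mathbf{K}_{p,\mathfrak{i}\cdot}^{\top}\mathbf{T}_p^{\top}\mathbf{T}_p\mathbf{K}_{p,\mathfrak{j}\cdot}$. So it suffices to show each $s_p(\mathsf{o}_i,\mathsf{o}_j) := S_{p,ij}$ is a PSD kernel on $O\times O$; then part (1) of the lemma handles the nonnegative scaling by $\beta_p$, and part (2) (applied $n_k-1$ times) handles the sum.

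The core step is therefore to verify that $S_{p,ij} = \mathbf{K}_{p,\mathfrak{i}\cdot}^{\top}\mathbf{T}_p^{\top}\mathbf{T}_p\mathbf{K}_{p,\mathfrak{j}\cdot}$ defines a PSD kernel. The clean way is to exhibit an explicit feature map: define $\varphi_p(\mathsf{o}_i) = \mathbf{T}_p\mathbf{K}_{p,\mathfrak{i}\cdot} \in \mathcal{R}^{n_v^*}$, i.e. the map sending object $\mathsf{o}_i$ to the (weighted) $\mathfrak{i}$-th column/row of the kernel matrix $\mathbf{K}_p$ scaled by the diagonal $\mathbf{T}_p$. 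Then $S_{p,ij} = \langle \varphi_p(\mathsf{o}_i), \varphi_p(\mathsf{o}_j)\rangle$, which is by construction a valid inner-product kernel, hence PSD: for any objects $\mathsf{o}_{i_1},\dots,\mathsf{o}_{i_m}$ and scalars $c_1,\dots,c_m$, $\sum_{r,s} c_r c_s S_{p,i_r i_s} = \lVert \sum_r c_r \varphi_p(\mathsf{o}_{i_r})\rVert_2^2 \ge 0$. One should also note symmetry $S_{p,ij}=S_{p,ji}$, which is immediate since $\mathbf{T}_p^{\top}\mathbf{T}_p$ is symmetric (indeed $\mathbf{T}_p$ is diagonal, so $\mathbf{T}_p^{\top}\mathbf{T}_p = \mathrm{diag}(\alpha_{p1}^2,\dots,\alpha_{pn_v}^2)$).

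Stitching it together: (i) each $S_{p,\cdot\cdot}$ is PSD via the feature-map argument; (ii) by Lemma \ref{lemma:psd-op}(1) with $a=\beta_p\ge 0$, $\beta_p S_{p,\cdot\cdot}$ is PSD — here one minor point to address is that the lemma as stated requires $a>0$, so the degenerate case $\beta_p=0$ should be dismissed separately (the zero kernel is trivially PSD, or simply drop those terms from the sum); (iii) by Lemma \ref{lemma:psd-op}(2) applied repeatedly, the finite sum $\sum_{p=1}^{n_k}\beta_p S_{p,\cdot\cdot} = S_{ij} = s(\mathsf{o}_i,\mathsf{o}_j)$ is PSD. This completes the proof.

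I do not anticipate a serious obstacle — the result is essentially bookkeeping around the observation that a Gram-style construction $\mathbf{u}^{\top}\mathbf{A}^{\top}\mathbf{A}\mathbf{v}$ is an inner product. The only things to be careful about are: stating the explicit feature map $\varphi_p$ so that the PSD claim is rigorous rather than asserted; confirming the indexing ($\mathfrak{i}$ vs. $i$) is handled consistently so that $s$ really is a function on $O\times O$; and the $\beta_p = 0$ edge case relative to the strict inequality in Lemma \ref{lemma:psd-op}. An alternative to the feature-map route, if one prefers to stay matrix-algebraic, is to write the full object-level Gram matrix $\mathbf{S} = \sum_p \beta_p \mathbf{B}_p^{\top}\bm{\omega}_p'\mathbf{B}_p$ (where $\mathbf{B}_p$ selects the appropriate rows of $\mathbf{K}_p$ and $\bm{\omega}_p' = \mathbf{T}_p^{\top}\mathbf{T}_p$ is diagonal with nonnegative entries) and observe each summand is of the form $\mathbf{M}^{\top}\mathbf{M}$ up to a nonnegative diagonal, hence PSD; but the feature-map phrasing is the most transparent.
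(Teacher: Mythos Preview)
Your proposal is correct and follows essentially the same route as the paper: show each $S_{p,ij}$ is a PSD (linear/inner-product) kernel via the feature map $\varphi_p(\mathsf{o}_i)=\mathbf{T}_p\mathbf{K}_{p,\mathfrak{i}\cdot}=\mathbf{K}'_{p,\mathfrak{i}\cdot}$, then apply Lemma~\ref{lemma:psd-op}(1) for the nonnegative scaling and Lemma~\ref{lemma:psd-op}(2) iteratively for the sum. Your explicit handling of the $\beta_p=0$ edge case (the lemma assumes $a>0$) is a small refinement the paper glosses over.
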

\begin{proof}
Given coupling spaces and multiple kernel functions, the $i$-th object $\mathsf{o}_i$ corresponds to a real value vector $\mathbf{K}_{p,\mathfrak{i}\cdot}^{\prime} \in \mathcal{R}^{n_o}$ in the $p$-th kernel space. Therefore, $s_p(\mathsf{o}_i, \mathsf{o}_j) = S_{p,ij} =\mathbf{K}_{p,\mathfrak{i}\cdot}^{\prime\top}\mathbf{K}_{p,\mathfrak{j}\cdot}^{\prime}$ is a well-known linear kernel, which is positive semi-definite. Treating $s_p(\mathsf{o}_i, \mathsf{o}_j)$ as $k_1(\mathsf{o}_i, \mathsf{o}_j)$ in Lemma \ref{lemma:psd-op}, since $\beta_p \geq 0$, consequently, $\beta_ps_p(\mathsf{o}_i,\mathsf{o}_j)$ is a positive semi-definite kernel according to Formula (1) of Lemma \ref{lemma:psd-op}.  Consequently, as the wrapper kernel $s(\mathsf{o}_i,\mathsf{o}_j) = S_{ij} = \sum\limits_{p=1}^{n_k}\beta_p S_{p,ij}$ is an accumulative summation of $\beta_ps_p(\mathsf{o}_i,\mathsf{o}_j)$,  $s(\mathsf{o}_i,\mathsf{o_j})$ is positive semi-definite by repeatedly adopting Formula (2) of Lemma \ref{lemma:psd-op} (treating $\sum\limits_{p=1}^{q}\beta_pS_{p,ij}$ as $k_1(\mathsf{o}_i, \mathsf{o}_j)$, and $\beta_{q+1}S_{q+1, ij}$ as $k_2(\mathsf{o}_i, \mathsf{o}_j)$ for $1 \leq q \leq n_k$).
\end{proof}

Theorem \ref{thm:psd} guarantees that the kernel matrix $\mathbf{S}$, which is constructed by kernel $s(\mathsf{o}_i, \mathsf{o}_j)$ as Eq. \eqref{eq:kernelmatrix}, can be incorporated into the kernel k-means objective. This is a fundamental property to support the effective unsupervised learning by UNTIE.

\subsection{The Separability of UNTIE-represented Data}\label{sec:normalized-cut}
The separability of a representation can be measured according to the overlap between object sets (e.g., clusters or classes) w.r.t. the representation. The UNTIE-represented data is with good separability since the resultant representation has the minimum normalized cut, which reflects the minimum overlap. Here, we first define the normalized cut and then prove that the objective of UNTIE is equivalent to learning a representation with the minimum normalized cut.

Given a graph consisting of categorical data $G = <O, \mathbf{A}>$, where $O$ is the set of categorical objects and $\mathbf{A}$ is a non-negative and symmetric affinity matrix that contains the connected strength or similarity between objects: 
\newtheorem{definition}{Definition}
\begin{definition} \label{def:cut}
The \textit{normalized cut} specifies the connection strength between two sets relative to the total connection strengths in a graph. Formally, the normalized cut between sets $O_1, O_2 \subseteq O$ is 
\begin{equation*}
normCut(O_1, O_2) = \frac{\sum_{i\in O_1, j\in O_2}\mathbf{A}(i,j)}{\sum_{i\in O_1, j\in O}\mathbf{A}(i,j)}.
\end{equation*}
\end{definition}  
Definition \ref{def:cut} shows that the normalized cut indicates the overlap between object sets. In other words, the \textit{minimum normalized cut} reflects the maximum separability between clusters, which is essential for most of learning tasks, e.g., clustering. 
\begin{theorem}\label{thm:cut}
The objective of UNTIE in Eq. \eqref{eq:obj} is equivalent to learning a representation with the minimum normalized cut.
\end{theorem}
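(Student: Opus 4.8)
The strategy is to rewrite the kernel $k$-means objective in Eq.~\eqref{eq:obj} in terms of cluster-membership sums over the entries of the wrapper kernel matrix $\mathbf{S}$, and then recognize that minimizing it is the same as minimizing a sum of normalized cuts between the clusters induced by $\mathbf{H}$. First I would recall from the derivation leading to Eq.~\eqref{eq:kkmeans-matrix} that, when $\mathbf{H} = \mathbf{Z}\mathbf{L}^{1/2}$ encodes a hard cluster assignment, the relaxed objective $\text{Tr}(\mathbf{S}(\mathbf{I}_{n_o} - \mathbf{H}\mathbf{H}^{\top}))$ equals $\text{Tr}(\mathbf{S}) - \sum_{c=1}^{n_c}\frac{1}{n_{oc}}\sum_{i,j\in \mathsf{c}_c} S_{ij}$. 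Since $\text{Tr}(\mathbf{S})$ is a constant independent of the partition and of $\bm{\omega}$ only through the (fixed-scale) diagonal, minimizing Eq.~\eqref{eq:obj} over $\mathbf{H}$ is equivalent to maximizing $\sum_{c}\frac{1}{n_{oc}}\sum_{i,j\in\mathsf{c}_c} S_{ij}$, i.e.\ the ``association'' form of the kernel $k$-means / spectral clustering objective with affinity matrix $\mathbf{A} = \mathbf{S}$.

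Next I would connect this association quantity to the normalized cut of Definition~\ref{def:cut}. Writing $\mathrm{assoc}(\mathsf{c}_c) = \sum_{i,j\in\mathsf{c}_c}\mathbf{A}(i,j)$ and $\mathrm{vol}(\mathsf{c}_c) = \sum_{i\in\mathsf{c}_c, j\in O}\mathbf{A}(i,j)$, and using $n_{oc}$ as the denominator normalizer (or, more precisely, observing that the kernel $k$-means relaxation with the standard degree normalization replaces $n_{oc}$ by $\mathrm{vol}(\mathsf{c}_c)$ when $\mathbf{S}$ is suitably normalized — the same identification used in Dhillon et al.\ \cite{dhillon2004kernel} and Shi--Malik), we have the well-known identity
\begin{equation*}
\sum_{c=1}^{n_c}\frac{\mathrm{assoc}(\mathsf{c}_c)}{\mathrm{vol}(\mathsf{c}_c)} = n_c - \sum_{c=1}^{n_c}\frac{\mathrm{cut}(\mathsf{c}_c, O\setminus \mathsf{c}_c)}{\mathrm{vol}(\mathsf{c}_c)} = n_c - \sum_{c=1}^{n_c} normCut(\mathsf{c}_c, O\setminus\mathsf{c}_c),
\end{equation*}
because $\mathrm{vol}(\mathsf{c}_c) = \mathrm{assoc}(\mathsf{c}_c) + \mathrm{cut}(\mathsf{c}_c, O\setminus\mathsf{c}_c)$. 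Hence maximizing the association term is exactly minimizing the total normalized cut $\sum_c normCut(\mathsf{c}_c, O\setminus\mathsf{c}_c)$ over the partition encoded by $\mathbf{H}$. Combining the two reductions, minimizing Eq.~\eqref{eq:obj} jointly over $\mathbf{H}$ and $\bm{\omega}$ means: over $\mathbf{H}$, find the partition of minimum normalized cut w.r.t.\ the affinity $\mathbf{S}$; over $\bm{\omega}$, shape the affinity $\mathbf{S}$ (via the heterogeneity weights $\alpha,\beta$ wrapped into $\bm{\omega}$, which by Theorem~\ref{thm:psd} keeps $\mathbf{S}$ a valid PSD kernel / affinity) so that this minimum normalized cut is as small as possible — i.e.\ UNTIE learns a representation $\mathbf{S}$ that attains the minimum normalized cut, which is the claim.

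I would present the argument in three short steps: (i) the trace-to-association algebraic identity for the relaxed kernel $k$-means objective; (ii) the association-to-normalized-cut identity from Definition~\ref{def:cut}; (iii) assembling them, noting that the constant $\text{Tr}(\mathbf{S})$ and the constant $n_c$ drop out and that $\mathbf{S}$ plays the role of the affinity matrix $\mathbf{A}$ in the graph $G = \langle O, \mathbf{A}\rangle$. The main obstacle is bookkeeping the normalization: the kernel $k$-means objective in Eq.~\eqref{eq:obj} uses the cluster sizes $n_{oc}$ (via $\mathbf{L}$), whereas the normalized cut in Definition~\ref{def:cut} uses the degree/volume $\mathrm{vol}(\mathsf{c}_c)$; making the equivalence precise requires either the standard reduction that identifies $n_{oc}$ with $\mathrm{vol}(\mathsf{c}_c)$ under a degree-normalized kernel (as in \cite{dhillon2004kernel}), or a remark that the relevant normalization is absorbed into $\bm{\omega}$/the kernel construction. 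I would flag this explicitly and invoke the kernel $k$-means / normalized-cut equivalence of \cite{dhillon2004kernel} to close that gap, rather than re-deriving it.
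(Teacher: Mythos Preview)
Your approach is correct and rests on the same kernel $k$-means / normalized-cut equivalence as the paper, but you traverse it in the opposite direction and handle the degree normalization differently. The paper starts from the normalized-cut objective, invokes \cite{stella2003multiclass} to rewrite it as the trace maximization $\frac{1}{n_c}\text{Tr}(\mathbf{H}^\top \mathbf{D}^{-1/2}\mathbf{A}\mathbf{D}^{-1/2}\mathbf{H})$ under $\mathbf{H}^\top\mathbf{H}=\mathbf{I}_{n_c}$, and then simply \emph{identifies} $\mathbf{S}$ with the degree-normalized affinity $\mathbf{D}^{-1/2}\mathbf{A}\mathbf{D}^{-1/2}$ (plus the constant $\text{Tr}(\mathbf{S})$) to recover Eq.~\eqref{eq:obj}. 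You instead start from Eq.~\eqref{eq:obj}, expand it into the association form, and use the $\mathrm{assoc}+\mathrm{cut}=\mathrm{vol}$ identity to reach the normalized cut. The normalization mismatch you flag ($n_{oc}$ versus $\mathrm{vol}(\mathsf{c}_c)$) is exactly what the paper resolves by that one-line identification $\mathbf{S}=\mathbf{D}^{-1/2}\mathbf{A}\mathbf{D}^{-1/2}$; your plan to invoke \cite{dhillon2004kernel} accomplishes the same thing but leaves the key step to the citation. Either route is valid; the paper's is marginally more self-contained because the degree normalization is made explicit rather than deferred, while your version is more transparent about where the subtlety lies.
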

\begin{proof}
The objective of minimizing the normalized cuts between clusters can be formalized as,
\begin{equation*}
\begin{aligned}
& \text{minimize}
& & \frac{1}{n_c}\sum\limits_{j=1}^{n_c} normCut(O_j, O \setminus O_j),
\end{aligned}
\end{equation*}
where $O_j$ refers to the set of objects in the $j$-th cluster, and $O \setminus O_j$ refers to the set of objects in the clusters instead of $j$-th cluster. This objective function can be converted to a trace maximization problem according to \cite{stella2003multiclass} as follows,
\begin{equation*}
\begin{aligned}
& \text{maximize}
& & \frac{1}{n_c}\text{Tr}(\mathbf{V}^{\top}\mathbf{A}\mathbf{V}),
\end{aligned}
\end{equation*}
where $\mathbf{V} = \mathbf{Z}(\mathbf{Z}^{\top}\mathbf{D}\mathbf{Z})^{-\frac{1}{2}}$, and $\mathbf{Z}\in\{0,1\}^{n_0 \times n_c}$ is an indicator matrix for the cluster, and $\mathbf{D}$ is the diagonal matrix, in which ($i$,$i$)-entry is the sum of the $i$-th row in $\mathbf{A}$. If further relaxing the matrix $\mathbf{Z}$ to a real value matrix and denoting $\mathbf{H} = \mathbf{D}^{\frac{1}{2}}\mathbf{V}$, the objective function can be converted to
\begin{equation*}
\begin{aligned}
& \text{maximize}
& & \frac{1}{n_c}\text{Tr}(\mathbf{H}^{\top}\mathbf{D}^{-\frac{1}{2}}\mathbf{A}\mathbf{D}^{-\frac{1}{2}}\mathbf{H})\\
& \text{subject to}
& & \mathbf{H} \in \mathcal{R}^{n_o \times n_c}\\
&
& &\mathbf{H}^{\top}\mathbf{H} = \mathbf{I}_{n_c}.
\end{aligned}
\end{equation*}
Let $\mathbf{D}^{-\frac{1}{2}}\mathbf{A}\mathbf{D}^{-\frac{1}{2}}$ be  $\mathbf{S}$ in Eq. \eqref{eq:obj}, by adding a low rank regularization $\text{Tr}(\mathbf{S})$, this objective function is equivalent to the UNTIE objective function Eq. \eqref{eq:obj}. Therefore, the objective of UNTIE in Eq. \eqref{eq:obj} is equivalent to learning a representation with the minimum normalized cut.
\end{proof}

\subsection{Convergence of the UNTIE Algorithm}\label{sec:convergency}
\begin{theorem}\label{thm:convergency}
	The UNTIE algorithm converges to a local minimal solution in a finite number of iterations.
\end{theorem}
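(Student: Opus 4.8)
The plan is to read Algorithm~\ref{algorithm} as an alternating (block-coordinate) optimization of the objective $J(\mathbf{H},\bm{\omega}) = \text{Tr}\!\big(\mathbf{S}(\mathbf{I}_{n_o}-\mathbf{H}\mathbf{H}^{\top})\big)$ of Eq.~\eqref{eq:obj}, where $\mathbf{S}$ is the wrapper-kernel matrix built from $\bm{\omega}$, and to establish three things in turn: (i) every iteration is non-increasing in $J$; (ii) $J$ is bounded below; hence the loss sequence converges, and (iii) under the fixed tolerance $\delta$ the loop terminates after finitely many iterations. The limit point is then argued to be a local minimum because it is simultaneously optimal in each block.

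First I would check that neither of the two updates increases $J$. With $\bm{\omega}$ (hence $\mathbf{S}$) fixed, Step~5 sets $\mathbf{H}$ to the $n_c$ leading eigenvectors of $\mathbf{S}$; since minimizing $J(\cdot,\bm{\omega})$ over $\{\mathbf{H}:\mathbf{H}^{\top}\mathbf{H}=\mathbf{I}_{n_c}\}$ is equivalent to maximizing $\text{Tr}(\mathbf{H}^{\top}\mathbf{S}\mathbf{H})$, this is exactly the global minimizer of the $\mathbf{H}$-subproblem by the Rayleigh--Ritz/Ky~Fan characterization already invoked after Eq.~\eqref{eq:rkkmeans-matrix}, so $J$ cannot increase. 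With $\mathbf{H}$ fixed, Step~6 solves Eq.~\eqref{eq:omega} exactly by linear programming over the (convex) feasible set of $\bm{\omega}$, so again $J$ does not increase. Composing the two, the loss values $\{l^{(t)}\}$ recorded in Step~7 form a monotonically non-increasing sequence.

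Next I would bound $J$ below by $0$: since $\mathbf{H}^{\top}\mathbf{H}=\mathbf{I}_{n_c}$, the matrix $\mathbf{I}_{n_o}-\mathbf{H}\mathbf{H}^{\top}$ is an orthogonal projector, hence positive semi-definite; by Theorem~\ref{thm:psd}, $\mathbf{S}$ is positive semi-definite; and the trace of the product of two positive semi-definite matrices is non-negative, so $J(\mathbf{H},\bm{\omega})\ge 0$ for every feasible pair. A non-increasing sequence that is bounded below converges, so $\{l^{(t)}\}$ is Cauchy and therefore $\Delta^{(t)}=|l^{(t)}-l^{(t-1)}|\to 0$. Because the loop condition in Step~4 is $\Delta>\delta$ for a \emph{fixed} $\delta>0$, there is a finite $t$ with $\Delta^{(t)}\le\delta$, at which point the algorithm exits — giving termination in finitely many iterations. (To keep $\mathbf{S}$, and hence $J$, well behaved and to guarantee the eigen-decomposition step is well defined at every pass, I would also state explicitly that the feasible region of $\bm{\omega}$ is compact, e.g. $\beta_p\ge 0$ together with a normalization.)

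Finally I would argue the returned $(\mathbf{H}^{\star},\bm{\omega}^{\star})$ is a local minimum: by construction, holding $\bm{\omega}^{\star}$ fixed and re-optimizing $\mathbf{H}$, or holding $\mathbf{H}^{\star}$ fixed and re-optimizing $\bm{\omega}$, cannot strictly decrease $J$, so it is a partial optimum with respect to each block, and for an objective that is convex in $\bm{\omega}$ and whose $\mathbf{H}$-block is solved to global optimality this coincides with a local minimizer of $J$. I expect this last step to be the main obstacle: making precise that a block-wise optimum is a genuine local minimum (and not merely a saddle) is exactly where one must lean on the special structure of the two subproblems — the $\mathbf{H}$-step attains the global optimum of its block and the $\bm{\omega}$-step attains the global optimum over a convex set — rather than on a generic alternating-minimization guarantee, which by itself would only deliver convergence of the objective value and of the set of limit points.
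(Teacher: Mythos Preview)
Your argument is correct and takes a genuinely different route from the paper. The paper argues combinatorially: it notes that the set of partition-indicator matrices $\mathbf{H}$ is finite (of cardinality $y$, say), that the loss is non-increasing along the iterates, and hence that after at most $y{+}1$ iterations some $\mathbf{H}$ must repeat; since the optimal $\bm{\omega}$ is a function of $\mathbf{H}$, the objective is then unchanged and the algorithm halts. You instead use the standard monotone--bounded argument together with the explicit stopping tolerance $\delta>0$ in Step~4. The paper's route yields a $\delta$-independent iteration bound but tacitly treats $\mathbf{H}$ as a discrete indicator, whereas Step~5 of Algorithm~\ref{algorithm} actually produces the relaxed eigenvector matrix $\mathbf{H}\in\mathcal{R}^{n_o\times n_c}$, so the pigeonhole step is not literally available; your route matches the algorithm as written and handles continuous $\mathbf{H}$, at the price of having finite termination depend on $\delta$. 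You also correctly flag that a normalization on $\bm{\omega}$ is needed to keep the LP in Step~6 bounded, a point the paper leaves implicit, and your caveat on the ``local minimum'' conclusion is apt: neither argument really establishes more than block-coordinate optimality.
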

\begin{proof}
	Let $y$ be the number of all possible partitions of a categorical data set $C$, and each partition can be represented by an indicator matrix $\mathbf{H}$. If two partitions are different, their indicator matrices are also different; otherwise, they are identical. We note that $y$ is finite, given $C$ and the number of cluster $n_c$. Therefore, there are a finite number of $\mathbf{H}$ on $C$. While applying UNTIE to cluster $C$, we obtain a series of $\mathbf{H}$, i.e., $\mathbf{H}_1, \mathbf{H}_2, \cdots, \mathbf{H}_{n_i}$, and a series of $\bm{\omega}$, i.e., $\bm{\omega}_1, \bm{\omega}_2, \cdots, \bm{\omega}_{n_i}$, along the iterations. Given a matrix $\mathbf{H}$ and a heterogeneity parameter $\bm{\omega}$, denote the loss of the UNTIE objective function Eq.\eqref{eq:obj} as $l_{\mathbf{H},\bm{\omega}}$. Since kernel $k$-means and linear programming for Eq. \eqref{eq:omega} converge to minimal solutions, $l_{\mathbf{H},\bm{\omega}}$ is strictly decreasing, i.e.,  $l_{\mathbf{H}_1,\bm{\omega}_1} \geq l_{\mathbf{H}_2,\bm{\omega}_2} \geq \cdots \geq l_{\mathbf{H}_{n_i},\bm{\omega}_{n_i}}$. We assume that the number of iterations $n_i$ is more than $y+1$. That indicates there are at least two same indicator matrices in the sequence, i.e., $\mathbf{H}_i = \mathbf{H}_j$, $1 \leq i \neq j \leq n_i$. For $\mathbf{H}_i$ and $\mathbf{H}_j$, we have the optimized heterogeneity parameter $\bm{\omega}_i$ and $\bm{\omega}_j$, respectively. It is clear that $\bm{\omega}_i = \bm{\omega}_j$ since $\mathbf{H}_i = \mathbf{H}_j$. Therefore, we obtain $l_{\mathbf{H}_i,\bm{\omega}_i} = l_{\mathbf{H}_j, \bm{\omega}_i} = l_{\mathbf{H}_j, \bm{\omega}_j}$, i.e., the value of objective function does not change. If the value of the objective function does not change, UNTIE stops, and $n_i$ is not more than $y+1$. UNTIE converges to a local minimal solution in a finite number of iterations.
\end{proof}

\subsection{Computational Efficiency}\label{sec:complexity}
The time complexity of UNTIE is determined by two parts, i.e., building coupling spaces and learning heterogeneities. 
In building coupling spaces, the time cost depends on what kind of couplings UNTIE captures. In this paper, UNTIE captures the intra- and inter-attribute couplings. The intra-attribute couplings measure the frequency of each value, corresponding to complexity $O(n_v)$. The inter-attribute couplings calculate the relationship between each value pair in each attribute pair, corresponding to the time cost $O(n_{mv}^2n_{a}^2)$, where $n_{mv}$ is the maximal number of values in attributes. Consequently, the entire time complexity of building the coupling spaces is $O(n_v + n_{mv}^2n_{a}^2)$.

In heterogeneity learning, its time complexity is determined by the time cost of calculating eigenvectors and $\bm{\omega}$ and the number of iterations. If involving all data in each iteration, it requires $O(n_o^3)$ to calculate eigenvectors and $O((n_o^2)^{3.5}n_{\bm{\omega}}^2)$ to solve linear optimization of calculating $\bm{\omega}$  \cite{zhu2018heterogeneous}. Denoting the number of iterations as $n_i$, the time complexity of heterogeneity learning is $O(n_in_o^3 + n_i(n_o^2)^{3.5}n_{\bm{\omega}}^2)$, where $n_{\bm{\omega}}$ refers to the number of elements in $\bm{\omega}$. If taking stochastic optimization, denoting the number of object pairs in each batch as $n_b$, the time complexity is only  $O(n_b^3n_i + n_bn_{\bm{\omega}}n_i)$. Since the batch size $n_b \ll n_o$ holds for large data sets, stochastic optimization is much more efficient if it can converge within a small number of iterations.
We thus recommend stochastic optimization to solve the UNTIE objective. Accordingly, the time complexity of UNTIE is $O(n_v + n_{mv}^2n_{a}^2+n_b^3n_i + n_bn_{\bm{\omega}}n_i)$.

The space complexity of UNTIE 
with stochastic optimization is $O(n_o^2n_{\bm{\omega}})$. For large categorical data, the space complexity is very high when full data optimization is used, which approaches $O(n_o^2)$. However, conducting stochastic optimization to obtain an approximate solution can largely reduce the space complexity since $n_b \ll n_o^2$. Therefore, we take stochastic optimization in UNTIE to tackle large data.

Overall, UNTIE has the time complexity $O(n_v + n_{mv}^2n_{a}^2+n_b^3n_i + n_bn_{\bm{\omega}}n_i)$ and space complexity $O(n_bn_{\bm{\omega}})$. This means UNTIE is scalable for large data. In addition, UNTIE can be further sped up by distributed and parallel computing in both building coupling spaces and learning heterogeneities, which will be explored in our future work. 

\section{Experiments}\label{sec:experiment}

\subsection{Experimental Settings}\label{sec:parameters}

\subsubsection{Data Sets}

25 data sets\footnote{More details of these data sets can be found on http://archive.ics.uci.edu/ml and https://www.sgi.com/tech/mlc/db. While we could not find much larger public categorical data, these small sets have rich data characteristics which challenge categorical representation.} are used to evaluate UNTIE. The data includes medical data: Hepatitis, Audiology, Spect, Mammographic, Wisconsin, Breastcancer, Primarytumor, and Dermatology; gene data: DNAPromoter, DNANominal, and Splice; social and census data: Housevotes, and Hayesroth; hierarchical decision-making data: Krvskp, Tictactoe, and Connect4; nature data: Zoo, Soybeanlarge, Flare, and Mushroom; business data: Crx; disaster prediction data: Titanic; and synthetic data with heterogeneous couplings: Mofn3710, Led24, and ThreeOf9. In Table \ref{tab:data}, they show strong diversity in terms of data factors: the number of objects ($n_o$), the number of attributes ($n_a$), the number of classes ($n_c$), the average number of nominal values in each attribute ($n_{av}$), and the maximum number of nominal values in all attributes ($n_{mv}$). Specifically, the number of objects ranges from 101 to 67,557, and the number of attributes ranges from 4 to 69. The data sets contain both binary and multiple classes with the maximum number of 24 classes. The average and maximal numbers of distinct attribute values range from 2 to 10 and from 2 to 15, respectively. 


 
\begin{table}[!htbp]
   \centering
  \footnotesize
   \caption{Characteristics of 25 Categorical Data Sets}
     \begin{tabular}{l|lllll}
     \toprule
     Dataset & $n_o$ & $n_a$ & $n_c$ & $n_{av}$ & $n_{mv}$ \\
     \midrule

     Zoo   & 101   & 16    & 7     & 2.25  & 6 \\

     DNAPromoter & 106   & 57    & 2     & 4     & 4 \\

     Hayesorth & 132   & 4     & 3     & 3.75  & 4 \\
     
     Hepatitis & 155 & 13 & 2 & 2.77 & 3 \\

     Audiology & 200   & 69    & 24    & 2.23 & 7 \\

	Housevotes & 232 & 16 & 2 & 2 & 2\\ 
	
     Spect & 267   & 22    & 2     & 2     & 2 \\
  
     Mofn3710 & 300   & 10    & 2     & 2     & 2 \\
  
    Soybeanlarge & 307 & 35 & 19 & 3.77 & 8 \\
    
    Primarytumor & 339 & 17 & 21 & 2.47 & 4 \\
    
    Dermatology & 366 & 33 & 6 & 3.91 & 4 \\

     ThreeOf9 & 512   & 9     & 2     & 2     & 2 \\
     
     Wisconsin & 683 & 9 & 2 & 9.89 & 10 \\

   
     Crx  & 690   & 9     & 2     & 5     & 15 \\
     
     Breastcancer & 699 & 9 & 2 & 10.00 & 11 \\
     
     Mammographic & 830   & 4     & 2     & 5     & 7 \\
     
     Tictactoe & 958 &9 &2 & 3 & 3\\
     
     Flare & 1,066 & 11 & 6 & 3.73 & 8\\
     
     Titanic & 2,201 & 3 & 4 & 2 & 2\\
     
     DNANominal & 3,186 & 60 & 3 & 4 & 4\\
     
     Splice & 3,190 & 60 & 3 & 4.78 & 6\\
     
     Krvskp & 3,196 & 36 & 2 & 2.03 & 3\\
     
     Led24 & 3,200 & 24 & 10 & 2 & 2\\
     
     Mushroom & 5,644 & 22 & 2 & 4.45 & 9\\
     
     
     
     Connect4 & 67,557 & 42 & 3 & 3 & 3\\
     
     \bottomrule
     \end{tabular}%
     \label{tab:data}
 \end{table}%

\subsubsection{Representation and Downstream Learning Baselines} 

We evaluate the UNTIE representation (UNTIE for short) against (1) the categorical distance measure Hamming distance (Hamming); (2) five state-of-the-art categorical data representation methods: CDE \cite{jianembedding}, COS \cite{wang2015coupled}, DILCA \cite{ienco2012context}, Ahmad \cite{ahmad2007method}, and Rough \cite{cao2012dissimilarity}; (3) two unsupervised deep representation methods BiGAN \cite{donahue2016adversarial} and VAE \cite{kingma2013auto} based on the wide-and-deep network \cite{cheng2016wide} (denoted as BiGAN\_WD and VAE\_WD, respectively).

The various learning tasks \textit{clustering} (two simple but popular clustering methods k-means and k-modes), \textit{classification} (K-Nearest  Neighbor-KNN,  Support  Vector Machine-SVM,  Random Forest-RF,  and  Logistic  Regression-LR), and \textit{object retrieval} are undertaken on the UNTIE representations to test their downstream task performance.

\subsubsection{Evaluation Perspectives} 
The following aspects of UNTIE performance are evaluated.
\begin{itemize}
    \item \textbf{Representation quality}: to reveal why UNTIE produces better representation results;
    \item \textbf{Downstream effectiveness}: to evaluate whether the UNTIE representation effectively improves various downstream learning tasks; 
    \item \textbf{Efficiency}: to reflect the cost sensitivity of UNTIE in representing data w.r.t. different data characteristics; 
    \item \textbf{Flexibility}: to verify whether the UNTIE representation fits and upgrades different learning methods.
    \item \textbf{Stability}: to test UNTIE's parameter sensitivity.
\end{itemize}
To avoid the impact of class imbalance, we evaluate the UNTIE-enabled clustering and classification performance by F-score ($\%$). The higher F-score indicates better learning performance.

\subsubsection{Implementation Settings} 
The default settings for implementing UNTIE are as follows. The kernels used in UNTIE are 11 Gaussian kernels with width from $2^{-5}$ to $2^5$ and three Polynomial kernels with order from $1$ to $3$. We use the stochastic optimization method Adam \cite{kingma2014adam} to solve the UNTIE objective function with the initial learning rate $10^{-3}$, the batch size $20$, and the max number of iterations $1,000$. For the parameters of the baseline methods, we take their recommended settings. 
UNTIE is implemented in Python 3.5 and Tensorflow r1.2, all experiments are conducted in a Windows 10 workstation with Intel i5-5300 U CPU@2.30GHz and 8GB memory.

\subsection{Evaluating the UNTIE Representation Quality}

\subsubsection{Reducing Heterogeneous Couplings Inconsistency}
Here, we evaluate how UNTIE effectively reduces the inconsistency during learning heterogeneous couplings. First, we quantitatively measure the inconsistency by the intra- and inter-coupling heterogeneity indicators. Then, we analyze the relations between UNTIE-enabled clustering results and the inconsistency indicated by these indicators.

The intra-coupling heterogeneity indicator ($I_{intra}$) measures the degree of heterogeneity in value distributions. It assigns a higher heterogeneity degree to couplings if each value in the couplings has more significant diverse distributions. Intuitively, if a value has multiple distributions, its representations in different distributions may be inconsistent with each other. The larger difference its distributions have (i.e., higher heterogeneity degree), the stronger inconsistency may exist. In our experiment, $I_{intra}$ compares the difference between distributions of a value per each ground-truth cluster, which is formalized as follows:
\begin{equation}
I_{intra} =\frac{ \sum\limits_{j=1}^{n_a}\sum\limits_{i=1}^{n_v^{(j)}}\text{NM}_{n_c}\left(\sqrt{\sum\limits_{k=1}^{n_{c}}\left(\frac{|g^{(j)}(\mathsf{v_i^{(j)}})\cap g^{(c)}(\mathsf{c}_k)|}{|g^{(j)}(\mathsf{v_i^{(j)}})|}\right)^2}\right)}{n_a},
\end{equation}
where $\mathsf{c}_k$ refers to the $k$-th cluster, $n_{c}$ is the number of ground-truth clusters, function $g(\cdot)$ has the same definition as in Eq. \eqref{eq:intra}, and function $\text{NM}_{n_c}$ normalizes a value to $[0,1]$ w.r.t. $n_c$ clusters, which is defined as
\begin{equation}
\text{NM}_{n_c}(x) = 1 - \frac{1 - x}{1 - \sqrt{\sum\limits_{k=1}^{n_{c}}\left(\frac{1}{n_{c}}\right)^2}}. 
\end{equation} 
$I_{intra}$ reflects the inconsistency within couplings caused by heterogeneous data distributions. $I_{intra}$ is large if each value has diverse distributions in each cluster, otherwise small. A larger $I_{intra}$ indicates a stronger inconsistency within a coupling. In extreme cases, $I_{intra}$ is $1$ when each value only appears in a ground-truth cluster, and $I_{intra}$ is $0$ when every value has the same distribution in every ground-truth cluster. 

To show the distribution of intra-coupling inconsistency, we illustrate the probability density of $I_{intra}$ on 25 testing data sets in Fig. \ref{fig:intra_heterogeneity_distribution} per kernel density estimation (KDE)\footnote{Here we treat the intra-coupling inconsistency of the 25 testing data sets as the random variable to calculate its KDE.}. The KDE smoothly estimates the probability density of $I_{intra}$ within its range. A larger density indicates a higher probability of an inconsistency degree. As shown in Fig. \ref{fig:intra_heterogeneity_distribution}, most of data sets have strong inconsistency within couplings, which indicates the necessity of eliminating inconsistency in learning heterogeneous couplings. 

\begin{figure}[!hbtp] \centering
	\includegraphics[width=0.75\columnwidth]{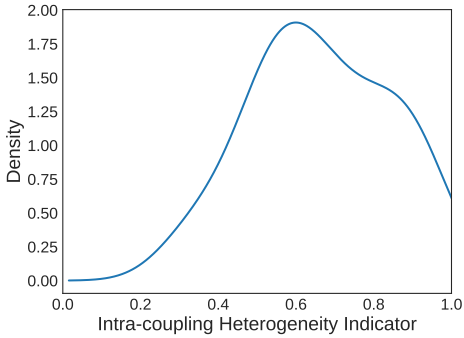}
	\caption{The Probability Density of Intra-coupling Heterogeneity Indicator per Kernel Density Estimation.}
	\label{fig:intra_heterogeneity_distribution}
\end{figure}

The inter-heterogeneity indicator ($I_{inter}$) represents the degree of difference between couplings. Intuitively, the increase of difference between couplings (i.e., higher heterogeneity degree) may increase the inconsistency between categorical data representations. In our experiment, $I_{inter}$ calculates the distance between coupling matrices to reflect the inter-coupling inconsistency. Here, a coupling matrix is a similarity matrix of the values in an attribute, which is defined by a coupling learning method. For UNTIE, a coupling matrix is calculated by the Euclidean distance w.r.t. a coupling vector representation of values (e.g., the intra-attribute coupling value representation as Eq. \eqref{eq:intra} or the inter-attribute coupling value representation as Eq. \eqref{eq:inter}). For CDE, a coupling matrix is calculated by the Euclidean distance w.r.t. a value clustering representation. For COS and DILCA, the coupling matrices are their value similarity matrices before weighted integration. Denoting the $k$-th coupling matrix of the $z$-th attribute as $\mathbf{C}^{(z,k)}$, the inter-heterogeneity indicator is defined as
\begin{equation}
I_{inter} = \sqrt{\frac{\sum\limits_{k=1}^{n_m^{(z)}}\sum\limits_{l=1}^{n_m^{(z)}}\frac{\sum\limits_{i=1}^{n_v^{(z)}}\sum\limits_{j=1}^{n_v^{(z)}}\left(\mathbf{C}^{(z,k)}_{ij} - \mathbf{C}^{(z,l)}_{ij}\right)^2}{n_v^{(z)2}}}{n_m^{(z)2}}},
\end{equation} 
$n_m^{(z)}$ is the number of couplings for the $z$-th attribute, and $\mathbf{C}_{ij}^{z,k}$ is the $(i,j)$-th entry of $\mathbf{C}_{ij}$. $I_{inter}$ differs for different coupling learning methods. A larger $I_{inter}$ indicates a stronger inconsistency between couplings. $I_{inter}$ is large if each coupling matrix is largely different from the others, and otherwise small. In an extreme case, $I_{inter}$ is $0$ when all coupling matrices are the same. 

The probability density of $I_{inter}$ on 25 testing data sets is shown in Fig. \ref{fig:inter_heterogeneity_distribution} per KDE, which shows UNTIE and CDE involve a higher degree of inter-coupling heterogeneity compared to DILCA and COS. On one hand, the higher degree of inter-coupling heterogeneity provides richer information for representation. On the other hand, the higher degree of inter-coupling heterogeneity may contain inconsistency representation with higher probability.

\begin{figure}[!hbtp] \centering
	\includegraphics[width=0.75\columnwidth]{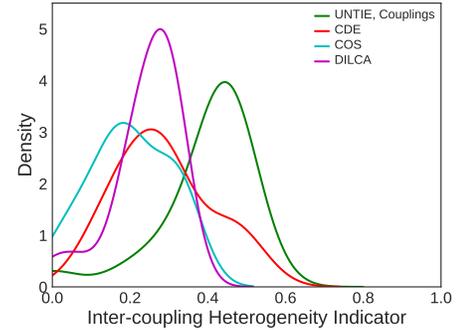}
	\caption{The Probability Density of Inter-coupling Heterogeneity Indicator per Kernel Density Estimation.}
	\label{fig:inter_heterogeneity_distribution}
\end{figure}

To evaluate whether UNTIE can effectively resolve the coupling inconsistency problem, we analyze the UNTIE-enabled clustering performance at different inconsistency levels indicated by the intra-/inter-coupling heterogeneity indicators. Considering the imbalanced distributions of these indicators as shown in Fig. \ref{fig:intra_heterogeneity_distribution} and Fig. \ref{fig:inter_heterogeneity_distribution}, we set  5 inconsistency levels, each of them contains the same number of data sets, according to the intra-/inter-coupling heterogeneity indicators. For example, the first level contains $20\%$ data sets with the smallest values of $I_{intra}$/$I_{inter}$, while the fifth level contains $20\%$ data sets with the largest values of $I_{intra}$/$I_{inter}$. We calculate the performance of a heterogeneous coupling learning method at an inconsistency level by averaging its enabled clustering rank in Table \ref{tab:clustering} on the data sets with the inconsistency level. We illustrate the relations between the UNTIE-enabled clustering performance and the inconsistency level in Fig. \ref{fig:intra_heterogeneity} and Fig. \ref{fig:inter_heterogeneity} per $I_{intra}$ and $I_{inter}$, respectively. 

As shown in Fig. \ref{fig:intra_heterogeneity}, UNTIE significantly outperforms its competitors on data sets at the inconsistency levels 2 to 5 in terms of $I_{intra}$, which have strong intra-coupling heterogeneity as shown in Fig. \ref{fig:intra_heterogeneity_distribution}. While other methods ignore the intra-coupling heterogeneity, UNTIE captures it by learning value weights in kernel spaces w.r.t. Eq. \eqref{eq:trans}. As a result, UNTIE reduces the inconsistency and achieves better performance. In contrast, on the data sets with the inconsistency level 1, UNTIE dose not show superiority. This is because these data sets do not have much coupling inconsistency, which further demonstrates that UNTIE gains better performance by reducing the inconsistency between heterogeneous couplings.

\begin{figure}[!hbtp] \centering
	\includegraphics[width=0.75\columnwidth]{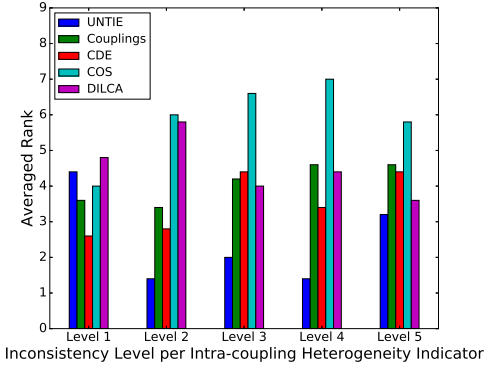}
	\caption{The UNTIE-enabled Clustering Performance on Data Sets with Different Inconsistency Levels per the Intra-heterogeneity Indicator.}
	\label{fig:intra_heterogeneity}
\end{figure}

In Fig. \ref{fig:inter_heterogeneity}, the UNTIE-enabled clustering performs much better than that enabled by existing heterogeneous coupling learning methods, especially on the data with higher inconsistency levels. Since the couplings represented by UNTIE may also incur inconsistency as shown in Fig. \ref{fig:inter_heterogeneity_distribution}, UNTIE well reduces the inconsistency to guarantee a good representation. This reduction is mainly contributed by the multiple kernels used in UNTIE, which enables to capture the fitness of couplings for different distributions. In Fig. \ref{fig:inter_heterogeneity}, UNTIE shows similar performance by simply combining multiple couplings on data sets with the inconsistency levels 2 and 3. This phenomenon indicates the heterogeneous couplings learned in our method capture much richer data information compared to other methods and enable better performance when the inconsistency between these couplings is not so substantial. However, with the inconsistency increase, simply combining these couplings may worsen the results, as shown on the data sets with the inconsistency levels 4 and 5 in Fig. \ref{fig:inter_heterogeneity}. 
  
\begin{figure}[!hbtp] \centering
	\includegraphics[width=0.75\columnwidth]{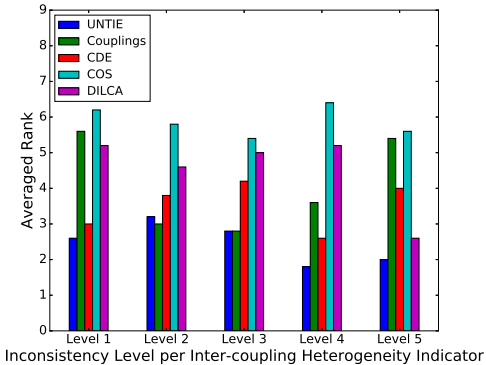}
	\caption{The UNTIE-enabled Clustering Performance on Data Sets with Different Inconsistency Levels per the Inter-heterogeneity Indicator.}
	\label{fig:inter_heterogeneity}
\end{figure}

\subsubsection{Goodness of the UNTIE-enabled Metric}

\begin{figure*}[htbp] \centering
\subfigure[$(\epsilon,\gamma)$-curve on Mofn3710.] { \label{fig:e}
\includegraphics[width=0.45\columnwidth]{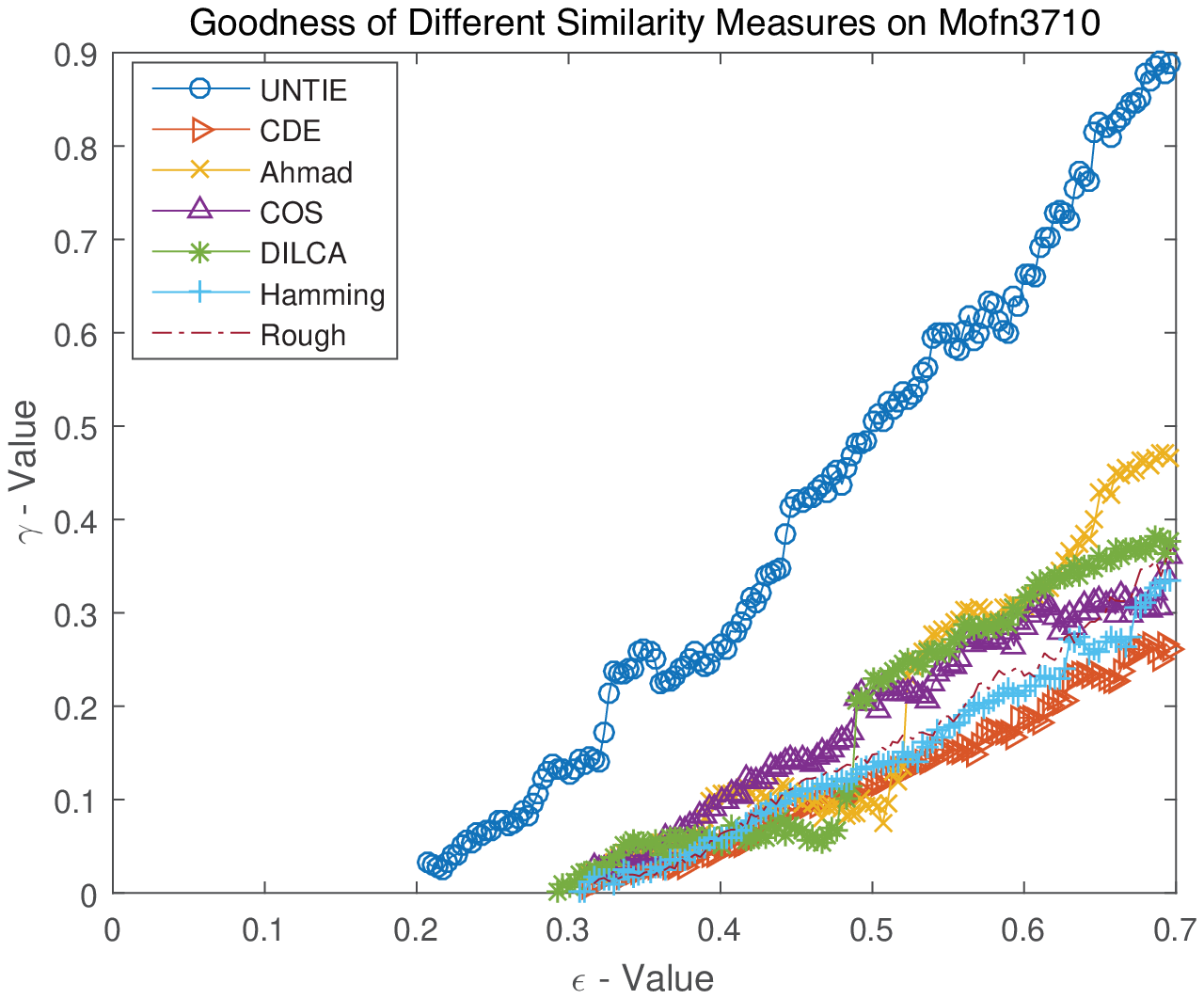}
}
\subfigure[$(\epsilon,\gamma)$-curve on Dermatology.] { \label{fig:f}
\includegraphics[width=0.45\columnwidth]{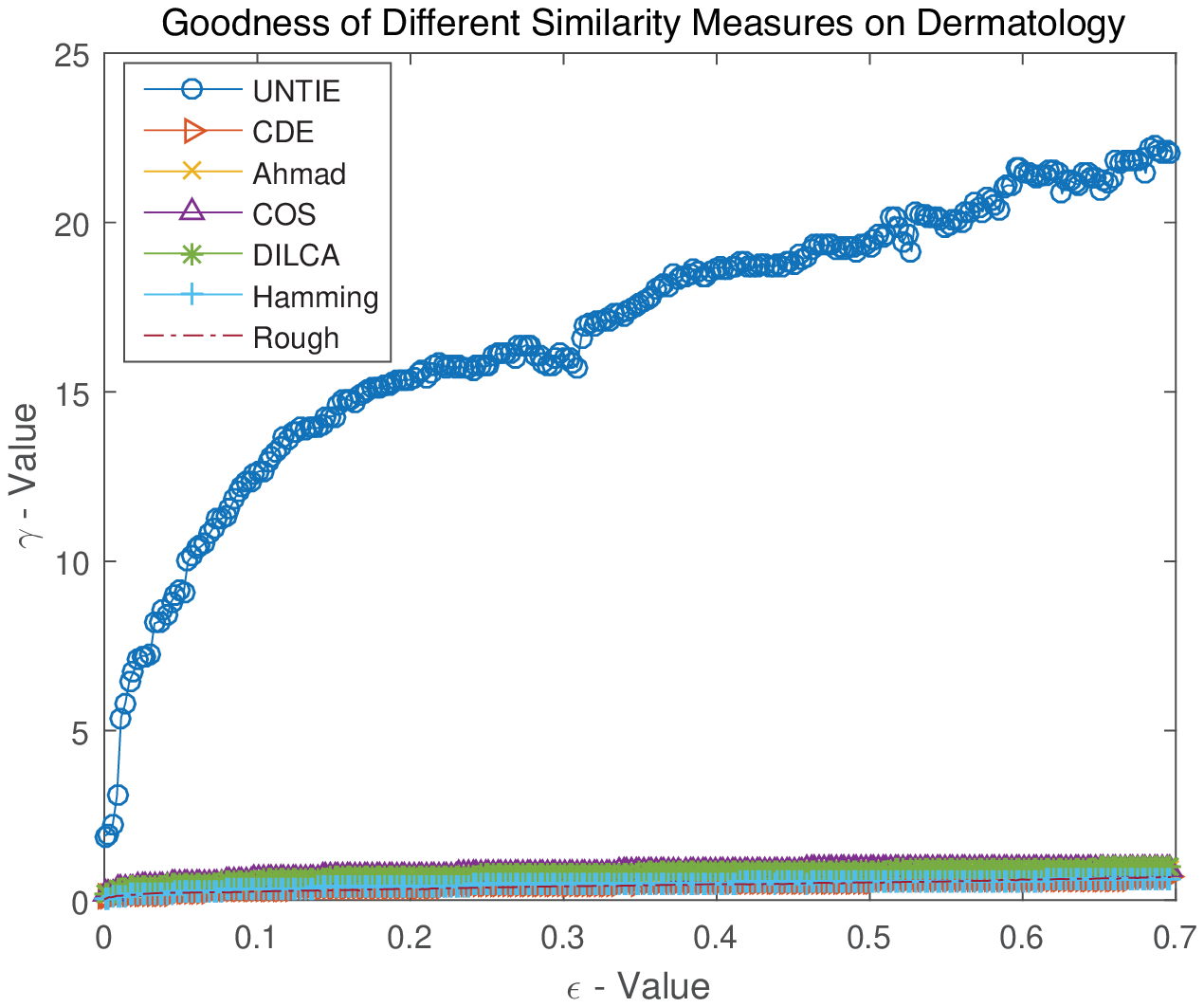}
}
\subfigure[$(\epsilon,\gamma)$-curve on Crx.] { \label{fig:b}
\includegraphics[width=0.45\columnwidth]{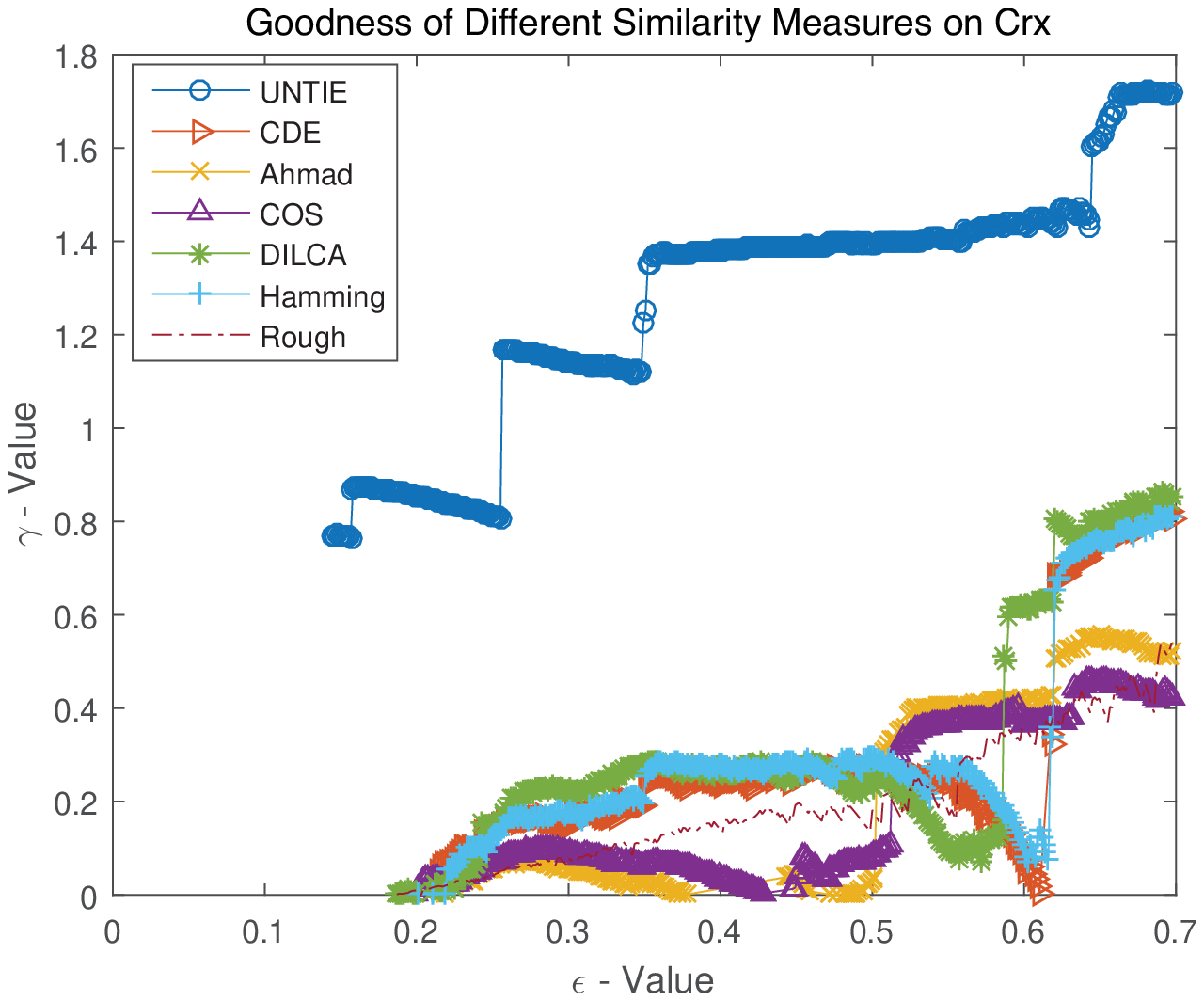}
}
\subfigure[$(\epsilon,\gamma)$-curve on Breastcancer.] { \label{fig:a}
\includegraphics[width=0.45\columnwidth]{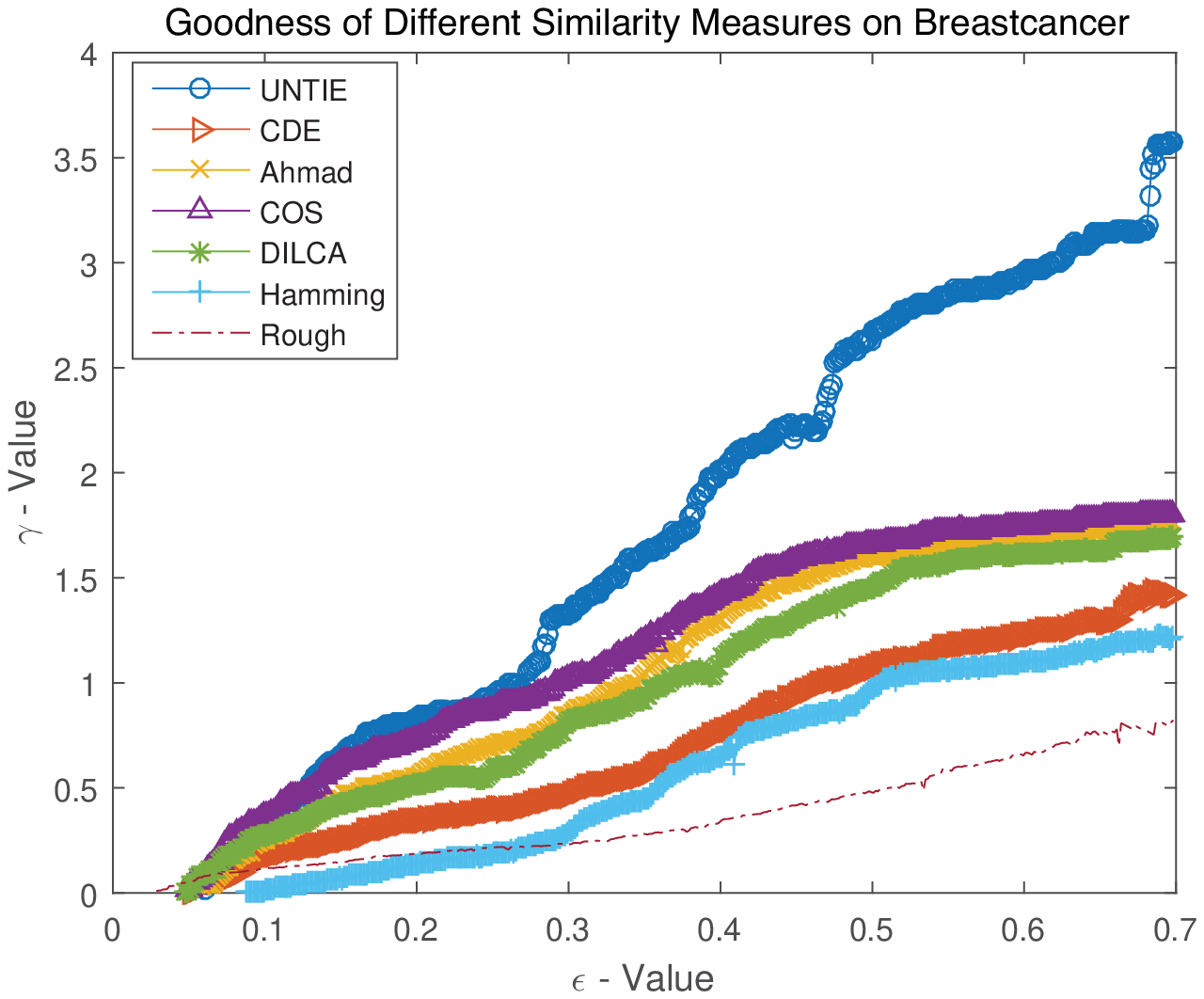}
}
\caption{The $(\epsilon,\gamma)$-curves of Different Transformed Similarity Measures: A better metric yields a better result.}
\label{fig:goodnessCurve}
\end{figure*}

Per Definition 4 in \cite{balcan2008theory}, a similarity function $Q$ (e.g., the similarity function $s(\cdot,\cdot)$ in Eq. \eqref{eq:wrapper_kernel}) is strongly $(\epsilon,\gamma)$-good  for a learning problem $P$ if at least a $1 - \epsilon$ probability mass of objects $\mathsf{o}$ satisfies:
\begin{equation*}
\begin{aligned}
    &E_{\mathsf{o},\mathsf{o}'\sim \Phi}[Q(\mathsf{o},\mathsf{o}')|c(\mathsf{o}) = c(\mathsf{o}')] \\
    &\geqslant E_{\mathsf{o},\mathsf{o}'\sim \Phi}[Q(\mathsf{o},\mathsf{o}')|c(\mathsf{o}) \neq c(\mathsf{o}')] + \gamma,
\end{aligned}
\end{equation*}
where $E$ refers to the expected value on distribution $\Phi$ in the learning problem $P$, and $c(\mathsf{o})$ refers to the class of $\mathsf{o}$. In this criterion, $\epsilon$ indicates the proportion of objects whose averaged intra-class similarity is not $\gamma$ degrees larger than their averaged inter-class similarity value. With the same $\gamma$, the smaller $\epsilon$ reflects the better similarity function for the learning problem $P$. Here, $\gamma$ indicates to what extent the intra-class similarity is larger than the inter-class similarity on the $1-\epsilon$ proportion of data which is best separated. When $\epsilon$ is fixed, the larger $\gamma$ reflects the better similarity function. The intuition of this criterion is that a good similarity measure should effectively differ data in the same class from those in other classes. More importantly, a $(\epsilon,\gamma)$-good similarity can induce a classifier with a bounded error (see more details in Theorem 1 of \cite{balcan2008theory}). 

In this experiment, we compare the UNTIE's similarity function $s(\cdot,\cdot)$ (defined in Eq. \eqref{eq:wrapper_kernel}) with the similarity functions in the other similarity-based representation methods per the $(\epsilon,\gamma)$-good criterion. For the CDE-learned vector representation, we reverse the Euclidean distance to measure the similarity between objects. Since different $\epsilon$ values may correspond to different $\gamma$ values, we draw the $(\epsilon,\gamma)$-curves to demonstrate the quality of the learned metric and the compared methods. With the same $\epsilon$, the better metric incurs a greater $\gamma$. In other words, the better metric yields a higher curve in the $(\epsilon,\gamma)$-curve. In this experiment, we draw the $(\epsilon,\gamma)$-curves on four data sets, i.e., 
Mofn3710, Dermatology, Crx and Breastcancer. The results are shown in Fig. \ref{fig:goodnessCurve}. It should be noted that, we only focus on $\epsilon$ that can guarantee a non-negative margin, i.e., $\gamma \geqslant 0$. Therefore, Fig. \ref{fig:goodnessCurve} only displays a part of the $(\epsilon,\gamma)$-curve, in which $\gamma \geqslant 0$.  

The results illustrate that UNTIE is better than its competitors in terms of the $(\epsilon,\gamma)$-good criterion. The results also reveal the insight behind the clustering performance in Table \ref{tab:clustering}. For data Dermatology and Crx, the UNTIE-enabled clustering has much higher F-score than others since UNTIE yields larger margins between different classes, which is reflected by the $(\epsilon,\gamma)$-good in Figs. \ref{fig:f} and \ref{fig:b}. For Mofn3710, all methods obtain low F-score, and the UNTIE-enabled clustering achieves the same result as CDE which is only slightly better than the other competitors. The reason is shown in Fig. \ref{fig:e}, where nearly $20\%$ of the Mofn3710 data cannot be well separated by UNTIE ($\gamma$ is 0 when $\epsilon$ is smaller than 0.2) while nearly $30\%$ of that cannot be well separated by its competitors. 
For other data sets, e.g., Breastcancer, all methods achieve good results since the $(\epsilon, \gamma)$-good criterion indicates that these methods can well separate the data sets.

\begin{figure*}[!hbtp] \centering
\subfigure[UNTIE-represented Distributions.] { 
\includegraphics[width=0.55\columnwidth]{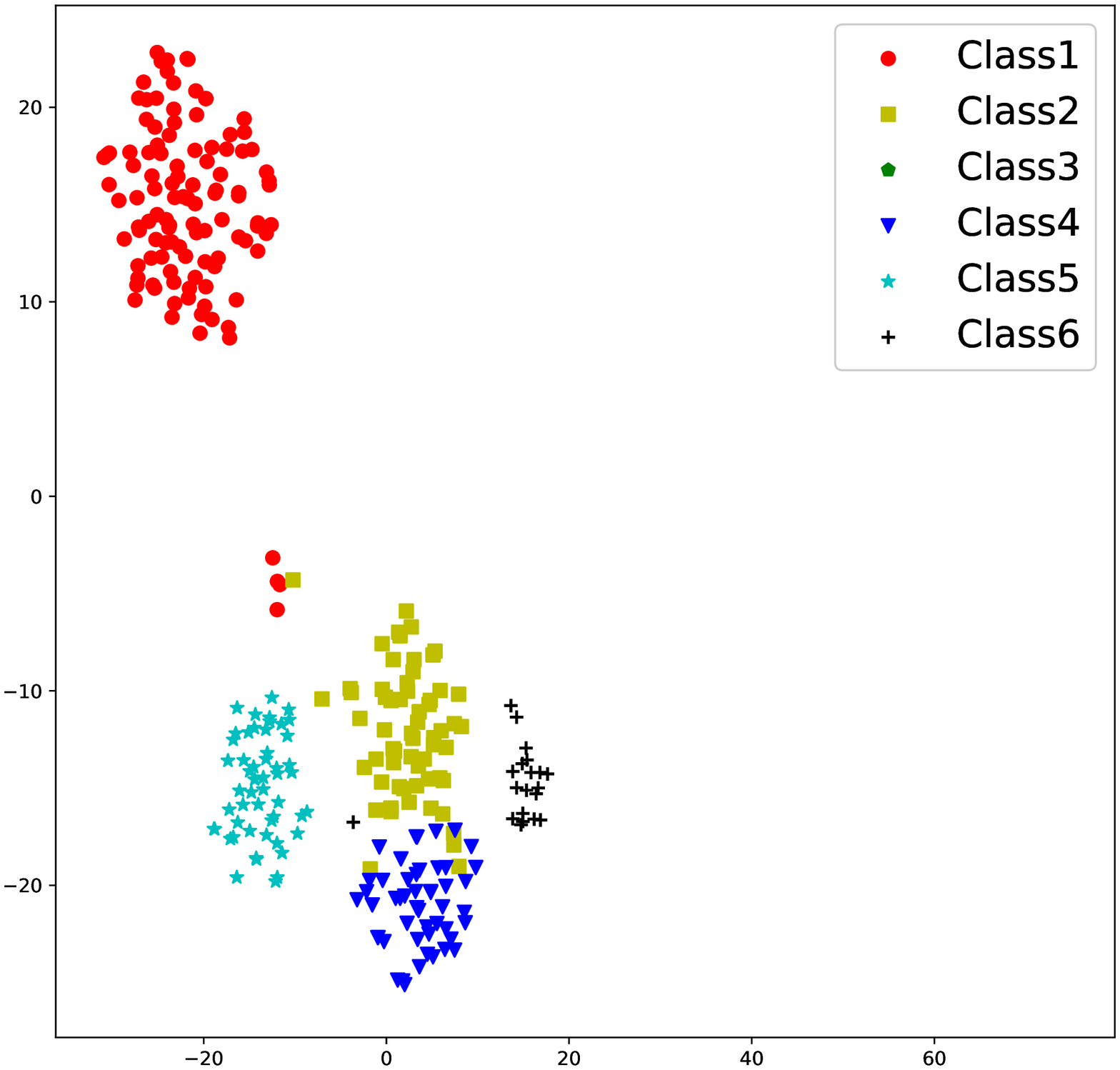}
}
\subfigure[CDE-represented Distributions.] { 
\includegraphics[width=0.55\columnwidth]{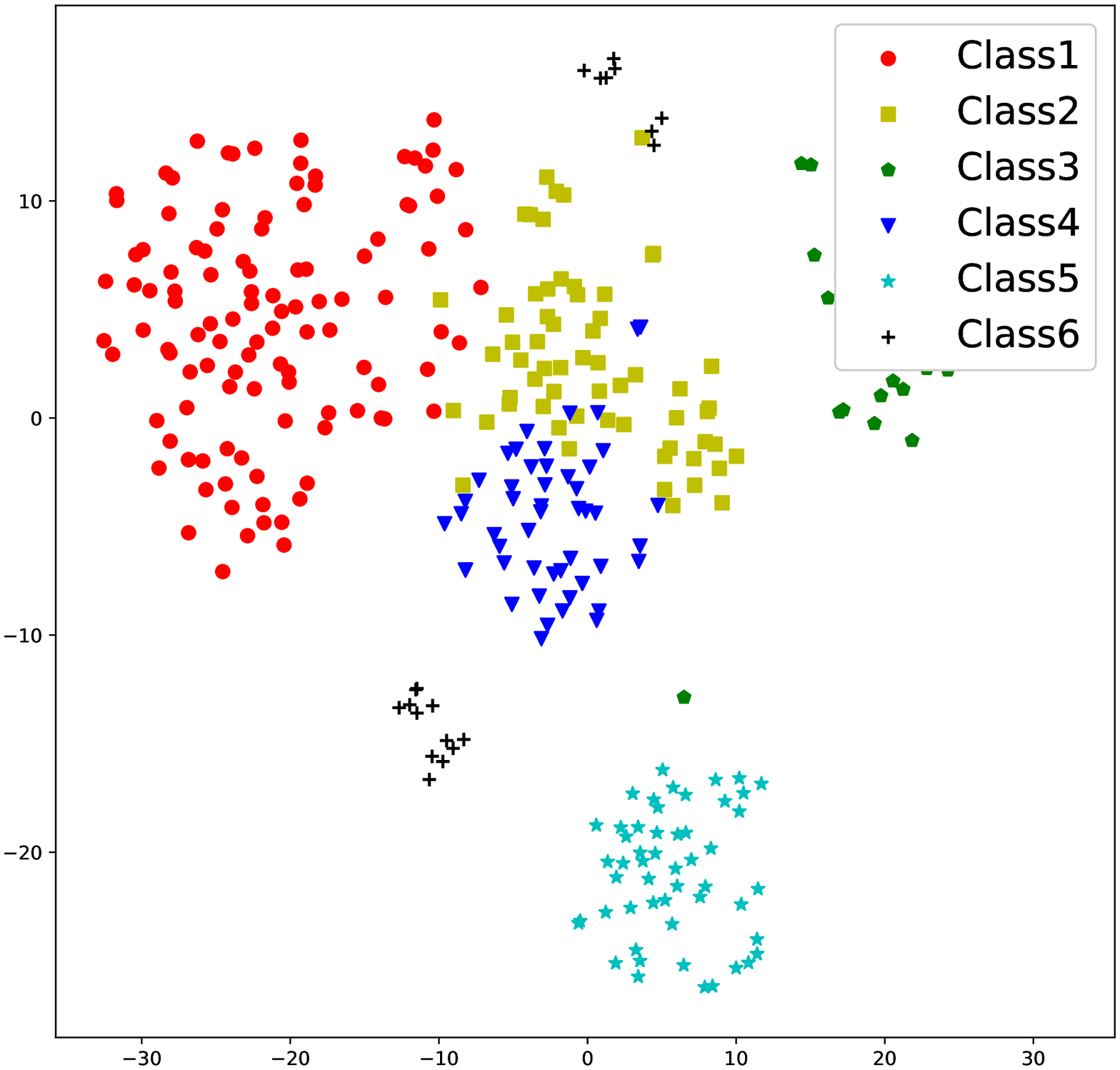}
}
\subfigure[BiGAN\_WD-represented Distributions.] { 
\includegraphics[width=0.55\columnwidth]{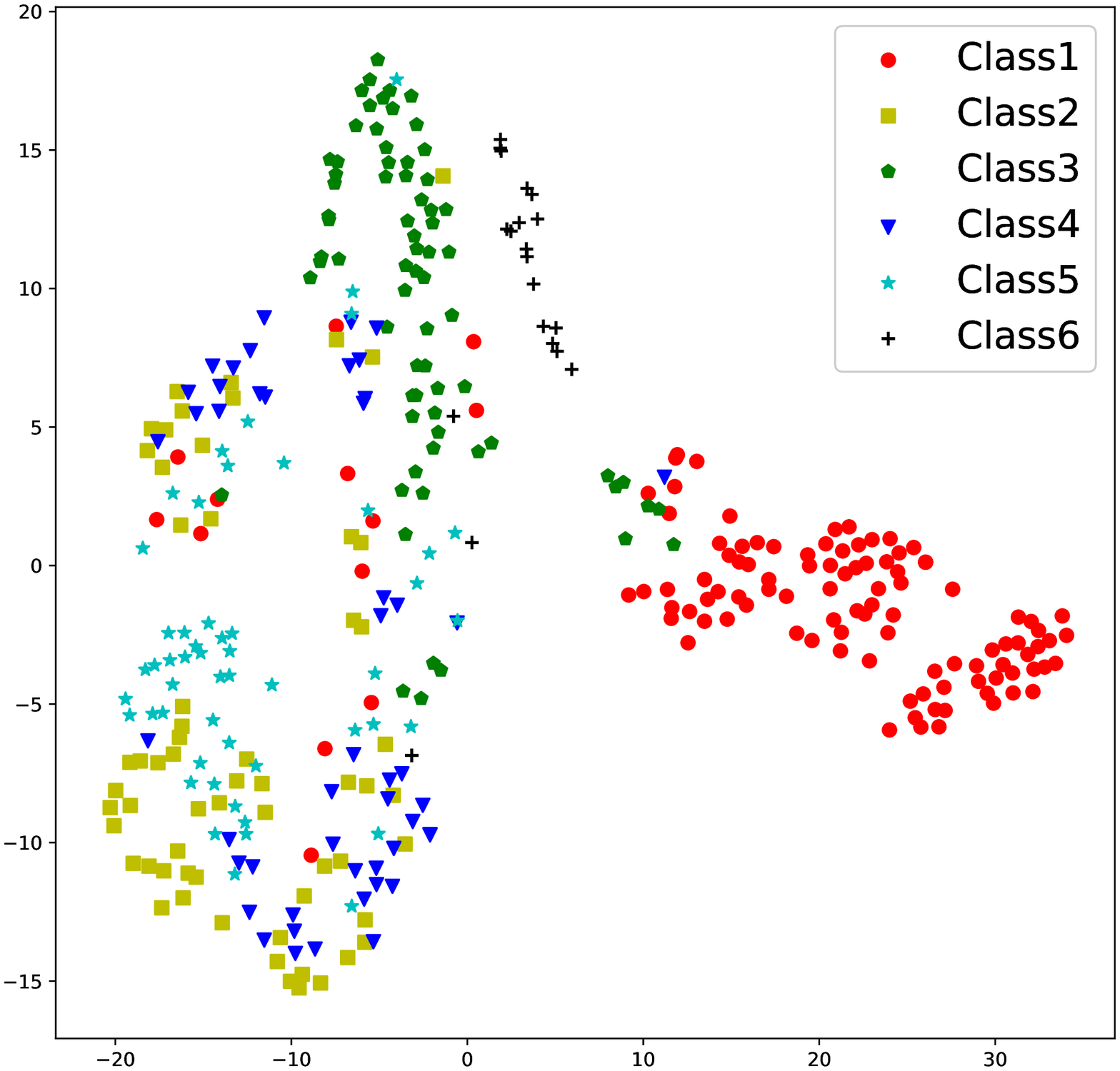}
}
\caption{The Visualization of Different Representation Methods on Dermatology. The UNTIE-represented data shows clearer boundaries between different clusters. The plotted two-dimensional embedding is converted from high-dimensional representation by t-SNE. Different symbols refer to different data clusters per the ground truth.}
\label{fig:visualize}
\end{figure*}

\subsubsection{Visualization of UNTIE Representations}

We visualize the separability of different representations. The UNTIE, CDE and BiGAN\_WD-represented data is converted from high-dimensional representation to two-dimensional embedding by the t-Distributed Stochastic Neighbor Embedding (t-SNE) \cite{maaten2008visualizing}. The basic idea of t-SNE is to minimize the  Kullback-Leibler divergence between the joint probabilities of the low-dimensional embedding and the high-dimensional representation. 
Fig. \ref{fig:visualize} shows the visualization of these different representation methods on Dermatology. The UNTIE-represented data has a more compact distribution and leads to clearer boundaries between different clusters, compared to that from CDE and BiGAN\_WD-represented data. It qualitatively demonstrates that the UNTIE representation is more separable and suitable for downstream tasks such as clustering and classification. This is because UNTIE learns the representation by optimizing the objective function Eq. \eqref{eq:obj}, i.e., by minimizing the distance between objects within a cluster and maximizing the distance between objects in different clusters.

\begin{figure}[!hbtp] \centering
\subfigure[Training Loss on Hepatitis.] { 
\includegraphics[width=0.28\columnwidth]{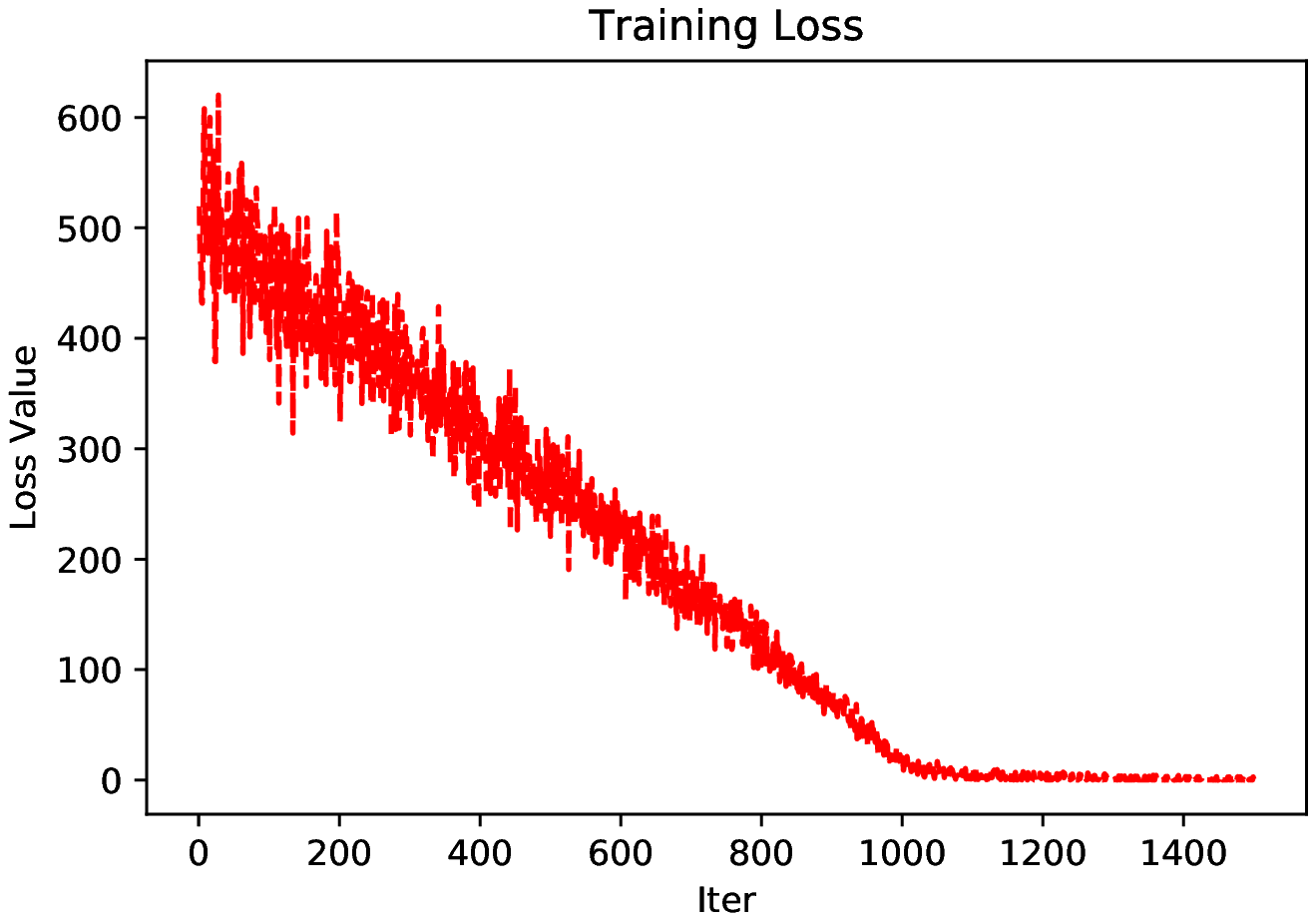}
}
\subfigure[Training Loss on Housevotes.] { 
\includegraphics[width=0.28\columnwidth]{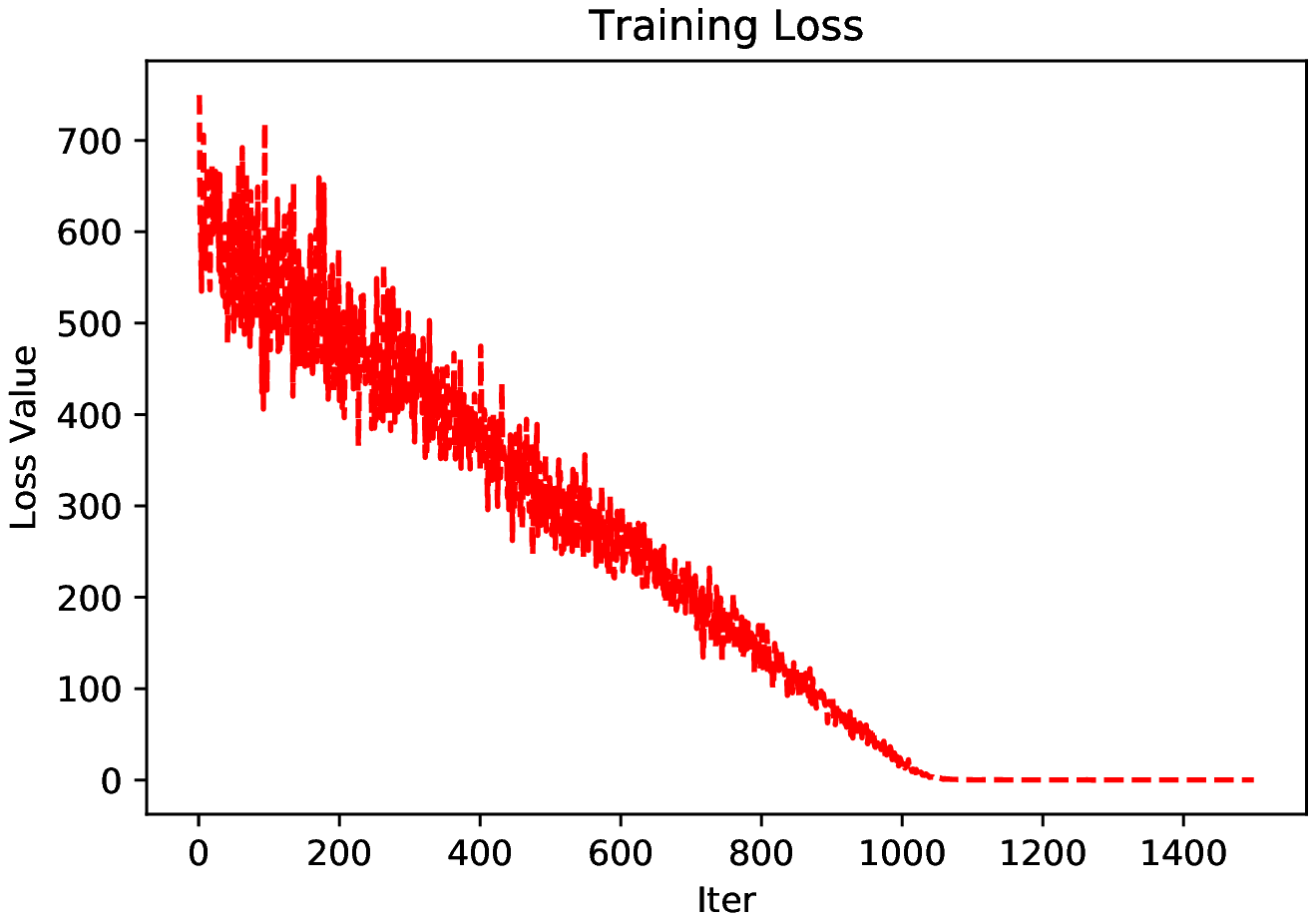}
}
\subfigure[Training Loss on Mofn3710.] { 
\includegraphics[width=0.28\columnwidth]{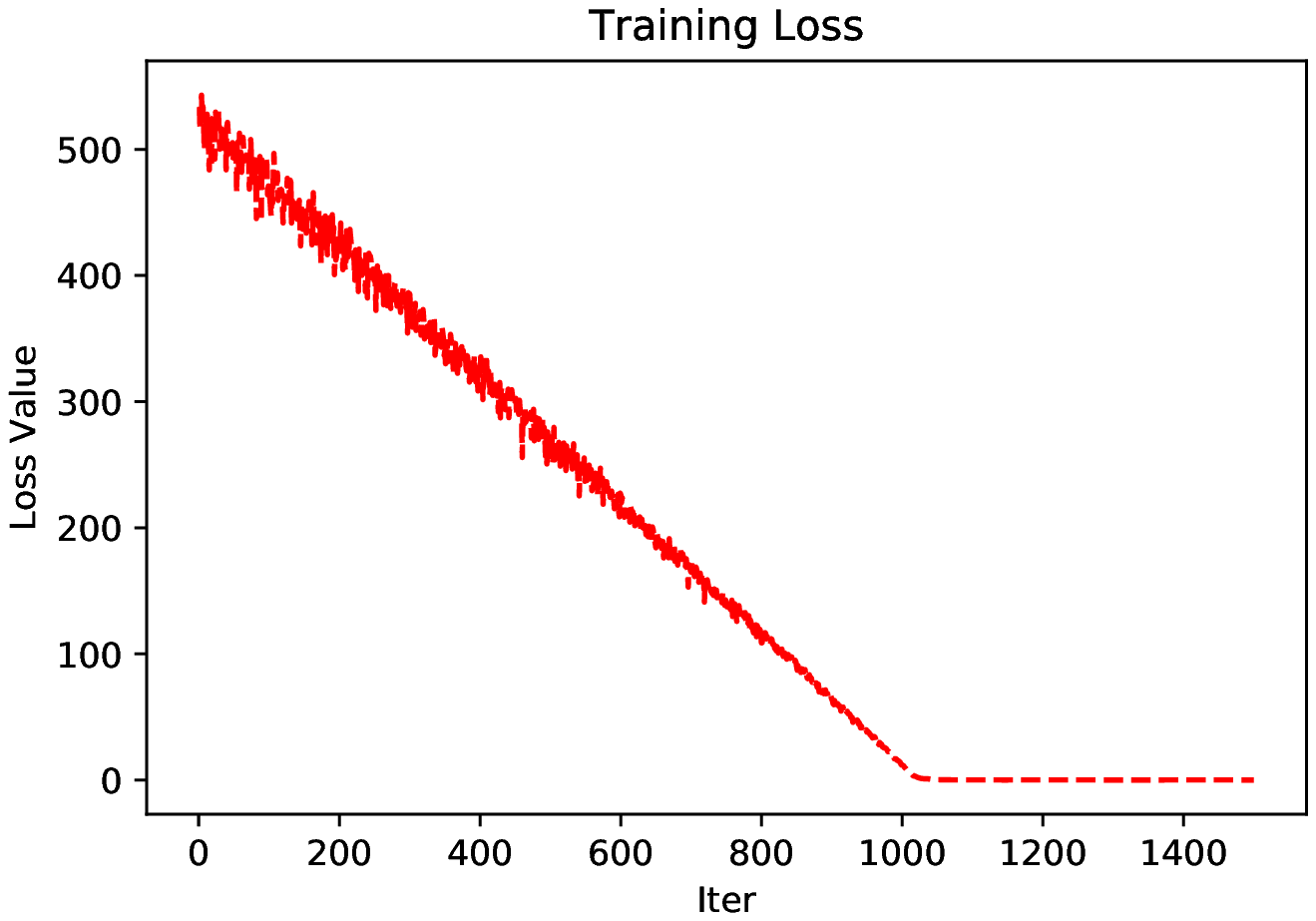}
}
\subfigure[Training Loss on Dermatology.] { 
\includegraphics[width=0.28\columnwidth]{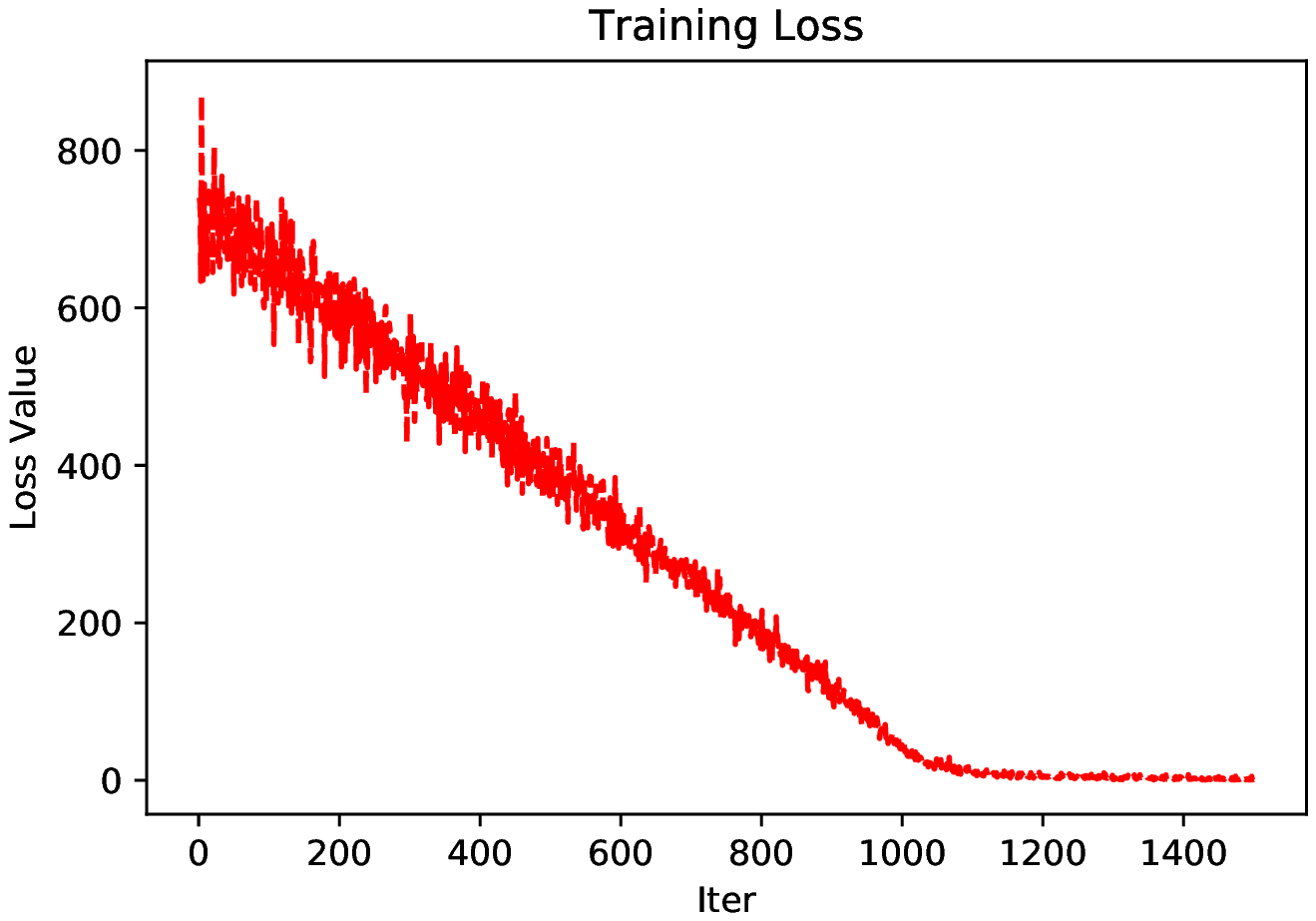}
}
\subfigure[Training Loss on Crx.] {
\includegraphics[width=0.28\columnwidth]{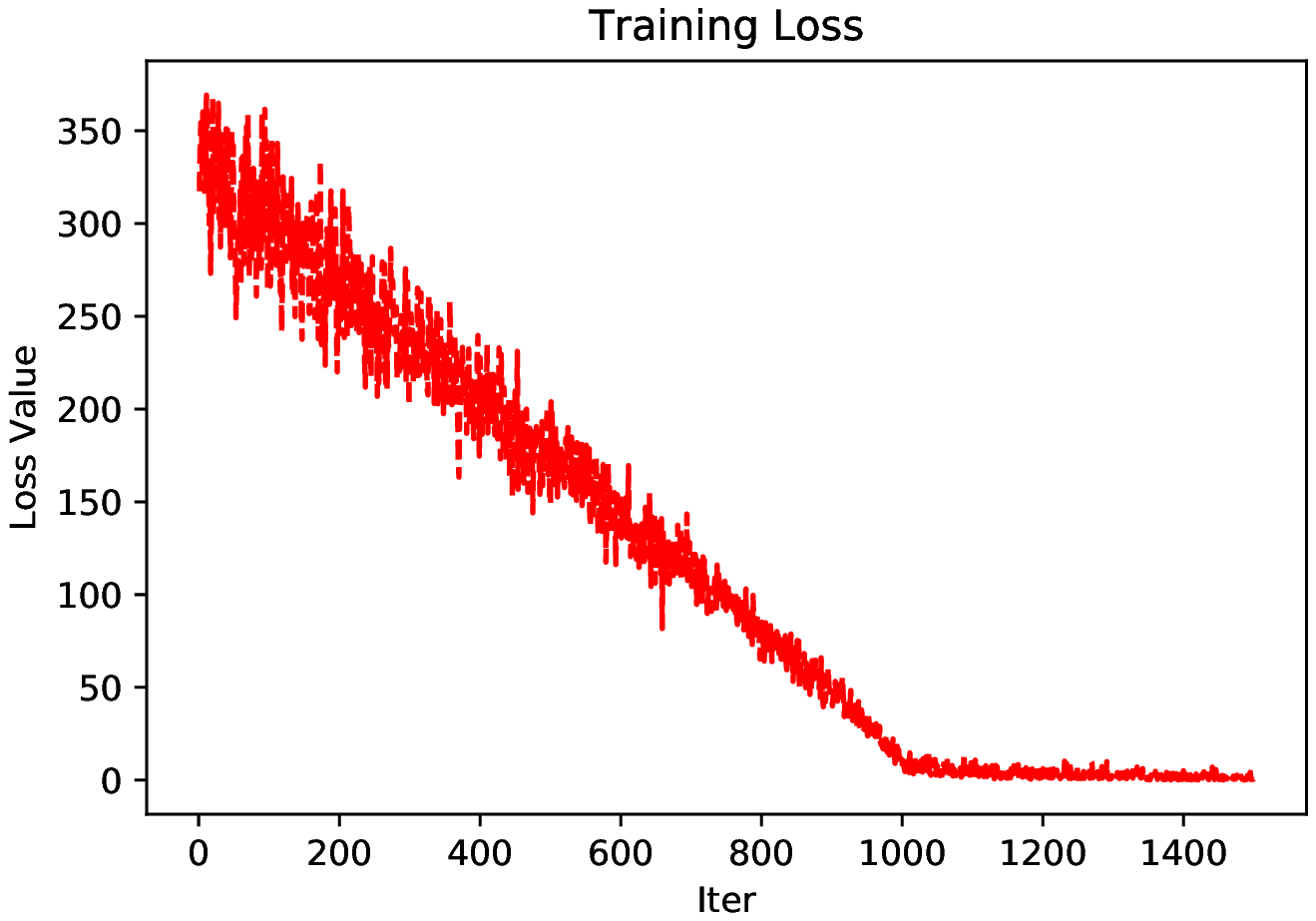}
}
\subfigure[Training Loss on Breastcancer.] { 
\includegraphics[width=0.28\columnwidth]{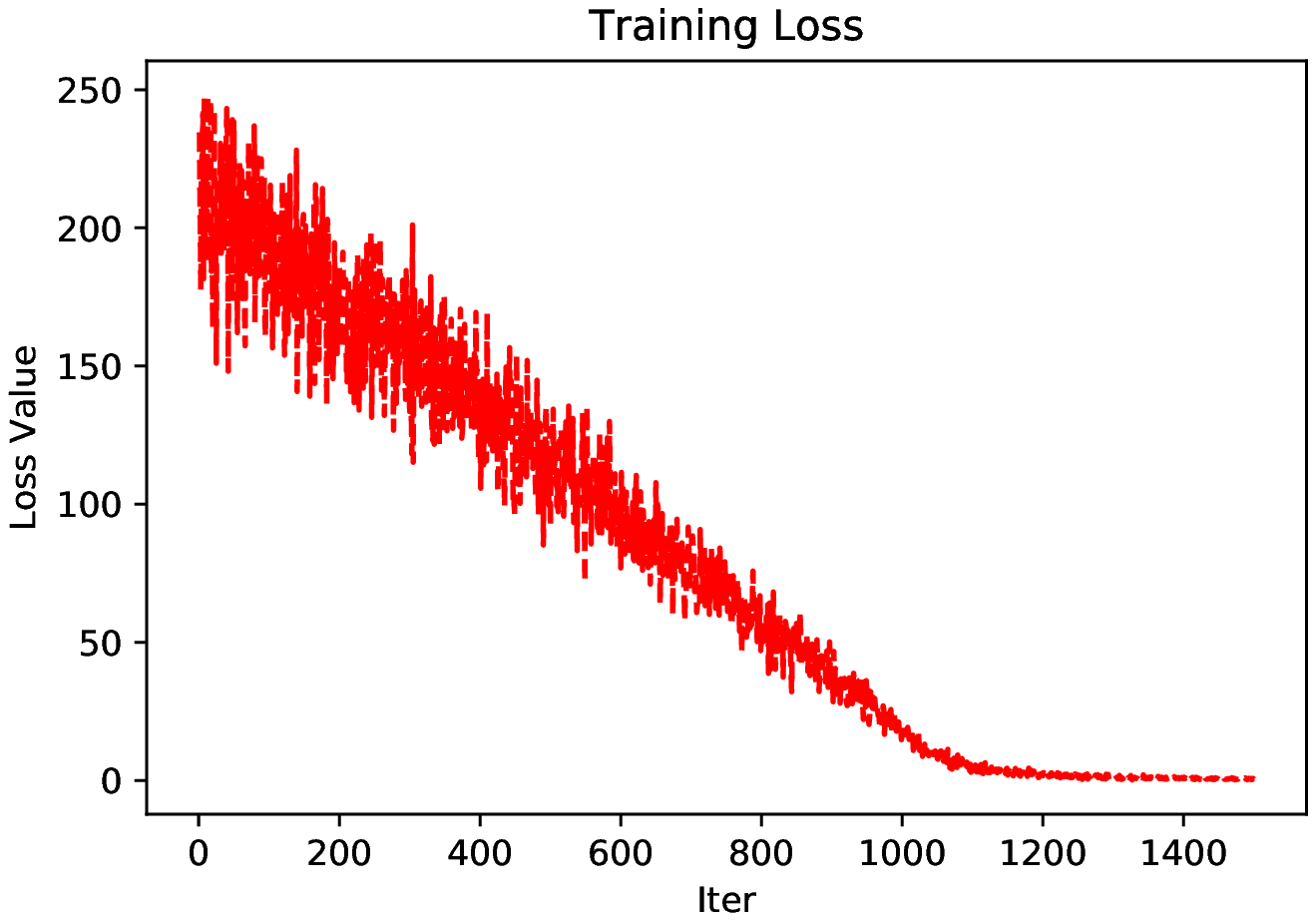}
}
\caption{The UNTIE's Training Loss on Different Data Sets. The stochastic optimization method for UNTIE is Adam \cite{kingma2014adam} with the initial learning rate $10^{-3}$ and the batch size $20$. The X-axis refers to the number of iterations, and y-axis refers to the loss value of UNITE's objective function Eq. \eqref{eq:omega}.}
\label{fig:convergence}
\end{figure}

\begin{figure*}[hbtp] \centering
\subfigure[Time Cost w.r.t. Number of Objects.] { \label{fig:object_time}
\includegraphics[width=0.45\columnwidth]{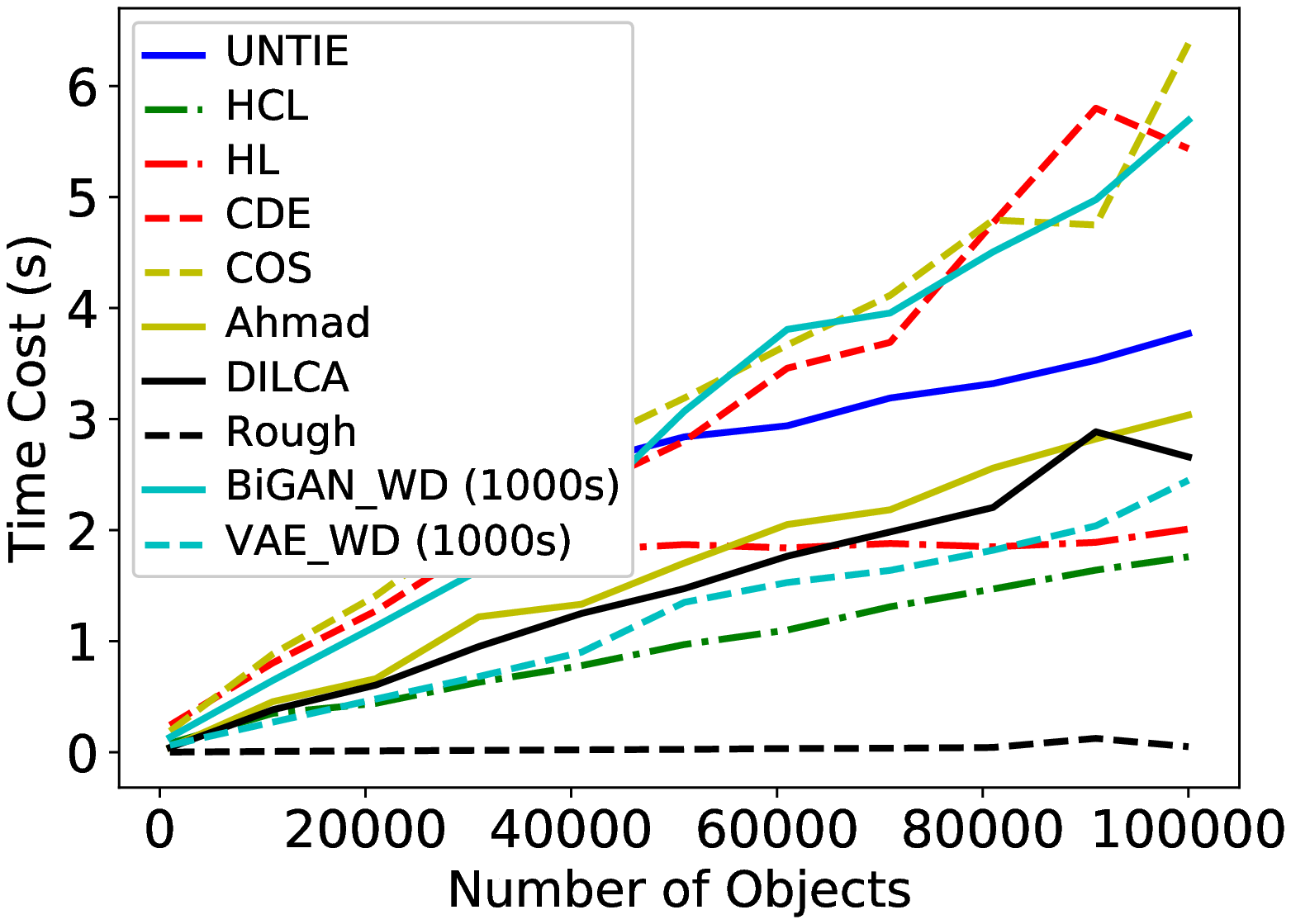}
}
\subfigure[Time Cost w.r.t. Number of Attributes.] { \label{fig:attribute_time}
\includegraphics[width=0.45\columnwidth]{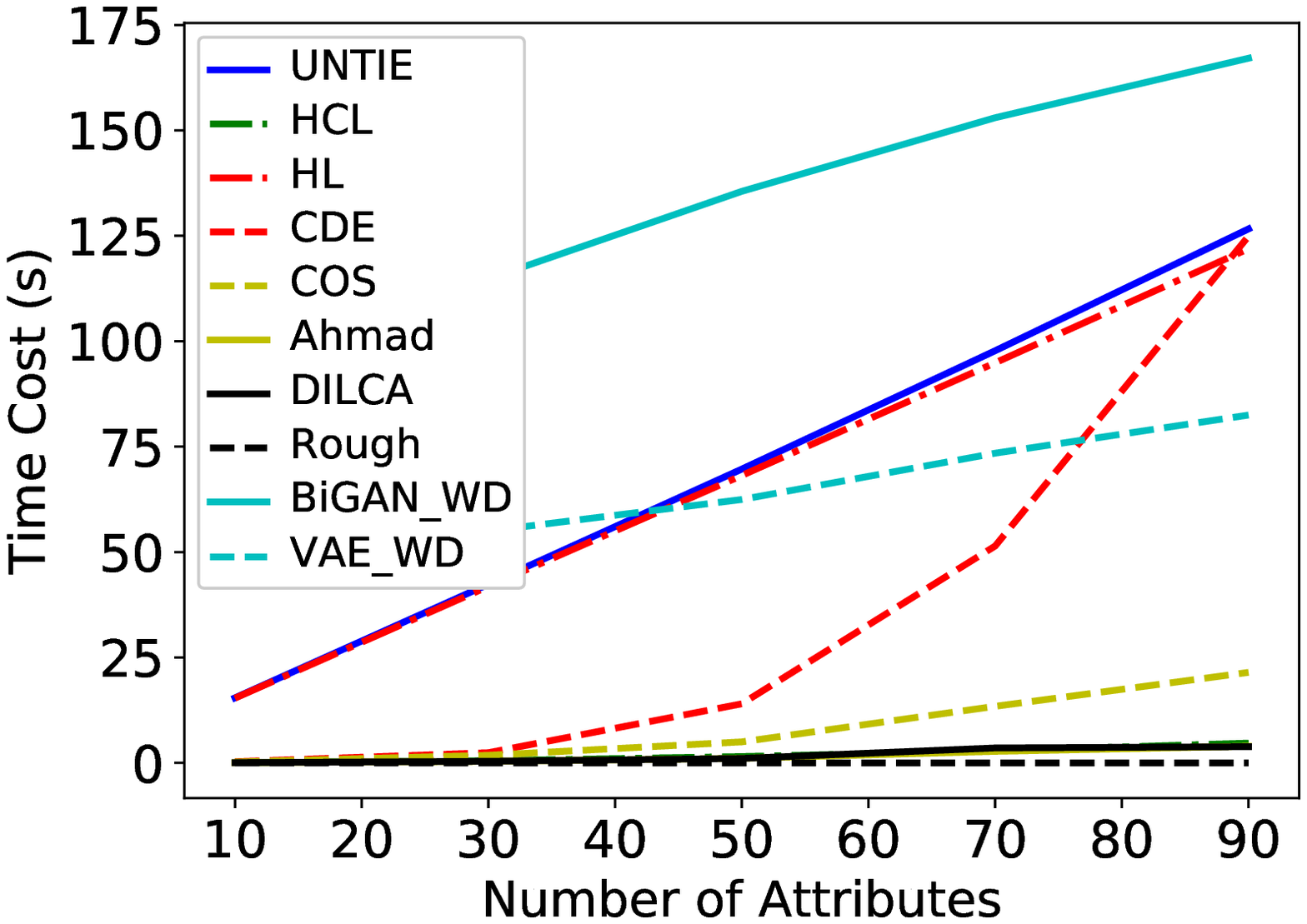}
}
\subfigure[Time Cost w.r.t. Number of Attribute Values.] { \label{fig:value_time}
\includegraphics[width=0.45\columnwidth]{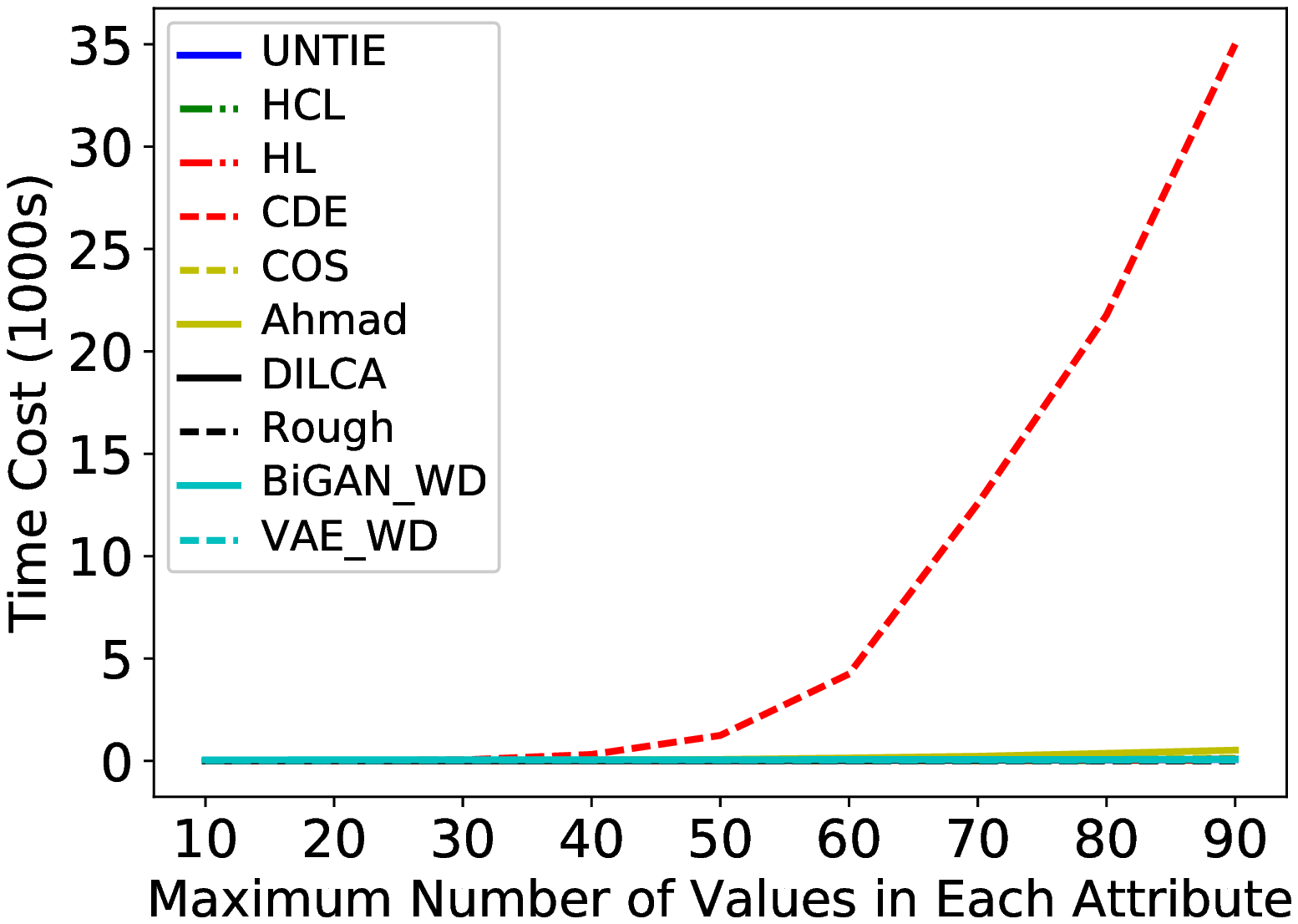}
}
\subfigure[Time Cost w.r.t. Number of Kernels.] {\label{fig:kernelTimeCost}
\includegraphics[width=0.45\columnwidth]{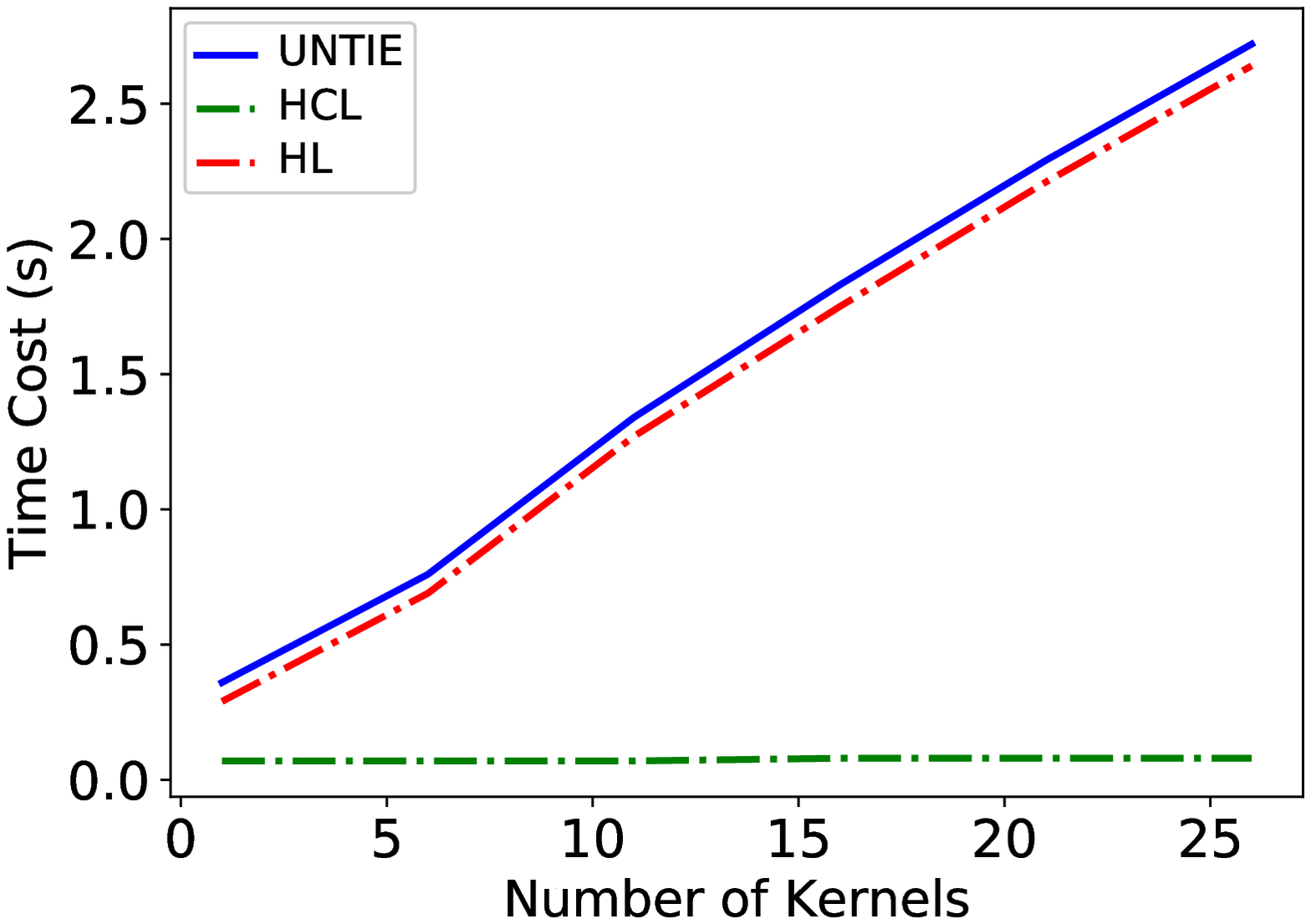}
}

\caption{The UNTIE Computational Cost w.r.t. Data Factors: Object Number $n_o$,  Attribute Number $n_a$, and Maximum Number of Attribute Values $n_{mv}$. The solid line refers to the total time cost of UNTIE. The dotted line refers to the time cost of building the coupling spaces. The star line refers to the time cost of the heterogeneity learning.}
\label{fig:timeCost}
\end{figure*}

\subsection{Evaluating the UNTIE Effectiveness}

\subsubsection{UNTIE-enabled Clustering}

The vector-based representations learned by methods UNTIE, CDE, BiGAN\_WD and VAE\_WD are incorporated into K-means, which is probably the most popular clustering method and is sensitive to a distance measure. The representations learned by similarity-based representation methods, including COS, DILCA, Ahmda, Rough and Hamming, are fed into K-modes, which is the most commonly used clustering method for categorical data. To evaluate the performance of heterogeneity learning, we compare UNTIE with its variant which concatenates the representations in the coupling spaces without heterogeneity learning, denoting as Couplings, which is incorporated into K-means. 

The comparison results are shown in Table \ref{tab:clustering}. The best results are highlighted in bold and $\Delta$ is the ratio of UNTIE's improvement over the best results of other measures. On half of the data sets, UNTIE performs significantly better than the compared methods. For example, the F-score improves 51.72\% on DNAnominal and 30.75\% on Dermatology compared to the best-performing methods DILCA and COS. On the other half of the data sets, UNTIE achieves the same or comparable results to other methods. For example, the F-scores of UNTIE and CDE are both 56.56\% on Mofn3710 and 54.8\% on Tictactoe. 
UNTIE effectively captures the intrinsic categorical data characteristics by revealing the value and attribute couplings and the heterogeneity within and between these couplings to induce the representation. These embedded characteristics guarantee the effectiveness of UNTIE representations, ensuring that the UNTIE-enabled clustering can generally achieve better results than others. It is noted that BiGAN\_WD, and VAE\_WD perform badly on most of data sets except Mofn3710 where BiGAN\_WD achieves the highest.

\begin{table*}[htbp]
  \centering
  \footnotesize
  \caption{Clustering F-Score ($\%$) with Different Embedding Methods: The value-based representations are fed into k-means and the similarity-based representations are fed into k-modes to get the clustering results. The best results are highlighted in bold. $\Delta$ indicates the UNTIE's improvement over the best results of the other measures. The averaged rank of a method over all data sets with significant difference from others w.r.t. the Bonferroni-Dunn test (p-value $<$ 0.1) is labelled by $^{*}$.
}
    \begin{tabular}{l|ll|llllllll|l}
    \toprule
    Dataset & UNTIE & Couplings & CDE   & COS   & Ahmad  & DILCA & Rough & Hamming & BiGAN\_WD & VAE\_WD & $\Delta$ \\
    \midrule
    Zoo  &  \textbf{76.12}  &  74.85  &  75.04  &  72.10  &  71.34  &  71.34  &  62.79  &  73.27  &  56.93  &  24.41  &  1.44$\%$ \\
DNAPromoter  &  \textbf{95.28}  &  92.45  &  61.61  &  49.24  &  49.92  &  85.85  &  63.20  &  52.68  &  51.99  &  50.87  &  10.98$\%$ \\
Hayesroth  &  \textbf{54.17}  &  \textbf{54.17}  &  52.85  &  38.98  &  33.76  &  32.87  &  38.92  &  33.06  &  44.91  &  37.14  &  2.50$\%$ \\
Hepatitis  &  70.40  &  \textbf{73.64}  &  69.82  &  46.29  &  66.72  &  65.13  &  59.21  &  59.21  &  61.08  &  51.24  &  0.00$\%$ \\
Audiology  &  34.99  &  34.48  &  32.18  &  27.71  &  \textbf{35.38}  &  31.77  &  22.36  &  29.05  &  20.00  &  19.97  &  0.00$\%$ \\
Housevotes  &  \textbf{90.51}  &  88.36  &  89.65  &  88.36  &  88.36  &  88.79  &  87.04  &  86.64  &  83.64  &  53.84  &  0.96$\%$ \\
Spect  &  55.04  &  55.04  &  52.55  &  36.26  &  34.93  &  34.76  &  \textbf{57.63}  &  35.94  &  34.71  &  48.38  &  0.00$\%$ \\
Mofn3710  &  56.65  &  44.69  &  56.65  &  50.18  &  50.22  &  48.68  &  50.62  &  50.98  &  \textbf{60.34}  &  49.00  &  0.00$\%$ \\
Soybeanlarge  &  \textbf{69.29}  &  64.88  &  62.19  &  60.10  &  56.84  &  59.42  &  46.41  &  55.31  &  48.38  &  14.83  &  11.42$\%$ \\
Primarytumor  &  24.62  &  24.87  &  23.43  &  19.81  &  23.65  &  21.76  &  22.38  &  \textbf{26.19}  &  22.17  &  14.68  &  0.00$\%$ \\
Dermatology  &  \textbf{97.51}  &  72.78  &  73.10  &  74.58  &  72.87  &  72.61  &  57.99  &  66.60  &  38.54  &  23.82  &  30.75$\%$ \\
ThreeOf9  &  34.86  &  34.86  &  54.63  &  35.32  &  35.32  &  35.32  &  \textbf{65.19}  &  54.22  &  50.03  &  54.64  &  0.00$\%$ \\
Wisconsin  &  93.91  &  95.58  &  \textbf{96.20}  &  94.28  &  95.12  &  95.49  &  94.44  &  89.98  &  74.26  &  81.45  &  0.00$\%$ \\
Crx  &  \textbf{85.49}  &  52.65  &  52.65  &  36.99  &  52.65  &  79.29  &  63.47  &  79.29  &  51.81  &  51.69  &  7.82$\%$ \\
Breastcancer  &  93.27  &  94.75  &  95.20  &  93.56  &  94.89  &  \textbf{95.25}  &  94.37  &  93.27  &  65.94  &  79.15  &  0.00$\%$ \\
Mammographic  &  82.77  &  \textbf{82.89}  &  81.66  &  80.06  &  81.66  &  82.65  &  80.67  &  81.50  &  60.48  &  70.59  &  0.00$\%$ \\
Tictactoe  &  54.80  &  \textbf{62.61}  &  54.80  &  51.88  &  50.87  &  52.97  &  50.19  &  53.59  &  54.38  &  50.24  &  0.00$\%$ \\
Flare  &  37.08  &  31.20  &  32.44  &  35.79  &  34.20  &  35.59  &  38.85  &  \textbf{39.22}  &  31.98  &  22.30  &  0.00$\%$ \\
Titanic  &  33.72  &  29.77  &  33.72  &  29.77  &  33.72  &  33.72  &  \textbf{36.27}  &  33.72  &  31.58  &  28.61  &  0.00$\%$ \\
DNAnominal  &  \textbf{89.79}  &  67.70  &  51.14  &  41.91  &  46.68  &  59.18  &  43.28  &  41.44  &  35.18  &  32.21  &  51.72$\%$ \\
Splice  &  79.73  &  42.29  &  \textbf{87.12}  &  31.31  &  47.34  &  45.87  &  42.79  &  42.48  &  26.60  &  32.55  &  0.00$\%$ \\
Krvskp  &  51.09  &  51.09  &  51.03  &  46.72  &  \textbf{55.17}  &  \textbf{55.17}  &  53.73  &  53.86  &  42.94  &  50.36  &  0.00$\%$ \\
Led24  &  \textbf{69.50}  &  45.82  &  48.03  &  53.91  &  51.83  &  61.08  &  32.65  &  28.82  &  18.38  &  13.12  &  13.79$\%$ \\
Mushroom  &  82.69  &  82.76  &  82.83  &  \textbf{82.91}  &  82.86  &  82.39  &  78.18  &  82.29  &  71.48  &  60.78  &  0.00$\%$ \\
Connect4  &  \textbf{33.20}  &  31.14  &  31.91  &  27.23  &  32.88  &  33.14  &  30.34  &  31.43  &  30.53  &  29.18  &  0.18 $\%$  \\
\midrule
Averaged Rank$^{*}$ & \textbf{2.82} & 4.34 & 3.62 & 6.62 & 4.9 & 4.78 & 5.7 & 5.66 & 7.8 & 8.76 & 0.8 \\
    \bottomrule
    \end{tabular}%
  \label{tab:clustering}%
\end{table*}%

To statistically compare UNTIE's performance with the above categorical representation methods, we calculate their averaged ranks by the Friedman test and Bonferroni-Dunn test \cite{demvsar2006statistical}. The $\chi_F^2$ of Friedman test is 83.21 associated with $p$-value $3.71e^{-14}$. This result indicates that the performance of all the compared methods is not equal. Further, the Bonferroni-Dunn test evaluates the critical difference (CD) between UNTIE and other methods, and shows the CD at $p$-value $< 0.1$ is 2.17. As shown in Table \ref{tab:clustering}, UNTIE achieves an overall averaged rank 2.82, which is better than other measures. For example, it is 0.8 better than that of the best state-of-the-art method CDE (3.62), and 5.94 better than VAE\_WD (8.76). Regarding the CD, the UNTIE's performance is significantly better than most of state-of-the-art methods except CDE, DILCA, and Adam. 
Although UNTIE and CDE do not show significant difference under the Bonferroni-Dunn test at $p$-value $< 0.1$, UNTIE captures the heterogeneity in couplings which cannot be learned by CDE. Therefore, the performance of UNTIE is better than CDE in most cases, especially on data sets with complex structures and heterogeneous distributions. For example, on DNAnominal, UNTIE achieves $89.79\%$ while CDE only achieves $51.14\%$ in terms of F-score.
All the comparison results are shown in Fig. \ref{fig:CDDiagram}, which reveals UNTIE is significantly ($p < 0.1$) better than almost all the compared categorical representation methods.

\begin{figure}[!hbtp] \centering
\includegraphics[width=0.75\columnwidth]{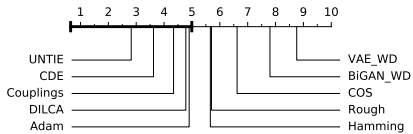}
\caption{Comparison of UNTIE vs. the Other Representation Methods per the Bonferroni-Dunn Test. All methods with ranks outside the marked interval are significantly different (p $<$ 0.1) from UNTIE.}
\label{fig:CDDiagram}
\end{figure}

The results also show that UNTIE and Couplings achieve the overall performance of 2.82 and 4.34 averaged rank, respectively, in comparison with 7.8 and 8.76 for BiGAN\_WD and VAE\_WD which rank the worst. This shows that heterogeneity learning contributes to an additional 1.52 averaged rank over the representation of Couplings. UNTIE does not consistently beat Couplings over all data sets, showing that not all data sets involve strong heterogeneity. For example, on Hepatitis, Mammographic and Tictactoe, the couplings-enabled representations show better results, while both UNTIE and Couplings do not make significant improvement over other methods, which also demonstrates that the clustering labels in these data sets are not sensitive to the captured couplings and heterogeneity. This may indicate other unknown complexities in these data sets that could be further explored.

\subsubsection{UNTIE-enabled Retrieval}

\begin{figure*}[!hbtp] \centering
\subfigure[Precision on Dermatology.] { \label{fig:apk}
\includegraphics[width=0.6\columnwidth]{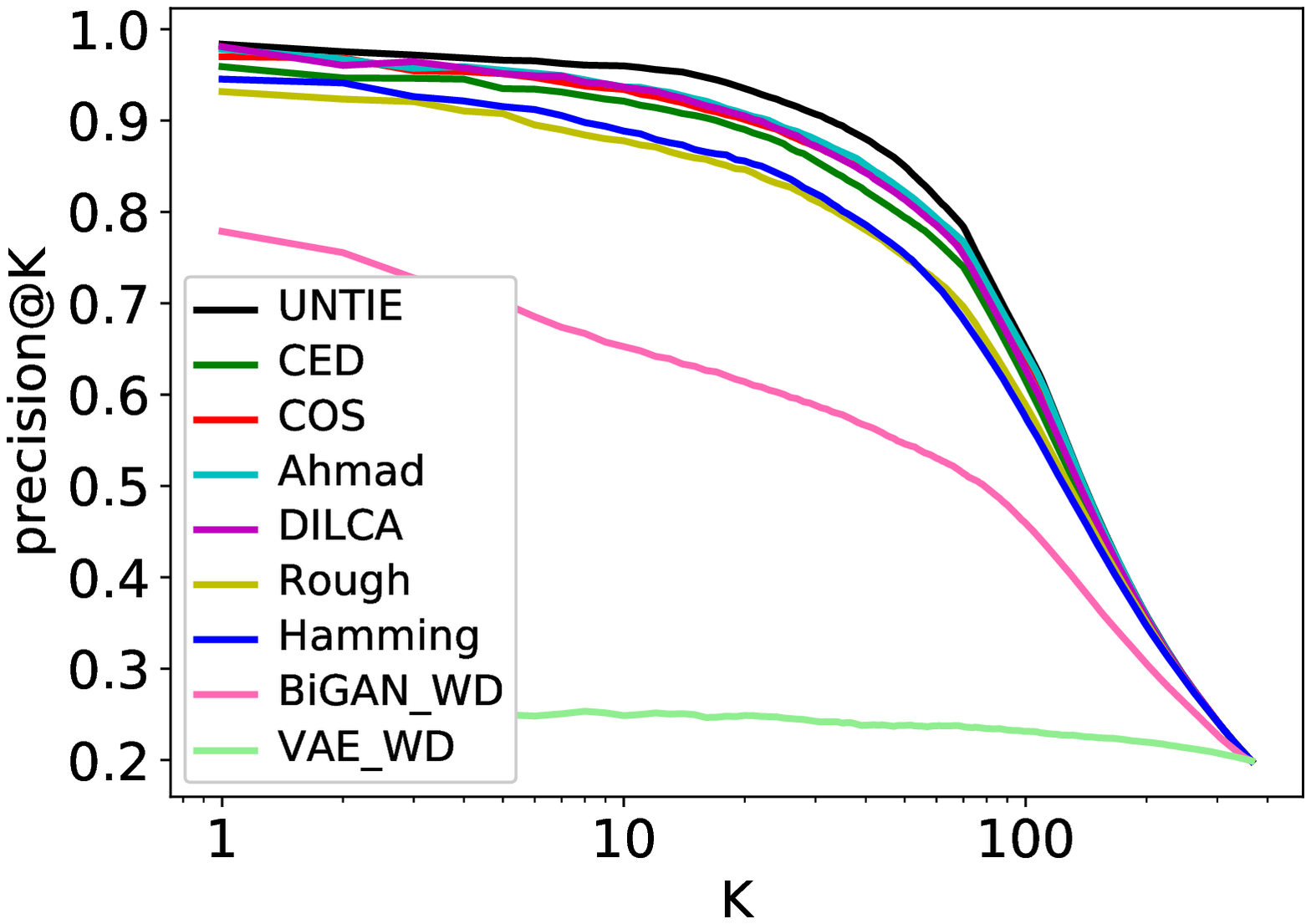}
}
\subfigure[Precision on DNAnominal.] { \label{fig:bpk}
\includegraphics[width=0.6\columnwidth]{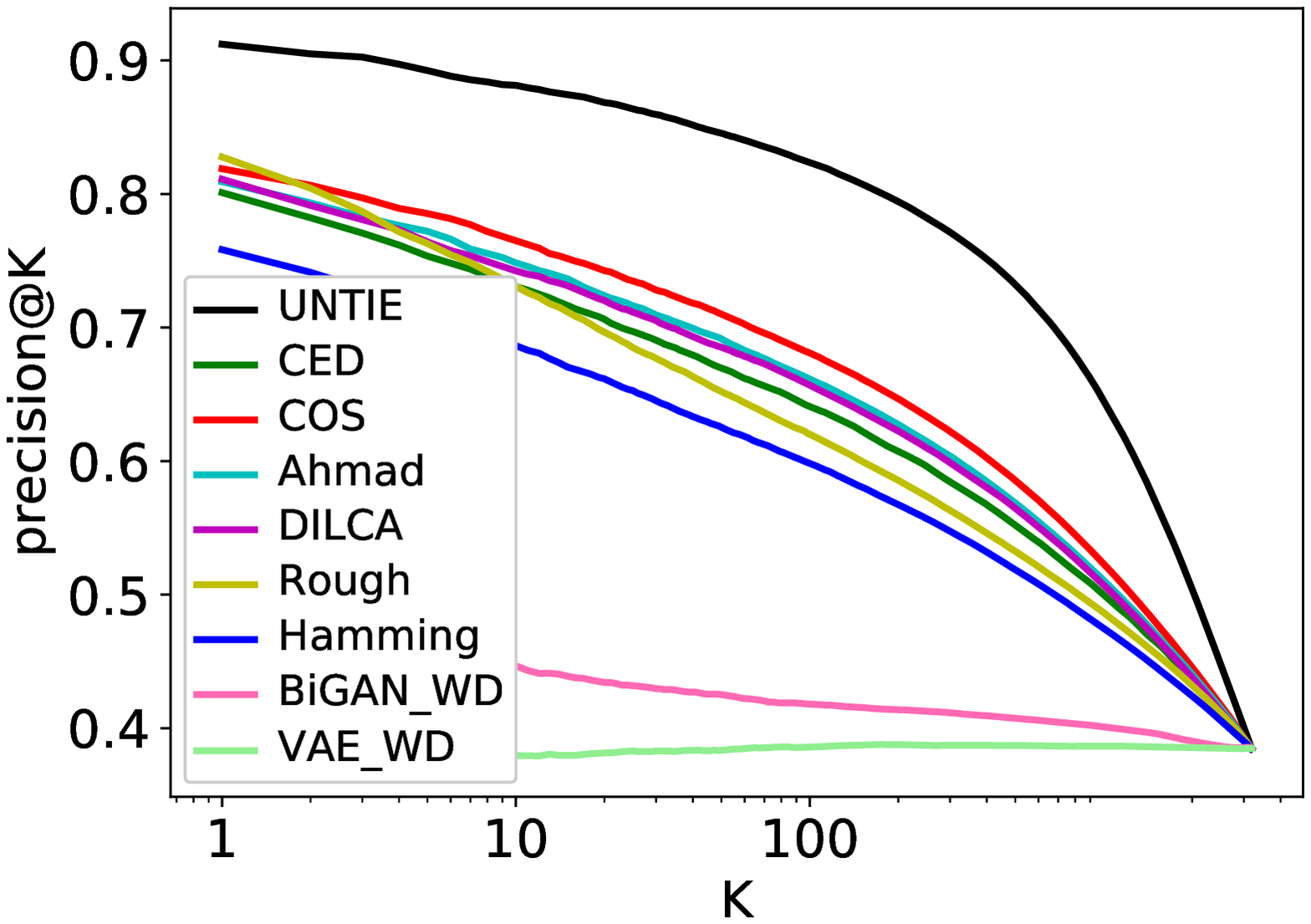}
}
\subfigure[Precision on Splice.] { \label{fig:cpk}
\includegraphics[width=0.6\columnwidth]{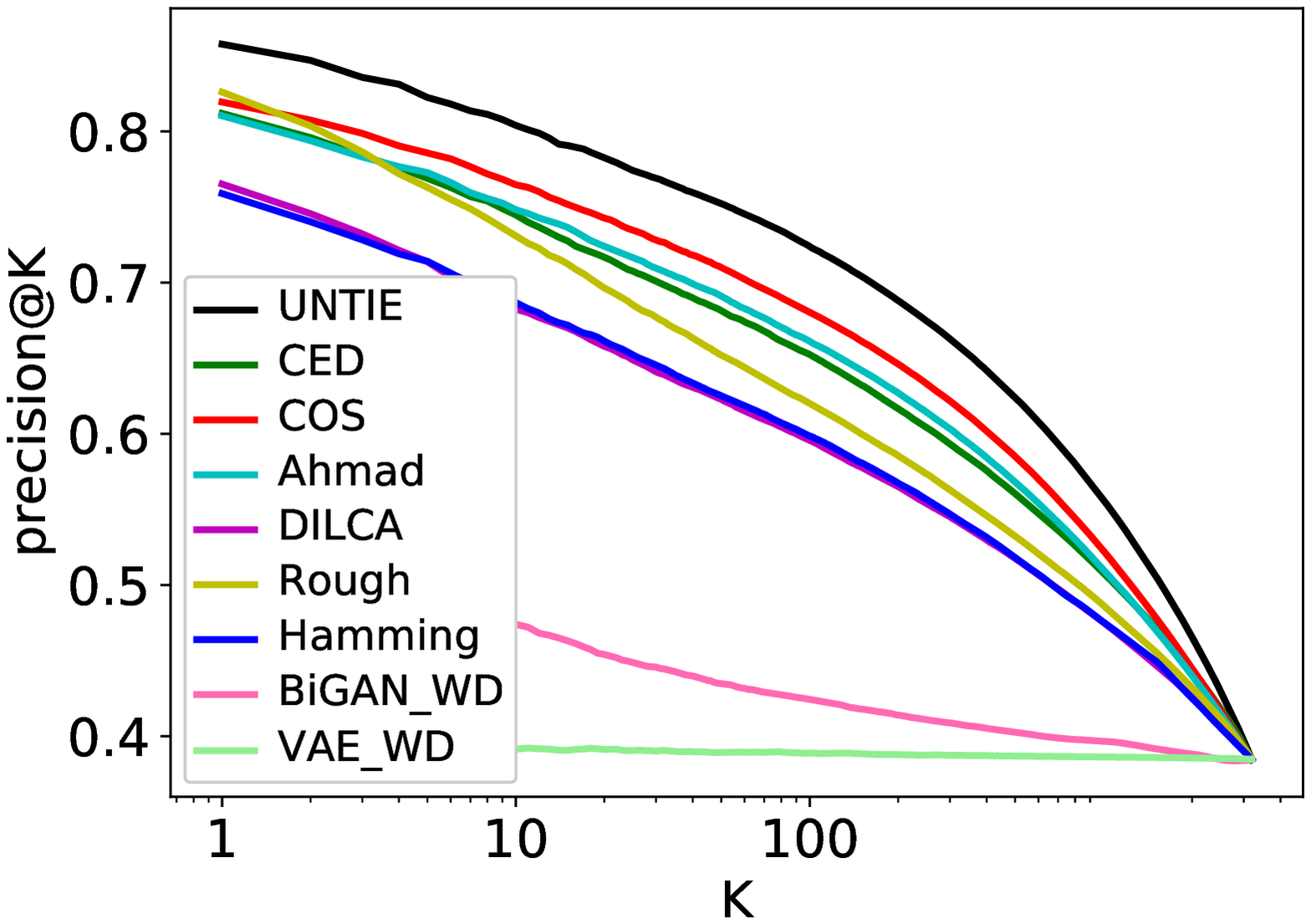}
}
\caption{The Precision@k of Different Categorical Data Representation Methods: A better metric yields a higher value.}
\label{fig:retrieval}
\end{figure*}

We further test the UNTIE representation performance of object retrieval, which also heavily depends on data representation. Every object is used as a query, and its \textit{k}-closest objects are retrieved per a distance measure. The precision@\textit{k}, i.e., the fraction of the retrieved \textit{k} objects that are the same-class neighbors, is reported. We use the Euclidean distance for UNTIE-, CDE-, BiGAN\_WD- and VAE\_WD-represented data to compare with the distance measured by COS, DILCA, Ahmda, Rough and Hamming for retrieval. Three data sets Dermatology, DNAnominal and Splice are tested to evaluate the UNTIE-enabled retrieval performance.

Different from the clustering results, the precision@\textit{k} of retrieval can demonstrate the quality of learned representation from local (when \textit{k} is small) to global (when \textit{k} is large). The results are shown in Fig. \ref{fig:retrieval}, in which
the precision of UNTIE-enabled retrieval consistently outperforms the others. It reflects that UNTIE can capture more details of data distributions than other representation methods, which is powered by learning hierarchical value-to-attribute couplings  and heterogeneities.

\subsection{Evaluating the UNTIE Efficiency}

The efficiency of UNTIE is affected by the number of iterations to achieve convergence in Algorithm \ref{algorithm} and different data factors. In this section, we first empirically evaluate the convergence speed of UNTIE, and then evaluate the computational cost of UNTIE under different data factors. 

\subsubsection{The UNTIE Convergence}

Due to space limitation, we randomly select six real data sets to demonstrate the convergence of UNTIE. The optimization method for UNTIE is Adam with the same setting as in Section \ref{sec:parameters}. The training loss of objective function Eq. \eqref{eq:omega} on these data sets is shown in Fig. \ref{fig:convergence}, the loss value converges rapidly at around 1,000 iterations. Since the batch size in each iteration is only $20$, the time cost of 1,000 iterations is very low. 

\subsubsection{The UNTIE Computational Cost w.r.t. Data Factors}

We further generate synthetic data to evaluate the computational cost of UNTIE in terms of the following data factors \cite{dst_Cao15}: the number of objects $n_o$, the number of attributes $n_a$, and the maximum number of values in each attribute $n_{mv}$. The default settings of these factors are as follows: $n_o$ is $1,000$, $n_a$ is $10$, and $n_{mv}$ is $3$. We generate three groups of data and tune one of these factors for each group. For the first group of data, the number of objects is adjusted from $1,000$ to $100,000$. For the second group of data, the number of attributes is tuned from $10$ to $100$. For the third group of data, the maximum number of values in attributes is changed from $10$ to $100$. The time cost of UNTIE under each data factor is shown in Fig. \ref{fig:timeCost}. 

Fig. \ref{fig:object_time} shows that the time cost of UNTIE is almost stable (from $1.8$(s) to $3.6$(s)), which demonstrates it has good scalability w.r.t. the amount of data $n_o$. Our analysis shows that the minor time cost increase is caused by the Python built-in functions when identifying categorical value location in the data. Since this cost increases with an extremely small proportion w.r.t. the amount of data, we can ignore it when applying UNTIE. Fig. \ref{fig:attribute_time} and Fig. \ref{fig:value_time} demonstrate the time cost has approximately linear relation with both $n_a$ and $n_{mv}$, which is consistent with the time complexity of UNTIE analyzed in Section \ref{sec:complexity}. These results also show that the main cost of UNTIE is on the heterogeneity learning (HL), which has the linear relation with both $n_a$ and $n_{mv}$. Meanwhile, the cost of building hierarchical coupling learning (HCL) has quadratic relation with $n_a$ and $n_{mv}$. The reason is that UNTIE calculates the pairwise value relations when learning inter-attribute couplings. However, it only slightly affects the cost of UNTIE when $n_a$ and $n_{mv}$ are small. For categorical data with high dimensionality, a trade-off between sufficiently capturing couplings and preserving efficiency is required.

The computational costs of UNTIE and state-of-the-art methods are at the same level in terms of data factor $n_o$. It indicates all of these methods can handle large amount of data. We can see that UNTIE has a higher computational cost compared to other methods in terms of $n_a$. As shown in \ref{fig:attribute_time}, the higher cost is brought by heterogeneity learning, which has linear relation with $n_a$. For the hierarchical coupling learning, the cost of UNTIE is at the same level as the state-of-the-art methods. As for $n_{mv}$, UNTIE is much more efficient than CDE, which is the state-of-the-art method with the best representation performance as shown in the previous experiments.

To evaluate the relation between time cost and the number of kernel functions, we set the number of kernels used in UNTIE from 1 to 30 and test the computational cost of UNTIE on the synthetic data set with the default data factors. The UNTIE time cost with a different number of kernels is shown in Fig. \ref{fig:kernelTimeCost}. This shows that the UNTIE time cost is linear to the number of kernels with a very small slope. Increasing the number of kernels only slightly affects the computational time of UNTIE. This is consistent with our theoretical analysis, which indicates $n_{\bm{\omega}}$ is linear to the time complexity of UNTIE. Here, $n_{\bm{\omega}}$ has a linear relation with the number of kernels. 

\begin{table*}[!htbp]
	\centering
	\footnotesize
	\caption{KNN, SVM, RF and LR Classification F-score (\%) with UNTIE and CDE. The best results are highlighted in bold.}
	\begin{tabular}{l|ll|ll|ll|ll}
		\toprule
		Data set & UNTIE-SVM & CDE-SVM & UNTIE-KNN & CDE-KNN & UNTIE-RF & CDE-RF & UNTIE-LR & CDE-LR \\
		\midrule
		zoo   & \textbf{100} & 88.00$\pm$18.33 & \textbf{100} & \textbf{100} & \textbf{100} & \textbf{100} & \textbf{100} & \textbf{100} \\
		DNAPromoter & \textbf{94.42$\pm$6.81} & 91.37$\pm$7.41 & \textbf{87.19$\pm$10.79} & 76.06$\pm$10.62 & \textbf{89.32$\pm$8.81} & 87.58$\pm$11.37 & \textbf{94.41$\pm$6.03} & 90.35$\pm$8.20 \\
		hayesroth & \textbf{82.23$\pm$6.22} & 80.92$\pm$6.96 & 60.15$\pm$10.52 & \textbf{62.38$\pm$10.33} & \textbf{82.48$\pm$9.00} & 82.11$\pm$8.26 & \textbf{82.08$\pm$7.29} & \textbf{82.08$\pm$7.29} \\
		lymphography & \textbf{87.06$\pm$12.02} & 85.68$\pm$11.26 & \textbf{82.22$\pm$11.08} & 79.03$\pm$15.07 & 82.17$\pm$14.59 & \textbf{84.16$\pm$11.31} & \textbf{83.12$\pm$12.90} & 81.67$\pm$13.84 \\
		hepatitis & \textbf{48.13$\pm$8.65} & 46.85$\pm$6.04 & 70.16$\pm$13.37 & \textbf{70.47$\pm$14.52} & \textbf{66.61$\pm$11.12} & 64.30$\pm$12.25 & \textbf{70.39$\pm$16.85} & \textbf{70.39$\pm$16.85} \\
		audiology & \textbf{73.41$\pm$7.29} & 73.25$\pm$5.90 & \textbf{48.19$\pm$9.47} & 47.79$\pm$8.20 & \textbf{63.80$\pm$10.24} & 59.78$\pm$11.83 & 47.70$\pm$7.11 & \textbf{66.29$\pm$11.26} \\
		housevotes & 96.71$\pm$3.88 & \textbf{96.92$\pm$3.69} & \textbf{95.58$\pm$3.69} & 92.54$\pm$4.18 & \textbf{96.02$\pm$4.82} & 95.13$\pm$4.61 & 93.74$\pm$5.00 & \textbf{93.98$\pm$5.15} \\
		spect & 67.60 $\pm$ 11.92 & \textbf{68.46$\pm$10.88} & \textbf{55.41$\pm$9.95} & 50.54$\pm$8.56 & 66.61$\pm$10.93 & \textbf{67.13$\pm$12.08} & \textbf{69.48$\pm$12.52} & \textbf{69.48$\pm$12.52} \\
		mofn3710 & \textbf{100} & \textbf{100} & \textbf{87.18$\pm$6.29} & 86.04$\pm$7.77 & 76.64$\pm$10.51 & \textbf{78.31$\pm$12.16} & \textbf{100} & \textbf{100} \\
		soybeanlarge & 90.94$\pm$3.83 & \textbf{93.61$\pm$4.34} & 93.70$\pm$4.26 & \textbf{96.03$\pm$3.85} & \textbf{92.88$\pm$5.81} & 92.69$\pm$6.12 & \textbf{89.57 $\pm$5.98} & 88.57$\pm$6.97 \\
		\midrule
		Averaged Rank & \textbf{1.35} & 1.65 & \textbf{1.35} & 1.65 & \textbf{1.35} & 1.65 & 1.45 & \textbf{1.55} \\
		\bottomrule
	\end{tabular}%
	\label{tab:flexity}%
\end{table*}%

\subsection{Evaluating the UNTIE Flexibility}

We further demonstrate the UNTIE flexibility when fed into the classifiers KNN, SVM, RF and LR. We randomly select 90\% of objects in each data set for training and the remainder for testing. To reduce the impact of noise and randomness, 20 sampling iterations generate 20 sets of training and test data for the experiments. The averaged classification performance and standard deviation are reported w.r.t. F-score ($\%$). The vector representations learned by UNTIE and CDE are used as the input of these classifiers. The results comparing with CDE-enabled classifiers are shown in Table \ref{tab:flexity} and illustrate that UNTIE representations can fit different classifiers and enhance their performance on categorical data, as compared with the results of CDE-enabled classifiers.

\subsection{Evaluating the UNTIE Stability}

To evaluate the stability of UNTIE per the kernel functions in heterogeneity learning, we adopt three groups of kernel functions where each has a varying number of functions. The first group only contains Gaussian kernels, the second group only contains Polynomial kernels, and the third group mixes Gaussian and Polynomial kernels. The kernel functions in each set are shown in Table \ref{tab:stability}. The clustering F-score enabled by UNTIE w.r.t. different kernel function sets on two data sets DNAPromoter and Monfn3710 are illustrated in Fig.\ref{fig:stability}.

\renewcommand\arraystretch{1.5}
\begin{table}[!htbp]
	\centering
	\footnotesize
	\caption{Three Groups of Kernel Functions for Testing the UNTIE Stability}\label{tab:stability}
	\begin{tabular}{l|l|l}
		\toprule
		Group & Set & Kernel Functions\\
		\midrule
		\multirow{2}{*}{Group 1} & $F_1$ & Gaussian kernels with width $\{2^{-3}, 2^{-2}, \cdots, 2^{3}\}$\\
		\cline{2-3}
        & $F_2$ & Gaussian kernels with width $\{2^{-5}, 2^{-4}, \cdots, 2^{5}\}$\\
        \midrule
        \multirow{2}{*}{Group 2} & $F_3$ & Polynomial Kernels with order $\{1,2\}$\\
        \cline{2-3}
        & $F_4$ & Polynomial Kernels with order $\{1,2,3\}$\\
        \midrule
        \multirow{4}{*}{Group 3} & \multirow{2}{*}{$F_5$} & Gaussian kernels with width $\{2^{-3}, 2^{-2}, \cdots, 2^{3}\}$\\
        & & Polynomial Kernels with order $\{1,2\}$\\
        \cline{2-3}
        & \multirow{2}{*}{$F_6$} & Gaussian kernels with width $\{2^{-5}, 2^{-4}, \cdots, 2^{5}\}$\\
        & & Polynomial Kernels with order $\{1,2,3\}$\\
		\bottomrule
	\end{tabular}
\end{table}

\begin{figure}[!hbtp] \centering
\subfigure[F-Score ($\%$) on DNAPromoter.] {
\includegraphics[width=0.45\columnwidth]{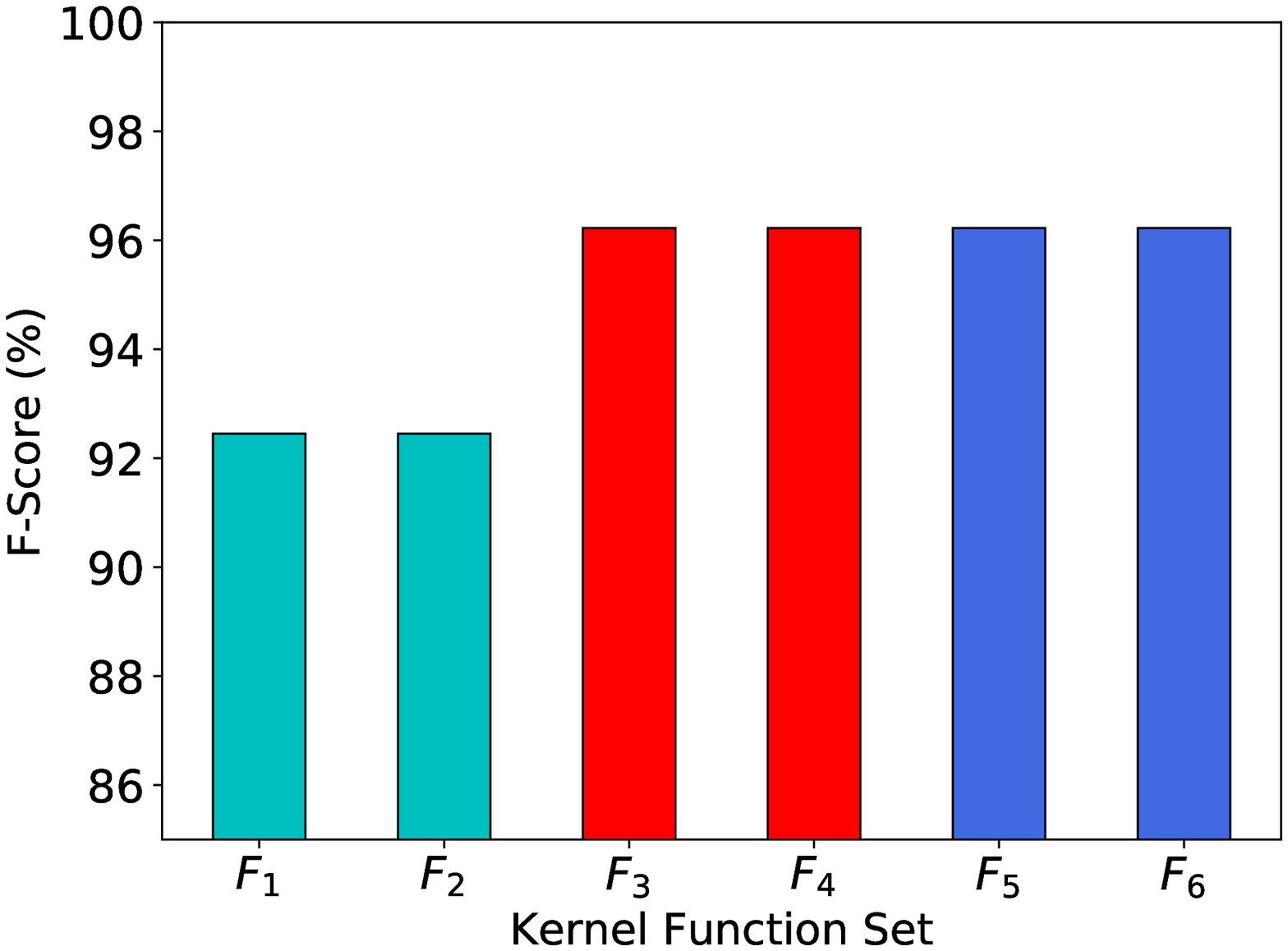}
}
\subfigure[F-Score ($\%$) on Mofn3710.] { 
\includegraphics[width=0.45\columnwidth]{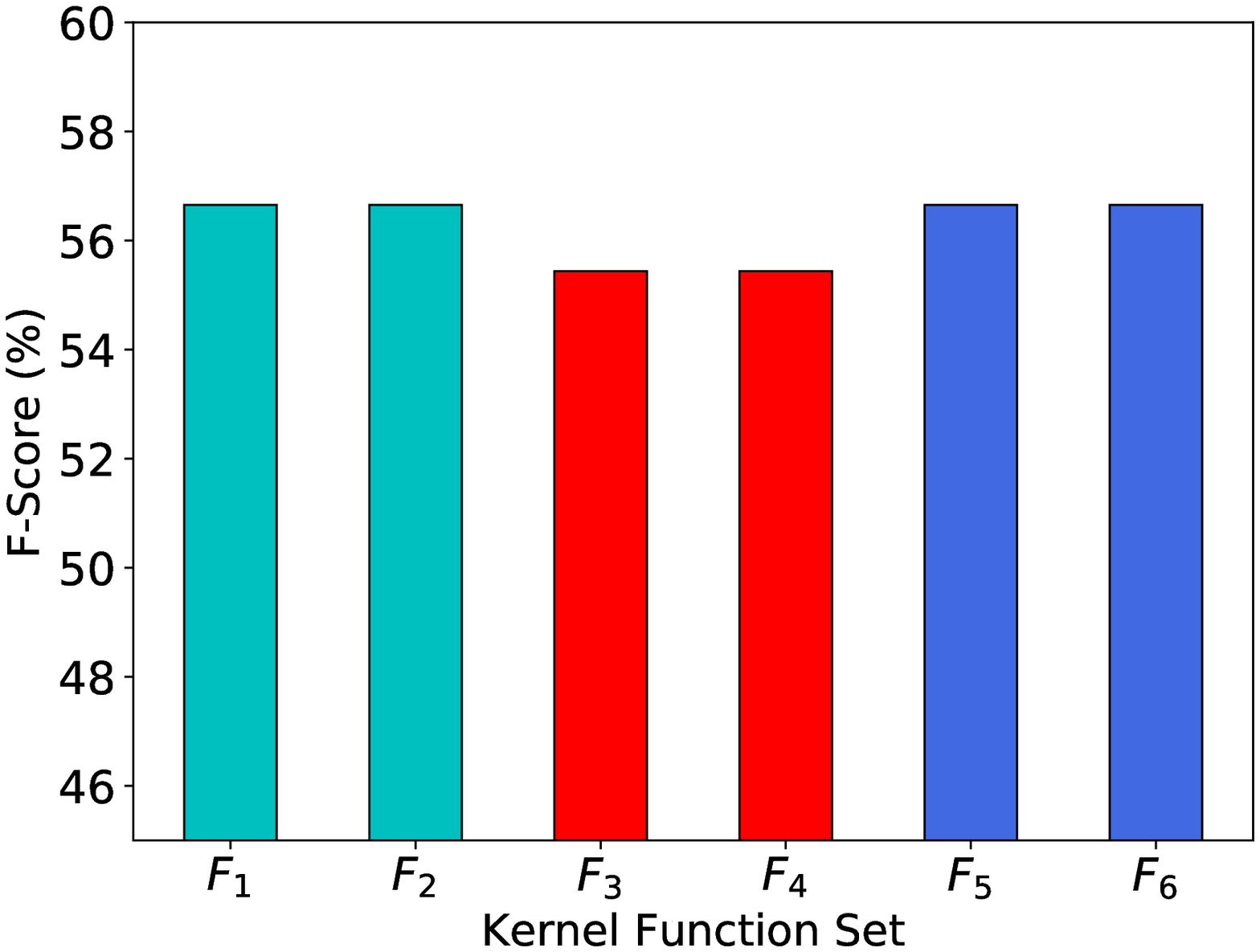}
}
\caption{The Clustering F-Score ($\%$) with UNTIE w.r.t. Different Kernel Function Sets: The same color indicates the same kernel function group.}
\label{fig:stability}
\end{figure}

UNTIE is stable in terms of the number of kernel functions. As shown in Fig. \ref{fig:stability}, UNTIE achieves the same clustering F-score w.r.t. kernel function sets in the same group, where kernel functions are of the same type but different numbers. Although different kernel functions may generate different effects of UNTIE on different data sets (e.g., the Gaussian kernel family in group 1 enables better performance on DNAPromoter, and the Polynomial kernel family in group 2 enables better performance on Mofn3710), UNTIE can comprehensively learn information from multiple kernels while eliminates their redundancy and inconsistency. Accordingly, it always enables the best clustering performance with the kernel function sets $F_5$ and $F_6$ in group 3, which involves kernel functions in both groups 1 and 2.

\section{Conclusions}\label{sec:conclusion}
Categorical data representation is critical yet challenging as complicated coupling relationships and heterogeneities are often embedded in complex categorical values, attributes and objects. Existing work including deep learning is troubled by unsupervised categorical representation learning. This paper introduces a heterogeneous coupling learning method UNTIE for unsupervised categorical data representation. By modeling value-to-object hierarchical couplings and their complementary and inconsistent influence on representations, UNTIE reveals the nonlinear relations between couplings and discloses the heterogeneous distributions within couplings. Both theoretical and empirical analyses show the effectiveness and efficiency of UNTIE. 

An important lesson learned in UNTIE is to select appropriate kernels w.r.t. specific data characteristics and domain knowledge of the underlying problems. 
This work shows the need and potential of shallow learners in handling complex data characteristics in particular couplings, heterogeneities, and inconsistency. The poor results on some data in Table \ref{tab:clustering} also show the challenge and open issues on categorical representation of complex data characteristics even in small data. In addition, modeling more complex couplings, such as very high-dimensional couplings, may represent more complicated relations and interactions embedded in high-dimensional data.

\section{Acknowledgment}
This work is partially sponsored by the Australian Research Council grants DP190101079 and FT190100734.

\bibliographystyle{IEEEtran}

\bibliography{IEEEabrv,IEEEexample}

\begin{thebibliography}{10}
\providecommand{\url}[1]{#1}
\csname url@samestyle\endcsname
\providecommand{\newblock}{\relax}
\providecommand{\bibinfo}[2]{#2}
\providecommand{\BIBentrySTDinterwordspacing}{\spaceskip=0pt\relax}
\providecommand{\BIBentryALTinterwordstretchfactor}{4}
\providecommand{\BIBentryALTinterwordspacing}{\spaceskip=\fontdimen2\font plus
\BIBentryALTinterwordstretchfactor\fontdimen3\font minus
  \fontdimen4\font\relax}
\providecommand{\BIBforeignlanguage}[2]{{%
\expandafter\ifx\csname l@#1\endcsname\relax
\typeout{** WARNING: IEEEtran.bst: No hyphenation pattern has been}%
\typeout{** loaded for the language `#1'. Using the pattern for}%
\typeout{** the default language instead.}%
\else
\language=\csname l@#1\endcsname
\fi
#2}}
\providecommand{\BIBdecl}{\relax}
\BIBdecl

\bibitem{bengio2013representation}
Y.~Bengio, A.~Courville, and P.~Vincent, ``Representation learning: A review
  and new perspectives,'' \emph{IEEE transactions on pattern analysis and
  machine intelligence}, vol.~35, no.~8, pp. 1798--1828, 2013.

\bibitem{boriah2008similarity}
S.~Boriah, V.~Chandola, and V.~Kumar, ``Similarity measures for categorical
  data: A comparative evaluation,'' in \emph{SDM'08}, 2008, pp. 243--254.

\bibitem{bai2013impact}
L.~Bai, J.~Liang, C.~Dang, and F.~Cao, ``The impact of cluster representatives
  on the convergence of the k-modes type clustering,'' \emph{IEEE Transactions
  on Pattern Analysis and Machine Intelligence}, vol.~35, no.~6, pp.
  1509--1522, 2013.

\bibitem{wang2015coupled}
C.~Wang, X.~Dong, F.~Zhou, L.~Cao, and C.-H. Chi, ``Coupled attribute
  similarity learning on categorical data,'' \emph{IEEE Transactions on Neural
  Networks and Learning Systems}, vol.~26, no.~4, pp. 781--797, 2015.

\bibitem{seth2016archetypal}
S.~Seth and M.~J. Eugster, ``Archetypal analysis for nominal observations,''
  \emph{IEEE Transactions on Pattern Analysis and Machine Intelligence},
  vol.~38, no.~5, pp. 849--861, 2016.

\bibitem{tutz2016regularized}
G.~Tutz and J.~Gertheiss, ``Regularized regression for categorical data,''
  \emph{Statistical Modelling}, vol.~16, no.~3, pp. 161--200, 2016.

\bibitem{qian2016space}
Y.~Qian, F.~Li, J.~Liang, B.~Liu, and C.~Dang, ``Space structure and clustering
  of categorical data,'' \emph{IEEE transactions on neural networks and
  learning systems}, vol.~27, no.~10, pp. 2047--2059, 2016.

\bibitem{zhu2018heterogeneous}
C.~Zhu, L.~Cao, Q.~Liu, J.~Yin, and V.~Kumar, ``Heterogeneous metric learning
  of categorical data with hierarchical couplings,'' \emph{IEEE Transactions on
  Knowledge and Data Engineering}, 2018.

\bibitem{pang2016outlier}
G.~Pang, L.~Cao, and L.~Chen, ``Outlier detection in complex categorical data
  by modeling the feature value couplings.'' in \emph{IJCAI'16}, 2016, pp.
  1902--1908.

\bibitem{cao2015coupling}
L.~Cao, ``Coupling learning of complex interactions,'' \emph{Information
  Processing \& Management}, vol.~51, no.~2, pp. 167--186, 2015.

\bibitem{Bremaud2017}
P.~Br{\'e}maud, \emph{The Coupling Method}.\hskip 1em plus 0.5em minus
  0.4em\relax Springer International Publishing, 2017, pp. 397--415.

\bibitem{dst_Cao15}
L.~Cao, \emph{Data Science Thinking: The Next Scientific, Technological and
  Economic Revolution}.\hskip 1em plus 0.5em minus 0.4em\relax Springer
  International Publishing, 2018.

\bibitem{ahmad2007method}
A.~Ahmad and L.~Dey, ``A method to compute distance between two categorical
  values of same attribute in unsupervised learning for categorical data set,''
  \emph{Pattern Recognition Letters}, vol.~28, no.~1, pp. 110--118, 2007.

\bibitem{ienco2012context}
D.~Ienco, R.~G. Pensa, and R.~Meo, ``From context to distance: Learning
  dissimilarity for categorical data clustering,'' \emph{ACM Transactions on
  Knowledge Discovery from Data}, vol.~6, no.~1, pp. 1--25, 2012.

\bibitem{jia2015distance}
H.~Jia, Y.-m. Cheung, and J.~Liu, ``A new distance metric for unsupervised
  learning of categorical data,'' \emph{IEEE Transactions on Neural Networks
  and Learning Systems}, vol.~27, no.~5, pp. 1065--1079, 2016.

\bibitem{zhang2015categorical}
K.~Zhang, Q.~Wang, Z.~Chen, I.~Marsic, V.~Kumar, G.~Jiang, and J.~Zhang, ``From
  categorical to numerical: Multiple transitive distance learning and
  embedding,'' in \emph{SDM'15}, 2015, pp. 46--54.

\bibitem{jianembedding}
S.~Jian, L.~Cao, G.~Pang, K.~Lu, and H.~Gao, ``Embedding-based representation
  of categorical data by hierarchical value coupling learning,'' in
  \emph{IJCAI'17}, 2017, pp. 1937--1943.

\bibitem{nips_DoC18}
T.~D.~T. Do and L.~Cao, ``Gamma-poisson dynamic matrix factorization embedded
  with metadata influence,'' in \emph{NeurIPS'18}, 2018, pp. 5829--5840.

\bibitem{ng2007impact}
M.~K. Ng, M.~J. Li, J.~Z. Huang, and Z.~He, ``On the impact of dissimilarity
  measure in k-modes clustering algorithm,'' \emph{IEEE Transactions on Pattern
  Analysis and Machine Intelligence}, vol.~29, no.~3, pp. 503--507, 2007.

\bibitem{cao2014non}
L.~Cao, ``Non-iidness learning in behavioral and social data,'' \emph{The
  Computer Journal}, vol.~57, no.~9, pp. 1358--1370, 2014.

\bibitem{ijcai_PangCCL17}
G.~Pang, L.~Cao, L.~Chen, and H.~Liu, ``Learning homophily couplings from
  non-iid data for joint feature selection and noise-resilient outlier
  detection,'' in \emph{IJCAI'17}, 2017, pp. 2585--2591.

\bibitem{ralaivola2010chromatic}
L.~Ralaivola, M.~Szafranski, and G.~Stempfel, ``Chromatic pac-bayes bounds for
  non-iid data: Applications to ranking and stationary $\beta$-mixing
  processes,'' \emph{Journal of Machine Learning Research}, vol.~11, no. Jul,
  pp. 1927--1956, 2010.

\bibitem{VincentLLBM10}
P.~Vincent, H.~Larochelle, I.~Lajoie, Y.~Bengio, and P.~Manzagol, ``Stacked
  denoising autoencoders: Learning useful representations in a deep network
  with a local denoising criterion,'' \emph{J. Mach. Learn. Res.}, vol.~11, pp.
  3371--3408, 2010.

\bibitem{donahue2016adversarial}
J.~Donahue, P.~Kr{\"a}henb{\"u}hl, and T.~Darrell, ``Adversarial feature
  learning,'' \emph{arXiv:1605.09782}, 2016.

\bibitem{cheng2016wide}
H.~Cheng, L.~Koc, J.~Harmsen, T.~Shaked, T.~D. Chandra, H.~Aradhye,
  G.~Anderson, G.~S. Corrado, W.~Chai, M.~Ispir \emph{et~al.}, ``Wide \& deep
  learning for recommender systems,'' in \emph{1st workshop on deep learning
  for recommender systems}, 2016, pp. 7--10.

\bibitem{JianHCL18}
S.~Jian, L.~Hu, L.~Cao, and K.~Lu, ``Metric-based auto-instructor for learning
  mixed data representation,'' in \emph{AAAI'18}, 2018, pp. 3318--3325.

\bibitem{cao2012dissimilarity}
F.~Cao, J.~Liang, D.~Li, L.~Bai, and C.~Dang, ``A dissimilarity measure for the
  k-modes clustering algorithm,'' \emph{Knowledge-Based Systems}, vol.~26, pp.
  120--127, 2012.

\bibitem{le2005association}
S.~Q. Le and T.~B. Ho, ``An association-based dissimilarity measure for
  categorical data,'' \emph{Pattern Recognition Letters}, vol.~26, no.~16, pp.
  2549--2557, 2005.

\bibitem{wang2013coupled}
C.~Wang, Z.~She, and L.~Cao, ``Coupled attribute analysis on numerical data.''
  in \emph{IJCAI'13}, 2013, pp. 1736--1742.

\bibitem{kingma2013auto}
D.~P. Kingma and M.~Welling, ``Auto-encoding variational bayes,''
  \emph{arXiv:1312.6114}, 2013.

\bibitem{bucak2014multiple}
S.~S. Bucak, R.~Jin, and A.~K. Jain, ``Multiple kernel learning for visual
  object recognition: A review,'' \emph{IEEE Transactions on Pattern Analysis
  and Machine Intelligence}, vol.~36, no.~7, pp. 1354--1369, 2014.

\bibitem{wang2015coupledMixed}
C.~Wang, C.-H. Chi, W.~Zhou, and R.~Wong, ``Coupled interdependent attribute
  analysis on mixed data,'' in \emph{AAAI'15}, 2015, pp. 1861--1867.

\bibitem{wang2011coupled}
C.~Wang, L.~Cao, M.~Wang, J.~Li, W.~Wei, and Y.~Ou, ``Coupled nominal
  similarity in unsupervised learning,'' in \emph{CIKM'11}, 2011, pp. 973--978.

\bibitem{dhillon2004kernel}
I.~S. Dhillon, Y.~Guan, and B.~Kulis, ``Kernel k-means: spectral clustering and
  normalized cuts,'' in \emph{KDD'04}, 2004, pp. 551--556.

\bibitem{jegelka2009generalized}
S.~Jegelka, A.~Gretton, B.~Sch{\"o}lkopf, B.~K. Sriperumbudur, and
  U.~Von~Luxburg, ``Generalized clustering via kernel embeddings,'' in
  \emph{Annual Conference on Artificial Intelligence}, 2009, pp. 144--152.

\bibitem{duchi2011adaptive}
J.~Duchi, E.~Hazan, and Y.~Singer, ``Adaptive subgradient methods for online
  learning and stochastic optimization,'' \emph{Journal of Machine Learning
  Research}, vol.~12, no. Jul, pp. 2121--2159, 2011.

\bibitem{kingma2014adam}
D.~Kingma and J.~Ba, ``Adam: A method for stochastic optimization,''
  \emph{arXiv:1412.6980}, 2014.

\bibitem{dunson2009nonparametric}
D.~B. Dunson and C.~Xing, ``Nonparametric bayes modeling of multivariate
  categorical data,'' \emph{Journal of the American Statistical Association},
  vol. 104, no. 487, pp. 1042--1051, 2009.

\bibitem{steinwart2008support}
I.~Steinwart and A.~Christmann, \emph{Support Vector Machines}.\hskip 1em plus
  0.5em minus 0.4em\relax Springer Science \& Business Media, 2008.

\bibitem{stella2003multiclass}
X.~Y. Stella and J.~Shi, ``Multiclass spectral clustering,'' in \emph{ICCV'03},
  2003, pp. 313--319.

\bibitem{balcan2008theory}
M.-F. Balcan, A.~Blum, and N.~Srebro, ``A theory of learning with similarity
  functions,'' \emph{Machine Learning}, vol.~72, no. 1-2, pp. 89--112, 2008.

\bibitem{maaten2008visualizing}
L.~v.~d. Maaten and G.~Hinton, ``Visualizing data using t-sne,'' \emph{Journal
  of Machine Learning Research}, vol.~9, no. Nov, pp. 2579--2605, 2008.

\bibitem{demvsar2006statistical}
J.~Dem{\v{s}}ar, ``Statistical comparisons of classifiers over multiple data
  sets,'' \emph{Journal of Machine learning research}, vol.~7, pp. 1--30, 2006.

\end{thebibliography}
\vspace{-1cm}
\begin{IEEEbiography}[{\includegraphics[width=1in, height=1.25in, clip, keepaspectratio]{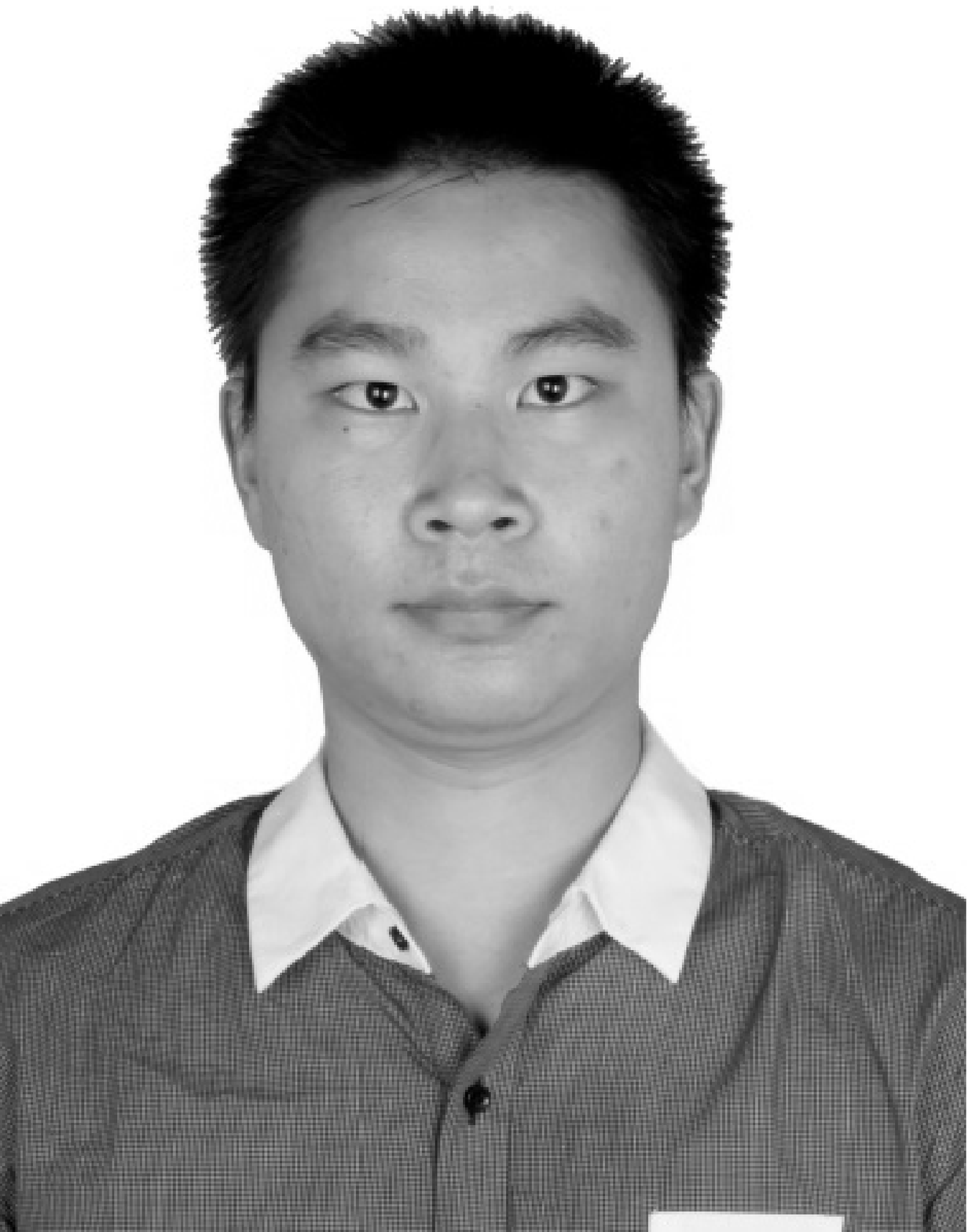}}]{Chengzhang Zhu}
was a PhD student in the Faculty of Engineering and IT, University of Technology Sydney, Australia when this work was done. His research interests include metric learning, non-IID learning, and data representation, in addition to general interest in broad data science especially machine learning.
\end{IEEEbiography}

\vspace{-1cm}
\begin{IEEEbiography}[{\includegraphics[width=1in, height=1.25in, clip, keepaspectratio]{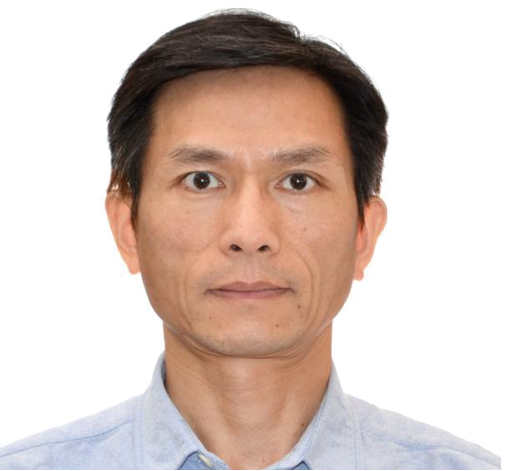}}]{Longbing Cao}
(SM’06) received a PhD in Pattern Recognition and Intelligent Systems from the Chinese Academy of Science and a PhD in Computing Sciences from the University of Technology Sydney. He is a professor and an ARC Future Fellow (Level 3). His current research interests include data science, artificial intelligence, behavior informatics, and their enterprise applications.
\end{IEEEbiography}
\vspace{-1cm}
\begin{IEEEbiography}[{\includegraphics[width=1in, height=1.25in, clip, keepaspectratio]{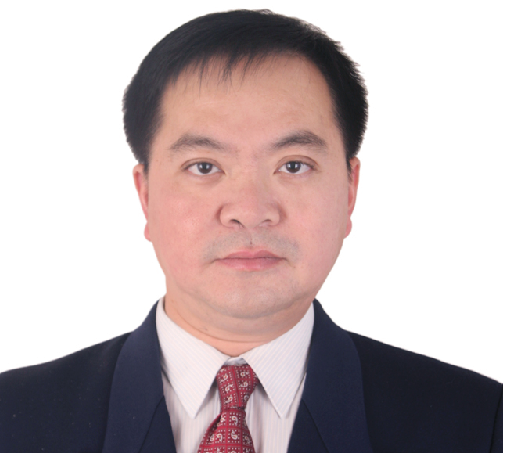}}]{Jianping Yin}
received a PhD in computer science and technology from the National University of Defense Technology in 1990. He is a distinguished professor at the Dongguan University of Technology. 
His research interests include theoretical computer science, artificial intelligence, pattern recognition, and networking algorithms.
\end{IEEEbiography}

\end{document}